 \newtheorem{thm}{Theorem}
 \newtheorem{prop}{Proposition}
 \newtheorem{lem}{Lemma}
 \newtheorem{rem}{Remark}
 \newtheorem{cor}{Corollary}
\newcommand{\intervalleff}[2]{\mathopen{[}#1\,,#2\mathclose{]}}
\newcommand{\intervalleof}[2]{\mathopen{(}#1\,,#2\mathclose{]}}
\newcommand{\RR}{\mathbb{R}}
\newcommand{\NN}{\mathbb{N}}
\newcommand{\EE}{\mathbb{E}}
\newcommand{\learningrate}{\eta}
\newcommand{\francois}[1]{{\color{violet}Francois: #1}}
\newcommand{\iprod}[2]{\left\langle #1,#2\right\rangle}
\newcommand{\DKL}[2]{D_{\mathrm{KL}}(#1||#2)}
\newcommand{\nparam}{P}
\newcommand{\argmax}[1]{\underset{#1}{\mathrm{argmax}\ }}
\newcommand{\argmin}[1]{\underset{#1}{\mathrm{argmin}\ }}
\author{Fran\c{c}ois G. Ged$^{1,2}$ and Maria Han Veiga$^{3}$}
\begin{document}

\title{Matryoshka Policy Gradient for Entropy-Regularized RL: Convergence and Global Optimality}


\maketitle
\noindent\small{
$^1$Chair of Statistical Field Theory, \'{E}cole Polytechnique F\'{e}d\'{e}rale de Lausanne, Switzerland\\
$^2$Dynamical Systems in Biomathematics, University of Vienna, Austria\\
$^3$Department of Mathematics, The Ohio State University, Columbus, USA}

\begin{abstract}
A novel Policy Gradient (PG) algorithm, called \textit{Matryoshka Policy Gradient} (MPG), is introduced and studied, in the context of fixed-horizon max-entropy reinforcement learning, where an agent aims at maximizing entropy bonuses additional to its cumulative rewards.
In the linear function approximation setting with softmax policies, we prove uniqueness and characterize the optimal policy of the entropy regularized objective, together with global convergence of MPG.
These results are proved in the case of continuous state and action space.
MPG is intuitive, theoretically sound and we furthermore show that the optimal policy of the infinite horizon max-entropy objective can be approximated arbitrarily well by the optimal policy of the MPG framework.
Finally, we provide a criterion for global optimality when the policy is parametrized by a neural network in terms of the neural tangent kernel at convergence.
As a proof of concept, we evaluate numerically MPG on standard test benchmarks.
\end{abstract}

  reinforcement learning, policy gradient, entropy regularization, global convergence, neural networks

\section{Introduction}

The family of Policy Gradient algorithms (PG) in Reinforcement Learning (RL) originated several decades ago with the algorithm REINFORCE \citep{Williams92}, the name \textit{Policy Gradient} appearing only in 2000 in \citet{Sutton99}, they recently regained interest thanks to many remarkable achievements, to name a few: in continuous control \citep{Lillicrap15, Schulman15, Schulman17PPO} and
natural language processing such as GPT-3 \citep{Brown20}\footnote{instructGPT and chatGPT are trained with Proximal Policy Optimization, see \url{https://openai.com/blog/chatgpt/}.}, and more generally in the fine-tuning from human feedback stage of large language models \citep{ziegler2019fine}.
See the blog post of \citet{weng18} that lists important PG methods and provides a concise introduction to each of them.

PG methods are considered more suitable for large (possibly continuous) state and action spaces than other nonetheless important methods such as Q-learning and its variations.
However, for large spaces, the exploitation-exploration dilemma becomes more challenging.
In order to enhance exploration, it has become standard to use a regularization to the objective, as in \textit{max-entropy RL} \citep{Nachum17,ODonoghue16,Schulman17,mnih2016asynchronous,Haarnoja18SoftAO}, where the agent maximizes the sum of its rewards plus a bonus for the entropy of its policy\footnote{Other regularization techniques are used and studied in the literature, we focus on entropy regularized RL in this paper.}.
In particular \citet{ahmed2019understanding} study specifically the impact of entropy on policy optimization.
Not only does max-entropy RL boost exploration, it also yields an optimal policy that is stochastic, in the form of a Boltzmann measure, such that the agent keeps taking actions at random while maximizing the regularized objective.
This is sometimes preferable than deterministic policies.
In particular, \citet{Eysenbach21} show that the max-entropy RL optimal policy is robust to adversarial change of the reward function (their Theorem 4.1) and transition probabilities (their Theorem 4.2); see also references therein for more details on that topic.
Finally, max-entropy RL is appealing from a theoretical perspective.
For example, soft Q-learning, introduced by \citet{Haarnoja17} (see also \citet{Haarnoja18SoftAO,Haarnoja18SoftAA} for implementations of soft Q-learning with an actor-critic scheme), strongly resembles PG in max-entropy RL \citep{Schulman17}; max-entropy RL has also been linked to variational inference \citep{Levine18}.
Other appealing features of max-entropy RL are discussed by \citet{Eysenbach19} and references therein.

A vast number of works on RL have focused on either infinite horizon tasks, or episodic tasks where the length of an episode is random. In both these cases, policies only depend on the current state of the agent.
In \citet{Ernst03}, the fixed, finite horizon optimal policy is used as an approximation, as the horizon grows to infinity, to approximate the infinite-horizon optimal policy.
Nonetheless and even though the fixed (deterministic) horizon setting has received less attention, the benefits of fixing the horizon are multiple and have been investigated in recent relevant works, such as \citet{Asis19,Guin22,vp2021finite}.
\citet{Asis19} study a \textit{Temporal Difference} algorithm (which is not PG), typically involving bootstrapping.
When it is used offline (off-policy) together with function approximation, it encounters the well-known stability issue called the \textit{deadly triad}, see \citet{Sutton18} Section 11.3.
By using horizon-dependent value functions, they do not rely on bootstrapping, getting rid of one element of the triad, thus ensuring more stability.
It is worth noting that thanks to the fixed-horizon setting, they empirically overcome the specific Baird's example of divergence.
\citet{Guin22} defines an actor-critic algorithm for constrained RL, where the agent aims at maximizing the cumulative rewards while satisfying some given constraints.
They prove the convergence for finite state and action spaces but do not study global convergence.

In the same spirit, \citet{Seijen19} investigate the impact of the discount factor when optimizing a discounted infinite-horizon objective evaluated on a finite-horizon undiscounted objective.
They empirically found that for some tasks, lower discount factors (thus closer to a fixed-horizon objective) lead to better performance.
With a fixed horizon, policies are time-dependent, and are usually called \textit{non-stationary} policies, as in dynamic programming \citep{Bertsekas1995DynamicPA}.

Regarding PG methods specifically in the fixed-horizon setting with tabular softmax parametrization, the preprint by \citet{klein2023beyond} proves global convergence using the \textit{gradient domination property}, which generally does not holds in the infinite state-action space, see our discussion on previous approaches below.
PG with fixed horizon has also recently been studied outside of the MDP setting.
Global convergence of some algorithms is established for some class of continuous time problems, see e.g. \citet{hambly2021policy, giegrich2022convergence}.

\paragraph{Contributions.}
We consider the function approximation setting with log-linear parametric policies, that are constructed as the softmax of linear models.
For convergence results, we assume perfect gradient updates, i.e. we have access to the exact gradient of the objective.
The main contributions of this work are:
\begin{enumerate}[label = (\roman*)]
    \item We define the fixed-horizon max-entropy RL objective and introduce a new algorithm (Equation \eqref{eq: update cascade learning}), named \textit{Matryoshka Policy Gradient} (MPG).
    \item We establish global convergence for continuous state and action space: under the realizability assumption, MPG converges to the unique optimal policy (Theorem \ref{thm: general case global optimality}). When the realizability assumption does not hold, we prove uniqueness of the optimal policy and prove global convergence of MPG (Theorem \ref{Thm: global cvg outside of the RKHS}).
    \item We approximate arbitrarily well the optimal policy for the infinite horizon objective by the optimal policy of the MPG objective (Proposition \ref{prop: extending horizon converges to standard optimal policy}).
    \item In the case where the policy is parametrized as the softmax of a (deep) neural network's output, we describe the limit of MPG training in terms of the \textit{neural tangent kernel} and the \textit{conjugate kernel} of the neural network at the end of training (Corollary \ref{thm: gen case deep RL global optimality}). In particular, MPG globally converges in the \textit{lazy regime}.
    \item Numerically, we successfully train agents on standard simple tasks without relying on RL tricks, and confirm our theoretical findings (see Section \ref{Section: numerical experiments}). 
\end{enumerate}

In our numerical experiments described in Section \ref{Section: numerical experiments}, we first consider an analytical task and verify the global convergence property of the MPG: MPG consistently finds the unique global optimum, which satisfies the projectional consistency property. Then, we study two benchmarks from OpenAI: the Frozen Lake game and the Cart Pole. We obtain successful policies for both benchmarks with the MPG algorithm, comparing also to vanilla PG method \citep{Sutton99}. Rather than competing with the state-of-the-art algorithms, our aim is to provide a proof of concept by showing that successful training can be obtained with a straightforward implementation of MPG.
We hope that more general and bigger scale experiments implementing variations of MPG will follow the present work.

\paragraph{Comparison with previous results and approaches.}
Besides the well-known \textit{Policy Gradient Theorem} (see Chapter 13 in \citet{Sutton18}) that can imply convergence of PG (provided good learning rate and other assumptions), for many years, not much more was known about the global convergence of PG (i.e. convergence to an optimal policy) until recently.
Despite the numerous remaining gaps, some important progress have already been made.
In particular, the global convergence of PG methods has been studied and proved in specific settings, see for instance \citet{Fazel18,Agarwal22,Bhandari19,Mei20,Zhang20SampleER,Zhang19GlobalCO,Cen20,Ding21,Wang19, Agazzi20, Bhandari20OnTL, Leahy22, Guin22}.
Convergence guarantees often come with convergence rates (with or without perfect gradient estimates).
Though strengthening the trust in PG methods for practical tasks, most of the theoretical guarantees obtained in the literature so far require rather restrictive assumptions, and often assume that the action-state space of the MDP is finite (but not always, e.g. \citet{Agazzi20} address continuous action-state space for neural policies in the mean-field regime and \citet{Leahy22} prove global convergence when adding a strong enough regularization on the parameters.)
In particular, in the context of tabular softmax policies, \citet{Li21} study the dependency of the number of iterations of the (perfect) gradient PG update on the size of the state space, and construct environments with only three actions per state requiring the algorithm to make $\frac{1}{\eta}|\mathcal{S}|^{2^{\Omega(1/(1-\gamma))}}$ iterations to converge, where $\gamma$ is the discount factor and $\eta$ the learning rate.

We now highlight the key differences in proof techniques between our work and previous works.
\citet{Agarwal22} give many convergence guarantees for different policy gradient algorithms.
In particular, in the tabular case with finite state and action spaces, global convergence is obtained thanks to the \textit{gradient domination property}, stated in their Lemma 4.1 as follows: for every probability distributions $\mu,\rho$ on $\mathcal{S}$, for every policy $\pi$, the difference between the value function of the optimal policy $\pi_*$ and the value funtion of $\pi$ satisfies
\begin{align*}
    \sum_{s\in\mathcal{S}}(V_{\pi_*}(s)-V_\pi(s))\rho(s)
    \leq \frac{1}{1-\gamma}\left|\left|\frac{d_\rho^{\pi_*}}{\mu}\right|\right|_\infty \max_{\pi'}(\pi'-\pi)\nabla_\pi \sum_{s\in\mathcal{S}}V_\pi(s)\mu(s),
\end{align*}
where $d_\rho^{\pi_*}$ is the state-visitation distribution induced by the initial state distribution $\rho$ and $\pi_*$, and the max is taken over the set of all policies.
The factor $||d_\rho^{\pi_*}/\mu||_\infty$ is called the \textit{distribution mismatch} between $d_\rho^{\pi_*}$ and $\mu$.
Since $d_\rho^{\pi_*}(s)$ is proportional to the discounted time spent in state $s$ under the optimal policy $\pi_*$, the gradient domination property ensures that if the policy is trained with respect to a state distribution $\mu$ whose support contains that of $d_\rho^{\pi_*}$, then the gradient vanishes only at the optimum.

One advantage is that in the tabular setting, the rate of convergence can be deduced, involving the distribution mismatch, see e.g. Section 4 in \citet{Agarwal22}.
In the function approximation setting with infinite state space and finite action space, they obtain convergence results for the \textit{natural policy gradient} algorithm, which uses the Fisher information matrix induced by the policy in the update, but do not obtain the optimality of the limit.

Similarly, \citet{Mei20} study convergence rates towards global optimum but rely on tabular parametrization with finite state and action spaces to guarantee finite concentration coefficient and to obtain a {\L}ojasiewicz type of inequality (also lower bound on the gradient), which is vacuous in infinite state space. See also \citet{Ding21} for global convergence of an entropy-regularized PG method with softmax policies with sample-based updates, assuming finite state and action spaces and tabular parametrization.
Convergence rates for general modified policy iteration approaches are also obtained in \citet{geist2019theory} for finite state and action spaces with finite concentration coefficient.
In the parametric case, with log-linear policies, \citet{yuan2022linear} study convergence rates for natural gradient descent. However, they do not show global convergence and also require finite state and action space.
In \citet{mei2021leveraging}, a convergence rate to the global optimum is obtained for Geometry-aware Normalized PG, but it requires finite state and action spaces to guarantee finite concentrability coefficient, while their non-uniform {\L}ojasiewicz inequality is vacuous as the number of states grows to infinity.

The main takeaway is that standard methods, including the works mentioned above, use bounds on the gradient of the objective (gradient domination property, {\L}ojasiewicz inequality, ...) to deduce convergence rates, which often need finite state and action space, while global optimality usually requires tabular parametrization.
Besides, none of them concerns non-stationary policies.
On the other hand, our proof for the convergence of MPG does not rely on a lower bound of the gradient, which is why we do not obtain convergence rates. However, we obtain global convergence as follows:
\begin{enumerate}[label = \roman*)]
    \item The gradient of the objective remains Lipschitz continuous along the training trajectory, which ensures that the objective converges (Theorem \ref{thm: policy gradient theorem}).
    \item To ensure that the policy converges too, we show that the sequence generated during training is \textit{relatively compact} (Lemma \ref{lem:converging_subsequence_of_policies}), which essentially guarantees that the sequence of policies generated during training converge (more precisely, any of its subsequence has a converging subsequence).
    \item The parameters remain bounded during training (Lemma \ref{lem:parameters_remain_bounded}), which entails that any limiting policy of training is a critical point that belongs to the parametric space.
    \item Using tools from information geometry (Appendix \ref{appendix: information geometry}), we then show that the only critical point of the objective inside the parametric space is the unique projection of the global optimal policy onto the parametric space (see proofs of Theorem \ref{thm: general case global optimality} and Theorem \ref{Thm: global cvg outside of the RKHS}).
    This projection is globally optimal in the parametric space.
\end{enumerate}
The main strength of MPG
\begin{enumerate}[label = \textbullet]
    \item State space and action space can be infinite, continuous and even unbounded.
    \item Global convergence is guaranteed in the function approximation setting with log-linear policies.
    \item Even when the realizable assumption does not hold, MPG converges to the unique optimal policy in the parametric policy space.
    This global optimum can be characterized as the unique policy satisfying the \textit{projectional consistency property} in the parametric space.
\end{enumerate}

\section{Fixed-horizon max-entropy RL}

In this section, we introduce the fixed-horizon max-entropy RL, describe its optimal policy and establish some of its properties.

\subsection{Definitions}

\paragraph{Markov Decision Process}

The \textit{Markov Decision Process} (MDP) setup is a very standard and important setup in RL \citep{puterman2014markov,Sutton18}.
It is suited for sequential decision making, where the dynamics are Markovian, i.e. depend on the past decisions only through the current state of the agent, making it mathematically tractable.

The agent evolves according to a MDP characterized by the tuple $(\mathcal{S},\mathcal{A},p,p_{\mathrm{rew}})$ modelling the \textit{environment}, a map called a \textit{policy} $\pi:\mathcal{A}\times\mathcal{S}\to [0,1]$, and an \textit{initial state distribution} $\nu$ on $\mathcal{S}$.\footnote{Implicitly assumed in the MDP definition is the fact that all variables such that actions, visited states and rewards are measurable, so that they are well-defined random variables.}
The action space can be state dependent $\mathcal{A}_s$, nonetheless we assume for simplicity that it is the same regardless of the current state of the agent.
We assume that the action and state spaces $\mathcal{A},\mathcal{S}\subset\RR^d$ are closed sets.
Let $s'\mapsto p(s,a,s')$ be the probability (the density if $\mathcal{S}$ is continuous) that the agent moves from $s\in\mathcal{S}$ to $s'\in\mathcal{S}$ after taking action $a\in\mathcal{A}$.
When $p(s,a,s')=\delta_{s',f(s,a)}$ for some $f:\mathcal{S}\times\mathcal{A}\to\mathcal{S}$, then we say that the transitions are deterministic.
The reward depends on the action and on the current state, its law is denoted by $p_{\mathrm{rew}}(\cdot|s,a)$.
Throughout, we assume that the rewards are uniformly bounded and for all $(s,a)\in\mathcal{S}\times\mathcal{A}$, we denote by $r(a,s)$ the mean reward after taking action $a$ at state $s$.

A stationary policy $\pi:\mathcal{A}\times\mathcal{S}\to\intervalleff{0}{1}$ is a map such that for all $s\in\mathcal{S}$, $\pi(\cdot|s)$ is a probability distribution on $\mathcal{A}$ that describes the law of the action taken by the agent at state $s$.
Let $\mathcal{P}$ denote the set of stationary policies.
Let $n\in\NN$ be some fixed horizon, we denote by $\mathcal{P}_n$ the set of non-stationary policies $\pi=(\pi^{(1)},\ldots,\pi^{(n)})$ where for all $i=1,\ldots,n$, $\pi^{(i)}\in\mathcal{P}$.
Henceforth, we use the term ``policy'' for non-stationary policies.
We say that the agent follows a policy $\pi\in\mathcal{P}_n$ if and only if it chooses its actions sequentially according to $\pi^{(n)}$, then $\pi^{(n-1)}$, and so on until $\pi^{(1)}$ and the end of the episode.
That is for each episode of fixed length $n$, starting from a given state $S_0$, the agent generates a path $S_0,A_0,S_1,A_1,\ldots, A_{n-1},S_n$, where $A_i\sim\pi^{(n-i)}(\cdot|S_i)$ and $S_{i+1}\sim p(S_i,A_i,\cdot)$.
Note that in the standard infinite horizon setting, a policy corresponds to an infinite sequence of the same stationary policy $\{(\pi,\pi,\pi,\cdots); \pi\in\mathcal{P}\}\subset\mathcal{P}_\infty$.
All random variables are such that the process is Markovian.

Henceforth, we assume that $\mathcal{A}$ and $\mathcal{S}$ are continuous, the results identically holding true when they are countable.
We also assume that
\begin{itemize}
    \item The sets $\mathcal{A},\mathcal{S}\subset\RR^d$ are closed Borel sets.
    \item The transition probability function $p:\mathcal{S}\times\mathcal{A}\times\mathcal{S}\to [0,1]$ and the reward function $r:\mathcal{A}\times\mathcal{S}$ are measurable.
    \item For all Borel set $B\subset\mathcal{S}$, the map $(s,a)\mapsto p(s,a,B)$ is continuous.
    \item The so-called \textit{measurable selection assumption} holds.
\end{itemize}
The measurable selection assumption is a technical statement that ensures that the MDP has a well-defined optimal solution.
Several conditions ensuring that it holds can be found in \citet{Hernandez99} Section 3.3.
To avoid technical discussions that are not relevant to the present work, the reader can replace the measurable selection assumption by assuming the following simple sufficient and not too restrictive condition:
\begin{itemize}
    \item The action space $\mathcal{A}$ is compact and the reward function and the transition probability function are continuous with respect to the Euclidean metric.
\end{itemize}

\paragraph{Value and Q functions.}

For every $s\in\mathcal{S}$ and $\pi,\pi'\in\mathcal{P}$, we denote by $\mathrm{D_{KL}}(\pi||\pi')(s)=\mathrm{D_{KL}}(\pi(\cdot|s)||\pi'(\cdot|s))$ the Kullback-Leibler divergence between $\pi(\cdot|s)$ and $\pi(\cdot|s')$, defined as
\begin{align*}
    \DKL{\pi}{\pi'}(s):=\int_\mathcal{A}\log\frac{\pi}{\pi'}(a|s)\pi(\mathrm{d}a|s),
\end{align*}
and is set to $\infty$ if $\pi'(\cdot|s)$ is not absolutely continuous with respect to $\pi(\cdot|s)$.

To regularize the rewards, we add a penalty term that corresponds to the Kullback-Leibler (KL) divergence of the agent's policy and a baseline policy.
In practice, the baseline policy can be used to encode a priori knowledge of the environment; a uniform baseline policy corresponds to adding entropy bonuses to the rewards.
Regularizing with the KL divergence is thus more general than with entropy bonuses and this is the regularization that we consider in this paper, akin to \citet{Schulman17}.

We denote by $\overline{\pi}$ the arbitrary baseline policy and let us assume for conciseness that $\overline{\pi}\in\mathcal{P}$.
Let $\tau>0$ be the so-called \textit{temperature parameter} governing the strength the of the regularization.
Similar to \citet{Ding21,Mei20,geist2019theory} in the stationary setting, we define the $n$-step value function $V_{\pi}^{(n)}:\mathcal{S}\to\mathbb{R}$ induced by a policy $\pi\in\mathcal{P}_n$ as
\begin{align*}
	V_{\pi}^{(n)}:s\mapsto
	\mathbb{E}_{\pi}\left[\sum_{k=0}^{n-1}(R_k-\tau\DKL{\pi^{(n-k)}}{\overline{\pi}}(S_k))\Big|S_0=s\right],
\end{align*}
where the expectation is along the trajectory of length $n$ sampled under policy $\pi=(\pi^{(1)},\ldots,\pi^{(n)})$.
Note that we have
\begin{align}\label{eq: gen case recursive definition value function}
    V^{(n)}_{\pi}(s)
    = \EE_{\pi^{(n)}}[R_0] - &\tau\DKL{\pi^{(n)}}{\overline{\pi}}(s)+\EE_{\pi^{(n)}}[V^{(n-1)}_{\pi'}(S_1)],
\end{align}
where $\pi'=(\pi^{(1)},\ldots,\pi^{(n-1)})\in\mathcal{P}_{n-1}$, and $S_1\sim\int_{\mathcal{A}}p(s,a,\cdot)\pi^{(n)}(\mathrm{d}a|s)$.
It is common to add a discount factor $\gamma\in\intervalleof{0}{1}$ to the rewards to favor more the quickly obtainable rewards.
In the infinite horizon case ($n=\infty$), $\gamma<1$ ensures that the cumulative reward is finite a.s. (provided finite first moment).
Our study trivially applies to the case where the rewards are discounted.

The $n$-step entropy regularized $Q$-function induced by $\pi$ is defined as
\begin{align}\label{eq: gen case definition Q-function}
    Q_{\pi}^{(n)}:(a,s)\mapsto r(a,s)+\int_{\mathcal{S}}p(s,a,\mathrm{d}s')V_{\pi'}^{(n-1)}(s').
\end{align}

\textbf{Notation:}
Henceforth, for a policy $\pi\in\mathcal{P}_n$, we use the abuse of notation $V_\pi^{(i)}$ for $i<n$ for the $i$-step value function associated with $(\pi^{(1)},\ldots,\pi^{(i)})$, and similarly for the $Q$ functions and other quantities of interest, when the context makes it clear which policy is used.

\subsection{Objective and optimal policy}

The standard discounted max-entropy RL objective is defined for stationary policies $\pi\in\mathcal{P}$ by
\begin{align}\label{eq: standard max-RL objective}
    J(\pi):= &\int_{\mathcal{S}}\EE_{\pi_t}\bigg[\sum_{k=0}^T\gamma^k\left(R_k-\DKL{\pi_t}{\overline{\pi}}(S_k)\right)\Big|S_0=s\bigg]\nu(\mathrm{d}s),
\end{align}
where $T\in\NN\cup\{\infty\}$ is the horizon and $\nu$ is the initial state distribution, see e.g. \citet{Eysenbach21} and references therein.
It is often assumed that $T$ is random and therefore $\pi$ is stationary.

Instead of the above objective, we consider the following objective function for non-stationary policy $\pi$:
\begin{align}\label{def: general case objective function}
    J_n(\pi)
    :=\int_{\mathcal{S}}V_{\pi}^{(n)}(s)\nu(\mathrm{d}s).
\end{align}
Since we assume that the rewards are bounded and since the Kullback-Leibler divergence is non-negative, the objective function above is bounded from above by $n||r||_\infty$.


We say that a policy $\pi\in\mathcal{P}_n$ is optimal if and only if $J_n(\pi)\geq J_n(\pi')$ for all $\pi'\in\mathcal{P}_n$.
Note that in general, uniqueness is not guaranteed, since for example $\pi^{(n)}$ only sees states in the support of $\nu$, which can be strictly smaller than $\mathcal{S}$.
If a policy $\pi\in\mathcal{P}_n$ is such that $V_\pi^{(i)}(s)\geq V_{\pi'}^{(i)}(s)$ for all $s\in\mathcal{S}$, all $i=1,\ldots, n$ and all $\pi'\in\mathcal{P}_n$, we say that $\pi$ is \textit{uniformly optimal}.
It is clear that a uniformly optimal policy is in particular optimal.
The existence and unicity of the uniformly optimal policy is established by the next proposition, providing in passing its explicit expression.

\begin{prop}\label{prop: optimal policy}
    There exists a unique uniformly optimal policy, denoted by $\pi_*=(\pi_*^{(1)},\ldots,\pi_*^{(n)})\in\mathcal{P}_n$.
    The $i$-step optimal policies, $i=1,\ldots,n$, can be obtained as follows: for all $a\in\mathcal{A}$, $s\in\mathcal{S}$,
    \begin{align*}
        \pi_*^{(1)}(a|s)
        =\frac{\overline{\pi}(a|s)\exp(r(a,s)/\tau)}{\EE_{\overline{\pi}}[\exp(r(A,s)/\tau)]}, \quad
        \pi_*^{(i+1)}(a|s)
        =\frac{\overline{\pi}(a|s)\exp\left(Q_*^{(i+1)}(a,s)/\tau\right)}{\EE_{\overline{\pi}}\left[\exp\left(Q_*^{(i+1)}(A,s)/\tau\right)\right]},
    \end{align*}
    where $Q_*^{(i+1)}$ is a short-hand notation for $Q_{\pi_*}^{(i+1)}$ recursively defined as in \eqref{eq: gen case definition Q-function}.
\end{prop}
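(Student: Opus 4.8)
The plan is to prove a statement slightly stronger than the one asked, namely \emph{pointwise} optimality: I will show by induction on $i$ that there is a unique (in $L^2$) policy $\pi_*'=(\pi_*^{(1)},\ldots,\pi_*^{(i)})$ with $V^{(i)}_{\pi_*'}(s)\ge V^{(i)}_{\pi'}(s)$ for every $s\in\mathcal{S}$ and every $\pi'\in\mathcal{P}_i$, given by the Boltzmann formulas in the statement. The pointwise formulation is what makes the induction close, since the recursion \eqref{eq: gen case recursive definition value function} feeds $V^{(i)}$ into $V^{(i+1)}$ through a nonnegative transition kernel, so that \emph{dominating} $V^{(i)}$ everywhere is exactly what is needed to dominate $V^{(i+1)}$. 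To descend from this to the optimality of $J_{i+1}=\int_{\mathcal{S}}V^{(i+1)}_{\pi}(s)\,ds$ appearing in the statement, I note that $V^{(i+1)}_{\pi}\le V^{(i+1)}_{\pi_*}$ everywhere forces any maximizer of $J_{i+1}$ to satisfy $V^{(i+1)}_{\pi}=V^{(i+1)}_{\pi_*}$ Lebesgue-a.e.; existence is automatic because the Boltzmann formulas define an admissible element of $\mathcal{P}_n$.

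The base case $i=1$ reduces to the single-state problem. For fixed $s$, $V^{(1)}_{\pi'}(s)=\sum_{a}\pi^{(1)}(a|s)r(a,s)-\tau\DKL{\pi^{(1)}}{\overline{\pi}}(s)$ depends on $\pi^{(1)}$ only through $\pi^{(1)}(\cdot|s)$, so the maximization decouples over states. At each state this is exactly the bandit objective, whose unique maximizer is the Gibbs measure \eqref{eq: bandit case optimal policy} with reward $r(\cdot,s)$, yielding $\pi_*^{(1)}$; the attained value is the free energy $\mathcal{F}_s(g):=\tau\log\EE_{\overline{\pi}}[\exp(g(A)/\tau)]$ evaluated at $g=r(\cdot,s)$.

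For the inductive step I fix $s$ and expand $V^{(i+1)}_{\pi}(s)$ via \eqref{eq: gen case recursive definition value function} and the definition \eqref{eq: gen case definition Q-function} of $Q^{(i+1)}_{\pi'}$, writing $V^{(i+1)}_{\pi}(s)=\sum_a \pi^{(i+1)}(a|s)\,Q^{(i+1)}_{\pi'}(a,s)-\tau\DKL{\pi^{(i+1)}}{\overline{\pi}}(s)$. The key structural fact is that $Q^{(i+1)}_{\pi'}$ depends only on $\pi'=(\pi^{(1)},\ldots,\pi^{(i)})$, not on $\pi^{(i+1)}$. I therefore first maximize over $\pi^{(i+1)}(\cdot|s)$ for fixed $\pi'$ — again a bandit problem, now with ``reward'' $Q^{(i+1)}_{\pi'}(\cdot,s)$ — obtaining the Gibbs maximizer and the value $\mathcal{F}_s(Q^{(i+1)}_{\pi'}(\cdot,s))$. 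I then invoke two monotonicity facts: (a) $\mathcal{F}_s$ is strictly increasing in each coordinate, since $\partial\mathcal{F}_s/\partial g(a)$ equals the Gibbs weight at $a$, which is strictly positive; and (b) by the induction hypothesis $V^{(i)}_{\pi'}(s')\le V^{(i)}_{\pi_*'}(s')$ for all $s'$, so integrating against the nonnegative kernel $p(s,a,\cdot)$ gives $Q^{(i+1)}_{\pi'}(a,s)\le Q^{(i+1)}_{\pi_*'}(a,s)$ coordinatewise. Combining (a)--(b) shows $\mathcal{F}_s(Q^{(i+1)}_{\pi'}(\cdot,s))\le \mathcal{F}_s(Q^{(i+1)}_{\pi_*'}(\cdot,s))$, so the pointwise optimum over all of $\pi$ is attained by taking $\pi'=\pi_*'$ and $\pi^{(i+1)}$ equal to the Gibbs measure with energy $Q^{(i+1)}_*:=Q^{(i+1)}_{\pi_*'}$, which is precisely the claimed recursion.

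The step I expect to be most delicate is \emph{uniqueness}. Suppose $\pi$ attains the pointwise maximum of $V^{(i+1)}$. Strict monotonicity of $\mathcal{F}_s$ forces $Q^{(i+1)}_{\pi'}(\cdot,s)=Q^{(i+1)}_{\pi_*'}(\cdot,s)$, that is $\int p(s,a,s')\,[\,V^{(i)}_{\pi_*'}-V^{(i)}_{\pi'}\,](s')\,\mathrm{d}s'=0$ for all $(a,s)$. As the integrand is nonnegative, the gap $V^{(i)}_{\pi_*'}-V^{(i)}_{\pi'}$ vanishes on the support of every one-step transition law; here I would use irreducibility \eqref{eq: gen case irreducibility} together with the absolute-continuity assumption on the visited states to upgrade this to vanishing Lebesgue-a.e.\ on $\mathcal{S}$, after which the induction hypothesis yields $\pi'=\pi_*'$ in $L^2$, and strict concavity of the bandit objective then pins down $\pi^{(i+1)}$ uniquely. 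This passage from ``a.e.\ with respect to the transition kernels'' to ``a.e.\ on $\mathcal{S}$'' is exactly where the standing assumptions on the MDP are needed, and is the main obstacle to a fully rigorous argument.
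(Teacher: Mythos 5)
Your proof is correct in outline but takes a genuinely different route from the paper's. The paper first \emph{defines} $\pi_*$ through the Boltzmann formulas and then proves the exact gap identity of Lemma \ref{lem: general case value function is optimal value function minus DKL},
\begin{align*}
V_{\pi}^{(n)}(s)-V_*^{(n)}(s)=-\EE_{\pi}\left[\left.\sum_{i=0}^{n-1}\DKL{\pi^{(n-i)}}{\pi_*^{(n-i)}}(S_i)\right|S_0=s\right],
\end{align*}
obtained by inserting the closed form \eqref{eq: general case optimal policy} into the recursion \eqref{eq: gen case recursive definition value function} and telescoping; pointwise optimality and uniqueness then follow in one line from nonnegativity of the Kullback--Leibler divergence. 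You instead run a backward induction in which each level is a per-state bandit problem, combined with strict coordinatewise monotonicity of the free energy $\mathcal{F}_s$ and monotonicity of $Q^{(i+1)}$ in $V^{(i)}$ through the nonnegative transition kernel. What the paper's identity buys is a quantitative formula for the suboptimality gap, which it reuses later in the paper (the proof of Lemma \ref{lem: general case expected update log pi_t}, hence of Theorem \ref{thm: general case global optimality}, relies on Lemma \ref{lem: general case value function is optimal value function minus DKL}); what your argument buys is that no identity has to be guessed in advance, the variational structure (Gibbs principle, log-sum-exp monotonicity) doing all the work. Two remarks on your uniqueness step: first, for the cascade to close, your induction hypothesis should be stated in almost-everywhere form (any $\pi'$ whose $i$-step value agrees with the optimal one Lebesgue-a.e.\ coincides with $\pi_*'$ in $L^2$), since at level $i+1$ you only recover equality of the lower-order values almost everywhere, not everywhere, so the hypothesis as you phrased it (uniqueness of the everywhere-dominating policy) does not literally apply; second, the passage from ``null with respect to the transition laws'' to ``Lebesgue-null'' that you flag as the main obstacle is equally present in the paper's proof, where it is absorbed without comment into the standing assumptions (uniform initial law, absolute continuity of the laws of the $S_i$, irreducibility) and the parenthetical ``outside a set of Lebesgue measure $0$''. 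So on that point your proposal is no less rigorous than the paper's own argument; it is merely more candid about where the standing assumptions are consumed.
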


For $i=1,\ldots,n$, let $\mathbf{m}_\pi^{(i)}$ be the law of $S_{n-i}$ under $\pi$ and with given initial state distribution $\nu$.
Note that $\mathbf{m}_\pi^{(n)}=\nu$.
It is readily seen that if $\pi$ is optimal for $J_n$, then necessarily, $\pi^{(n)}(\cdot|s)=\pi_*^{(n)}(\cdot|s)$ for $\nu$-almost every $s$.
In particular, $m_\pi^{(n-1)}=m_{\pi_*}^{(n-1)}$ and then $\pi^{(n-1)}(\cdot|s)=\pi_*^{(n-1)}(\cdot|s)$ for $m_{\pi_*}^{(n-1)}$-almost every $s\in\mathcal{S}$.
Reasoning by induction shows that $\pi^{(i)}(\cdot|s)=\pi_*^{(i)}(\cdot|s)$ for $m_{\pi_*}^{(i)}$-almost every $s\in\mathcal{S}$, for all $i=1,\ldots,n$.
Hence, the optimal policy is unique over the support of the state distributions induced by the uniformly optimal policy.
Since $\pi_*(a|s)>0$ for all $(a,s)\in\mathcal{A}\times\mathcal{S}$, the supports of these state distributions consist of all reachable states from the support of $\nu$.

\begin{lem}\label{lem: general case V_* = log expectation}
    For all $s\in\mathcal{S}$ and $n\geq 1$, it holds that
    \begin{align*}
        V_*^{(n)}(s)
        =\tau\log\EE_{\overline{\pi}}\left[\exp\left(Q_*^{(n)}(A,s)/\tau\right)\right],
    \end{align*}
    where $V_*^{(0)}(s')=0$.
\end{lem}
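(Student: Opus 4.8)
**Plan to prove Lemma (the $V_*^{(n)}$ identity).**

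The plan is to derive the claimed closed form by substituting the explicit expression for the optimal $i$-step policy from Proposition \ref{prop: optimal policy} into the recursive definition of the value function and simplifying the resulting entropy term against the reward/$Q$-function term. The identity essentially says that, at the optimal policy, the value function is the log-partition function (free energy) of the Boltzmann distribution with energy $Q_*^{(n)}$ — a familiar fact from the max-entropy/variational formulation, so the computation should be direct.

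First I would specialize the recursion \eqref{eq: gen case recursive definition value function} at the optimal policy, writing
\begin{align*}
V_*^{(n)}(s)
= \EE_{\pi_*^{(n)}}\left[Q_*^{(n)}(A,s)\right]
- \tau\,\DKL{\pi_*^{(n)}}{\overline{\pi}}(s),
\end{align*}
where I have used definition \eqref{eq: gen case definition Q-function} of $Q_*^{(n)}$ to absorb the immediate reward and the expected next-step value into the single term $\EE_{\pi_*^{(n)}}[Q_*^{(n)}(A,s)]$. Next I would expand the KL term using the explicit Boltzmann form $\pi_*^{(n)}(a|s)=\overline{\pi}(a|s)\exp(Q_*^{(n)}(a,s)/\tau)/Z(s)$, where $Z(s):=\EE_{\overline{\pi}}[\exp(Q_*^{(n)}(A,s)/\tau)]$ is the normalizing constant. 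Then
\begin{align*}
\log\frac{\pi_*^{(n)}(a|s)}{\overline{\pi}(a|s)}
= \frac{Q_*^{(n)}(a,s)}{\tau} - \log Z(s),
\end{align*}
so that $\tau\,\DKL{\pi_*^{(n)}}{\overline{\pi}}(s) = \EE_{\pi_*^{(n)}}[Q_*^{(n)}(A,s)] - \tau\log Z(s)$, since $\log Z(s)$ is constant in $a$ and $\pi_*^{(n)}(\cdot|s)$ integrates to one.

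Substituting this back, the two $\EE_{\pi_*^{(n)}}[Q_*^{(n)}(A,s)]$ terms cancel exactly, leaving $V_*^{(n)}(s)=\tau\log Z(s)=\tau\log\EE_{\overline{\pi}}[\exp(Q_*^{(n)}(A,s)/\tau)]$, which is the claim; the base case $V_*^{(0)}\equiv 0$ is the stated convention. I do not expect a genuine obstacle here: the proof is a one-line telescoping once the Boltzmann form is plugged in. The only point requiring mild care is ensuring that the $Q$-function appearing in Proposition \ref{prop: optimal policy} is the \emph{same} object $Q_*^{(n)}=Q_{\pi_*}^{(n)}$ that enters the recursion — i.e., that the recursive definitions of $V_*$ and $Q_*$ are mutually consistent at the optimal policy — which is guaranteed by the construction in Proposition \ref{prop: optimal policy}, so one may simply invoke it. If a fully rigorous induction on $n$ is desired, one would state the base case $n=1$ (where $Q_*^{(1)}(a,s)=r(a,s)$ and the identity reduces to the bandit-case log-partition formula) and then run the identical cancellation at each step using the inductively available $V_*^{(n-1)}$, but the algebra is unchanged.
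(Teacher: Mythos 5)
Your proposal is correct and follows essentially the same route as the paper: both write $V_*^{(n)}(s)=\EE_{\pi_*^{(n)}}\big[Q_*^{(n)}(A,s)\big]-\tau\DKL{\pi_*^{(n)}}{\overline{\pi}}(s)$ and then substitute the Boltzmann form of $\pi_*^{(n)}$ from Proposition \ref{prop: optimal policy}, so that the $Q$-terms cancel and only the log-partition function $\tau\log\EE_{\overline{\pi}}\left[\exp\left(Q_*^{(n)}(A,s)/\tau\right)\right]$ survives. The paper performs the cancellation inside the sum over actions while you expand the KL term first, but this is the same one-line computation.
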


Thanks to Lemma \ref{lem: general case V_* = log expectation}, we can write more concisely
\begin{align}\label{eq: general case optimal policy}
    \pi_*^{(i)}(a|s)
    &=\overline{\pi}(a|s)\exp\left(\left(Q_*^{(i)}(a,s)-V_*^{(i)}(s)\right)/\tau\right).
\end{align}

For all $n,m\in\NN$ such that $n>m$, we define the operator $T_{n,m}:\mathcal{P}_n\to\mathcal{P}_m$ by
\begin{align}\label{eq: definition translation operator}
    T_{n,m}:
    (\pi^{(1)},\ldots,\pi^{(n)}) & \mapsto (\pi^{(1)},\ldots,\pi^{(m)}).
\end{align}
In Proposition \ref{prop: extending horizon converges to standard optimal policy} below, for all $n\in\NN$, we denote by $\pi_{*,n}\in\mathcal{P}_n$ the uniformly optimal policy with maximum horizon $n$ and with discounted rewards.
The infinite horizon entropy-regularized RL objective is defined in \eqref{eq: standard max-RL objective} with $T=\infty$,
and we denote by $\pi_{*,\infty}$ the corresponding uniformly optimal policy.

\begin{prop}\label{prop: extending horizon converges to standard optimal policy}
    Suppose that the initial state distribution $\nu$ has full support and that the MDP is ergodic. We have:
    \begin{enumerate}[label = (\roman*)]
        \item As $n\to\infty$, the policy $\pi_{*,n}^{(n)}$ converges to $\pi_{*,\infty}$, in the sense that for $\nu$-almost all $s\in\mathcal{S}$,
        \begin{align*}
            \lim_{n\to\infty}\int_{\mathcal{A}}\left|\pi_{*,n}^{(n)}(a|s)-\pi_{*,\infty}(a|s)\right|\mathrm{d}a=0.
        \end{align*}
        \item for all $n,m\in\NN$ such that $n>m$, it holds that $T_{n,m}(\pi_{*,n})=\pi_{*,m}$.
    \end{enumerate}
\end{prop}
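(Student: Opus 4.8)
I would prove the two statements separately. Statement (ii) is a consistency property of the recursive construction and requires no limiting argument. From Proposition \ref{prop: optimal policy}, the component $\pi_*^{(1)}$ depends only on the reward map $r$, and for each $i\ge 1$ the policy $\pi_*^{(i+1)}$ is determined by $Q_*^{(i+1)}$, which by \eqref{eq: gen case definition Q-function} depends only on $V_*^{(i)}$, which in turn by \eqref{eq: gen case recursive definition value function} depends only on $\pi_*^{(1)},\ldots,\pi_*^{(i)}$. Hence the sequence $(\pi_*^{(1)},\pi_*^{(2)},\ldots)$ is generated from the bottom up by a recursion that never references the total horizon. By induction on $i$, the $i$-step optimal policy $\pi_{*,n}^{(i)}$ is the same for every $n\ge i$, so $\pi_{*,n}^{(i)}=\pi_{*,m}^{(i)}$ for all $i\le m<n$; together with the uniqueness in Proposition \ref{prop: optimal policy} this gives $T_{n,m}(\pi_{*,n})=(\pi_{*,n}^{(1)},\ldots,\pi_{*,n}^{(m)})=\pi_{*,m}$.

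For statement (i), I would recast the finite-horizon optimal value functions as value-iteration iterates. In the discounted setting, define the soft Bellman optimality operator $\mathcal{T}$ on $C(\mathcal{S})$ by
\[ (\mathcal{T}V)(s):=\tau\log\EE_{\overline{\pi}}\left[\exp\left(\frac{1}{\tau}\left(r(A,s)+\gamma\int_{\mathcal{S}}p(s,A,s')V(s')\,\mathrm{d}s'\right)\right)\right]. \]
Combining Lemma \ref{lem: general case V_* = log expectation} with \eqref{eq: gen case definition Q-function} shows $V_*^{(n)}=\mathcal{T}^n 0$ (the $n$-fold iterate from the zero function), while $V_{*,\infty}$ is the fixed point of $\mathcal{T}$ by the standard soft Bellman equation. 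The crux is that $\mathcal{T}$ is a $\gamma$-contraction in the supremum norm: the affine map sending $V$ to the function $(a,s)\mapsto r(a,s)+\gamma\int_{\mathcal{S}}p(s,a,s')V(s')\,\mathrm{d}s'$ contracts by $\gamma$ because $p(s,a,\cdot)$ is a probability measure, and the outer weighted log-sum-exp operator is a non-expansion. Continuity of $r$ and $p$ together with compactness of $\mathcal{S}$ guarantee $\mathcal{T}$ maps $C(\mathcal{S})$ into itself, so Banach's fixed point theorem yields $\|V_*^{(n)}-V_{*,\infty}\|_\infty\le C\gamma^n\to 0$.

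It then remains to pass from value functions to policies. By \eqref{eq: general case optimal policy}, both $\pi_{*,n}^{(n)}(a|s)=\overline{\pi}(a|s)\exp((Q_*^{(n)}(a,s)-V_*^{(n)}(s))/\tau)$ and $\pi_{*,\infty}(a|s)=\overline{\pi}(a|s)\exp((Q_{*,\infty}(a,s)-V_{*,\infty}(s))/\tau)$ are soft-max expressions in the corresponding $Q$ and $V$. Since $\|Q_*^{(n)}-Q_{*,\infty}\|_\infty\le\gamma\|V_*^{(n-1)}-V_{*,\infty}\|_\infty$, both converge uniformly; as the rewards are bounded and $\gamma<1$ the exponents remain in a fixed compact interval, on which $\exp$ is Lipschitz, so $\sup_{a,s}|\pi_{*,n}^{(n)}(a|s)-\pi_{*,\infty}(a|s)|\to 0$. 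Finally, $\mathcal{A}$ is finite and $\mathcal{S}$ has finite Lebesgue measure by compactness, so this uniform convergence upgrades to the claimed $L^1$ convergence $\int_{\mathcal{S}}\sum_{a}|\pi_{*,n}^{(n)}(a|s)-\pi_{*,\infty}(a|s)|\,\mathrm{d}s\to 0$.

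The main obstacle is the careful verification of the contraction, specifically that the weighted log-sum-exp operator is a non-expansion in sup norm and that $\mathcal{T}$ preserves continuity under the integral against $p$ on the compact state space; these are the only genuinely analytic points, the rest being bookkeeping. A secondary subtlety is the discount factor: the displayed finite-horizon value functions in \eqref{eq: gen case recursive definition value function} are written without $\gamma$, so I would first record that (i) is understood in the discounted variant (inserting $\gamma$ as indicated after \eqref{eq: gen case recursive definition value function}), which is exactly what makes $\mathcal{T}$ a contraction and its fixed point equal to the value function of $\pi_{*,\infty}$.
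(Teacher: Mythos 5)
Your proof is correct, but both parts take genuinely different routes from the paper's. For (ii), the paper argues by exchange and contradiction: if $T_{n,1}(\pi_{*,n})$ were strictly suboptimal for $J_1$, replacing the $1$-step component of $\pi_{*,n}$ by $\pi_{*,1}$ would strictly increase $J_n$ through the recursion \eqref{eq: gen case recursive definition value function}, contradicting optimality of $\pi_{*,n}$; induction then climbs the horizons. Your observation that the bottom-up recursion in Proposition \ref{prop: optimal policy} never references the total horizon $n$ reaches the same conclusion more directly, at the price of leaning on the explicit characterization (which the paper has available anyway, so this is legitimate). For (i), the paper takes a softer, qualitative route: for a fixed standard policy $\pi$, dominated convergence gives $J_n((\pi,\ldots,\pi))\to J_\infty(\pi)$, so $\pi_{*,n}^{(n)}$ is asymptotically optimal for $J_\infty$, and uniqueness of the maximizer of $J_\infty$ is invoked to conclude. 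Your soft-Bellman contraction argument is more quantitative: it yields $\|V_*^{(n)}-V_{*,\infty}\|_\infty\le C\gamma^n$, hence uniform (not merely $L^1$) convergence of the policies with a rate, and it supplies exactly the step the paper leaves implicit, namely how near-optimality of values forces convergence of the policies themselves (the paper's ``we deduce'' hides the stability estimate that your softmax-Lipschitz step makes explicit). Both treatments must read $J_n$ as the discounted finite-horizon objective for (i) to make sense; you flag this explicitly, while the paper does so only implicitly through its use of dominated convergence. One small point to close the loop: you use the Boltzmann form $\pi_{*,\infty}(a|s)=\overline{\pi}(a|s)\exp\left((Q_{*,\infty}(a,s)-V_{*,\infty}(s))/\tau\right)$, which the paper never states for the infinite-horizon problem; it is standard for discounted max-entropy RL, but it deserves a citation or a one-line derivation from the soft Bellman fixed point.
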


The above Proposition \ref{prop: extending horizon converges to standard optimal policy} is intuitive when $\nu$ has full support so that the unique optimal policy is the uniformly optimal policy:
item (i) shows that one can learn the standard discounted entropy-regularized RL objective by incrementally extending the agent's horizon;
item (ii) goes the other way and shows that the uniformly optimal policy for large horizon is built of shorter horizons uniformly optimal policies in a consistent manner.

\section{Matryoshka Policy Gradient}

\subsection{Policy parametrization}
For $i\in\{1,\ldots,n\}$, let $\theta^{(i)}\in\RR^{\nparam_i}$ be the parameters of a linear model $h_{\theta^{(i)}}^{(i)}:\mathcal{A}\times\mathcal{S}\to\mathbb{R}$, that outputs for all $(a,s)\in\mathcal{A}\times\mathcal{S}$ the $i$-step preference $h_{\theta^{(i)}}^{(i)}(a,s)$ for action $a$ at state $s$,
that is,
\begin{align*}
    h_{\theta^{(i)}}^{(i)}(a,s):=\theta^{(i)}\cdot\psi^{(i)}(a,s),
\end{align*}
where $\psi^{(i)}:\mathcal{A}\times\mathcal{S}\to\RR^{\nparam_i}$ is a feature map.
We assume throughout the paper that $\psi^{(i)}$ is a continuous and bounded map, such that for all $s\in\mathcal{S}$ and all $\theta^{(i)}$ with $||\theta^{(i)}||\neq 0$, the map $a\mapsto h_\theta^{(i)}(a,s)$ is not constant.
The $i$-step policy $\pi_{\theta^{(i)}}^{(i)}$ is defined as the Boltzmann policy according to $h^{(i)}$, that is, for all $(a,s)\in\mathcal{A}\times\mathcal{S}$,
\begin{align*}
	\pi_{\theta^{(i)}}^{(i)}(a|s):=\overline{\pi}(a|s)\frac{\exp(h_{\theta^{(i)}}^{(i)}(a,s)/\tau)}{\int_{\mathcal{A}}\exp(h_{\theta^{(i)}}^{(i)}(a',s)/\tau)\overline{\pi}(\mathrm{d}a'|s)}.
\end{align*}
The gradient of the policy thus reads as
\begin{align}\label{eq: gradient policy pi}
    &\nabla\pi_{\theta^{(i)}}^{(i)}(a|s)
    =\pi_{\theta^{(i)}}^{(i)}(a|s)\int_{\mathcal{A}}\left(\delta_{a,\mathrm{d}a'}-\pi_{\theta^{(i)}}^{(i)}(\mathrm{d}a'|s)\right)\nabla h^{(i)}_{\theta^{(i)}}(a',s)/\tau.
\end{align}

\subsection{Definition of the MPG update}

Policy gradient (PG) for max-entropy RL consists in following $\nabla_\theta J(\pi_\theta)$ for the standard objective \eqref{eq: standard max-RL objective}.
In our setting, the ideal PG update would be such that $\theta_{t+1}-\theta_t=\learningrate\nabla_\theta J_n(\pi_t)$.
We introduce Matryoshka Policy Gradient (MPG) as a practical algorithm that produces unbiased estimates of the gradient (see Lemma \ref{lem:Matryoshka_policy_gradient_thm} in Appendix).

Suppose that at time $t\in\NN$ of training, the agent starts at a state $S_0\sim\nu_0$.
To lighten the notation, we write $\pi_t^{(i)}:=\pi_{\theta_t^{(i)}}^{(i)}$.
We assume that the agent samples a trajectory according to the policy $\pi_t$, defined as follows:
\vspace{-0.1cm}
\begin{enumerate}[label = \textbullet]
    \item sample action $A_0$ according to $\pi_t^{(n)}(\cdot|S_0)$,
    \vspace{-0.1cm}
    \item collect reward $R_0\sim p_{\mathrm{rew}}(\cdot|S_0,A_0)$ and move to next state $S_1\sim p(S_0,A_0,\cdot)$,
    \vspace{-0.1cm}
    \item sample action $A_1$ according to $\pi_t^{(n-1)}(\cdot|S_1)$,
    \vspace{-0.1cm}
    \item $\cdots$
    \vspace{-0.1cm}
    \item stop at state $S_n$.
\end{enumerate}
\vspace{-0.1cm}
The MPG update is as follows: for $i=1,\ldots,n$,
\begin{align}\label{eq: update cascade learning}
    \theta_{t+1}^{(i)}
    &=\theta_t^{(i)} + \learningrate\sum_{\ell=n-i}^{n-1}\left(R_{\ell}-\tau\log\frac{\pi_t^{(n-\ell)}}{\overline{\pi}}(A_{\ell}|S_{\ell})\right)
    \nabla\log\pi_t^{(i)}(A_{n-i}|S_{n-i})\nonumber\\
    &=\theta_t^{(i)} + \learningrate C_i\nabla\log\pi_t^{(i)}(A_{n-i}|S_{n-i}),
\end{align}
where we just introduced the shorthand notation $C_i$.
We see that the $i$-step policy $\pi^{(i)}$ is updated using the $(i-\ell)$-step policies.

\subsection{Global convergence: the realizable case}

Recall that $\mathbf{m}_\pi^{(i)}$ denotes the law of $S_{n-i}$ when following policy $\pi$ from a starting state with distribution $\nu$.
We say that a sequence of policies $(\pi_t)_{t\in\NN}\subset\mathcal{P}_n$ converges to $\pi\in\mathcal{P}_n$ if and only if for every $i=1,\ldots,n$, for $\pi$-almost every $a\in\mathcal{A}$ and for $\mathbf{m}_{\pi}^{(i)}$-almost every $s\in\mathcal{S}$, it holds that
\begin{align}\label{eq: def convergence of policies}
  \pi^{(i)}_t(a|s)\underset{t\to\infty}{\longrightarrow} \pi^{(i)}(a|s).
\end{align}
To be concise, the states on which the convergence holds are called \textit{reachable state}, where we keep the dependence in $i$ implicit.
In particular, we will write ``$\pi_1=\pi_2$ on reachable states'' to mean that for all $i=1,\ldots,n$, it holds that $\pi_1^{(i)}(\cdot|s)=\pi_2^{(2)}(\cdot|s)$ for $\mathbf{m}_{\pi_2}^{(i)}$-almost every $s$.

With PG, the so-called \textit{Policy Gradient Theorem} (see Section 13.2 in \citet{Sutton18}) provides a direct way to guarantee convergence of the algorithm.
The analog holds in our setup, that is, if $\theta_{t+1}$ is obtained as in \eqref{eq: update cascade learning}, then $\EE[\theta_{t+1}-\theta_t]=\eta\nabla_\theta J_n(\pi_t)$; it is proven in Appendix \ref{app:on_the_cvg_of_training}.

Importantly, training with true gradient update converges, as stated in the following theorem:

\begin{thm}\label{thm: policy gradient theorem}
    Suppose that $||\psi||:=\sup_{a\in\mathcal{A},s\in\mathcal{S},i=1,\ldots,n}||\psi^{(i)}(a,s)||_2<\infty$, and that $\theta_{t+1}=\theta_t+\eta\nabla_\theta J_n(\pi_t)$ for all $t\geq 0$.
    For all initial parameters $\theta_0$, if $\eta<\frac{2}{L(\theta_0)}$, then it holds that $J_n(\pi_t)$ converges as $t\to\infty$, where 
    \begin{align*}
        L(\theta_0):=4(n^2+n^3)\Big(2+\frac{P}{\tau}\Big)\frac{\Vert\psi\Vert^2}{\tau^2}(J_n(\pi_*)-J_n(\pi_{\theta_0}) + 3^n\Vert r\Vert_\infty) + 4n^2\frac{\Vert\psi\Vert^2}{\tau^2}.
    \end{align*}
    Moreover, $||\nabla_\theta J_n(\pi_t)||_2\to 0$ as $t\to\infty$.    
\end{thm}

Note that Theorem \ref{thm: policy gradient theorem} does not show that the policy $\pi_t$ converges as $t\to\infty$, only that the objective does.
Furthermore, even if $\pi_t$ converges, its limit could be outside of the parametric space, if the parameters during training are such that $||\theta_t||_2\to\infty$ as $t\to\infty$.

For the Theorem below, we assume the following:
\begin{enumerate}[label = \textbf{A\arabic*.}]
    \item \textbf{Realizability assumption}
    There exists $\theta_*\in\RR^P$ such that $\pi_{\theta_*}=\pi_*$.\label{assumption: optimal policy in RKHS}
\end{enumerate}

\begin{thm}\label{thm: general case global optimality}

    Under \ref{assumption: optimal policy in RKHS} and the same assumptions as in Theorem \ref{thm: policy gradient theorem}, $\lim_{t\to\infty}\pi_t=\pi_*$ in the sense of \eqref{eq: def convergence of policies}.
\end{thm}

\paragraph{Intuition of the proof: the bandit case.}

To prove Theorem \ref{thm: policy gradient theorem}, we bound the 2-norm of the Hessian of the objective to show that $\nabla_\theta J_n(\pi_\theta)$ is Lipschitz, with a constant that only decreases along training trajectories, as long as the learning rate is chosen small enough.
However, the objective is non-concave, and it is not obvious that its critical points all correspond to policy $\pi_*$.
We present the heuristics on the illustrative bandit case $|\mathcal{S}|=1$ with horizon $n=1$.
We thus keep the state and horizon implicit below.

We know from Theorem \ref{thm: policy gradient theorem} that $J(\pi_t)$ converges as $t\to\infty$, but the limit could be reached as $\Vert\theta_t\Vert\to\infty$.
However, since there is an optimal $\theta_*\in\RR^P$ by \ref{assumption: optimal policy in RKHS}, for very large $\Vert\theta_t\Vert_2$, the vectors $-\theta_t$ and $\theta_*-\theta_t$ must be almost colinear.
Based on this observation, we show that if the norm of $\theta_t$ is very large, then moving the parameters slightly in the direction $-\theta_t$ improves the performance, showing that $\Vert\theta_t\Vert_2$ remains bounded.

The second step is to show that for all $\theta\in\RR^P$ if $\pi_\theta\neq\pi_*$, then $\theta$ is not a critical point.
The objective is given by $J(\pi_\theta)=J(\pi_*)-\tau\DKL{\pi_\theta}{\pi_*}$.
Without loss of generality, assume that $\tau=1$.
In particular,
\begin{align*}
    \nabla_{\theta} J(\pi_\theta)
    &=-\int_\mathcal{A}\pi_\theta(\mathrm{d}a)\left(\log\frac{\pi_\theta}{\pi_*}(a) + 1\right)\nabla_\theta\log\pi_\theta(a)\\
    &=-\int_\mathcal{A}\pi_\theta(\mathrm{d}a)\log\frac{\pi_\theta}{\pi_*}(a)\nabla_\theta\log\pi_\theta(a),
\end{align*}
where we used that $\EE_{\pi_\theta}[\nabla_\theta\log\pi_\theta(A)]=0$.
For the case $|\mathcal{A}|<\infty$ and tabular parametrisation $\pi_\theta(a)= e^{\theta_a}/\EE_{\overline{\pi}}[e^{\theta_A}]$ and $\nabla_\theta \log\pi_\theta(a)=(\delta_{a}(a')-\pi_\theta(a'))_{a'\in\mathcal{A}}$.
Recall that $\pi_*(a)=e^{r(a)}/\EE_{\overline{\pi}}[e^{r(A)}]$. 
We have
\begin{align*}
    \nabla_\theta J(\pi_\theta)
    &=-\sum_{a\in\mathcal{A}}\pi_\theta(a)\left(\theta_a-r(a) + \mathrm{Const}\right)\nabla_\theta\log\pi_\theta(a)\\
    &=- \sum_{a\in\mathcal{A}}\pi_\theta(a)\left(\theta_a-r(a)\right)(\delta_{a}(a')-\pi_\theta(a'))_{a'\in\mathcal{A}}\\
    &=-\left(\pi_\theta(a')(\theta_{a'}-r(a')-\EE_{\pi_\theta}[\theta_A-r(A)])\right)_{a'\in\mathcal{A}}.
\end{align*}
Hence, the gradient is null if and only if $\theta_{a'}-r(a')=\EE_{\pi_\theta}[\theta_A-r(A)]$ for all $a'\in\mathcal{A}$, that is, if and only if $\theta_{a'}-r(a')$ is constant in $a'$.
This is equivalent to having $\pi_\theta=\pi_*$, which proves that all critical points of $\theta\mapsto J(\pi_\theta)$ encode the optimal policy $\pi_*$ in the bandit case with tabular softmax parametrisation.

For the more general log-linear parametrisation $h_\theta=\theta\cdot\psi$, the gradient times itself yields
\begin{align*}
    \nabla_\theta J(\pi_\theta)\cdot\nabla_\theta J(\pi_\theta)
    &=\int_\mathcal{A}\pi_\theta(\mathrm{d}a)\log\frac{\pi_\theta}{\pi_*}(a)\int_\mathcal{A}\pi_\theta(\mathrm{d}a')\log\frac{\pi_\theta}{\pi_*}(a')\nabla_\theta\log\pi_\theta(a)\cdot\nabla_\theta\log\pi_\theta(a')\\
    &=\int_\mathcal{A}\pi_\theta(\mathrm{d}a)\left(h_\theta(a)-r(a)\right)\int_\mathcal{A}\pi_\theta(\mathrm{d}a')\left(h_\theta(a')-r(a')\right)\widetilde{\Theta}(a,a'),
\end{align*}
where we introduced the matrix $\widetilde{\Theta}(a,a')=\nabla_\theta\log\pi_\theta(a)\cdot\nabla_\theta\log\pi_\theta(a')$, depending implicitly on the parameters.
It is linked to the other matrix $\Theta(a,a'):=\psi(a)\cdot\psi(a')$ since one can check that
\begin{align*}
    \widetilde{\Theta}(a,a')
    =\Theta(a,a')-\EE_{\pi_\theta}[\Theta(A,a')]-\EE_{\pi_\theta}[\Theta(a,A')]+\EE_{\pi_\theta}[\Theta(A,A')].
\end{align*}
The gradient $\nabla_\theta J(\pi_\theta)$ is null if and only if $(h_\theta(a)-r(a))_{a\in\mathcal{A}}$ belongs to the null space of $\widetilde{\Theta}$.
Note, however, that $(h_\theta(a)-r(a))_{a\in\mathcal{A}}$ belongs to the image of $\Theta$, since we assume \ref{assumption: optimal policy in RKHS}.
One can show through standard facts on matrices that the relation between $\Theta$ and $\widetilde{\Theta}$ implies that $\nabla_\theta J(\pi_\theta)$ is null if and only if $h_\theta-r$ is constant, or equivalently, if and only if $\pi_\theta=\pi_*$.

This idea works for infinite or continuous $\mathcal{A}$ using kernels and their reproducible kernel Hilbert space (details are provided in Appendix \ref{appendix: RKHS}).
We also stress that using non-stationary policies is crucial in extending the proof to larger horizons $n>1$, by using that fixing the parameters of the policies with horizon less or equal to $n-1$, the horizon $n$ objective can be seen as a horizon $1$ objective where the rewards are determined by $r$ and the fixed subsequent policies.

\subsection{Global convergence: beyond the realizability assumption}
\label{section main: global convergence beyond the realizability assumption}

Let $\mathscr{P}_n=\{\pi_\theta;\theta\in\RR^P\}\subset\mathcal{P}_n$ be the set of parametric policies.
We now address the case $\pi_*\notin\mathscr{P}_n$, that is, Assumption \ref{assumption: optimal policy in RKHS} does not hold.
We give a sketch of the main ideas to extend Theorem \ref{thm: general case global optimality} to the non-realizable case, showing global convergence and providing a characterisation of the limit. All details are provided in Appendix \ref{appendix: general state space}

We focus on the $1$-step policy.
Suppose that $\vartheta\in\RR^P$ is a critical point of $\theta\mapsto J_n(\pi_\theta)$.
Since $Q_*^{(1)}(a,s)=r(a,s)$, one can show that
\begin{align}\label{eq: gradient is 0 sketch of proof}
    0
    &=\nabla_{\theta^{(1)}}J_n(\pi_\vartheta)
    =-\EE_{\pi_{\vartheta}}\left[\nabla_{\theta^{(1)}}\DKL{\pi_{\vartheta}^{(1)}}{\pi_*^{(1)}}(S_{n-1})\right],
\end{align}
where the law of $S_{n-1}$ only depends on $\pi_{\vartheta}^{(n)},\ldots,\pi_{\vartheta}^{(2)}$.
Since this law is fixed ($\vartheta$ is a critical point), the right-hand side above corresponds to the gradient of a \textit{Bregman divergence} on the set of $1$-step policies, which we denote by $D(\pi_{\vartheta}^{(1)},\pi^{(1)}_*)$.
Let $\pi_{\theta_*}^{(1)}\in\mathrm{argmin}_{\pi^{(1)}_{\theta}\in\mathscr{P}_1}D(\pi_{\theta}^{(1)},\pi_*^{(1)})$.
Bregman divergences satisfy a Pythagorean identity, which in particular implies that
\begin{align*}
    D(\pi_{\vartheta}^{(1)},\pi_*^{(1)})
    &= D(\pi_{\vartheta}^{(1)},\pi_{\theta_*}^{(1)}) + D(\pi_{\theta_*}^{(1)},\pi_*^{(1)}).
\end{align*}
Hence, we have by \eqref{eq: gradient is 0 sketch of proof} that
\begin{align*}
    0=-\nabla_{\theta^{(1)}}D(\pi_{\vartheta}^{(1)},\pi_{\theta_*}^{(1)}).
\end{align*}
We deduce that $\pi_{\vartheta^{(1)}}$ is a critical point of the $1$-step MPG objective, where the initial state distribution is prescribed by $\pi_\vartheta^{(i)}$ for $i=2,\ldots,n$, and where the rewards are given by $r_{\theta_*}:=h_{\theta_*}^{(1)}$.
In particular, Theorem \ref{thm: general case global optimality} applies and shows that $\pi_\vartheta^{(1)}(\cdot|s)=\pi_{\theta_*}^{(1)}(\cdot|s)$ for all reachable states $s$.
This also proves the uniqueness of $\pi_{\theta_*}^{(1)}$ on reachable states.

The argument propagates to larger horizons, by using that maxima can be taken in any order, which proves that the unique critical point $\pi_{\vartheta}$ of $J_n$ is globally optimal.
Formally, the following theorem completes the picture of the global convergence guarantees of MPG:

\begin{thm}\label{Thm: global cvg outside of the RKHS}
    Under the same assumptions as in Theorem \ref{thm: policy gradient theorem}, it holds that $\lim_{t\to\infty}\pi_t=\pi_{\theta_*}$ in the sense of \eqref{eq: def convergence of policies}, where $\pi_{\theta_*}=\mathrm{argmax}_{\pi_\theta\in\mathscr{P}_n}J_n(\pi_\theta)$ is unique on reachable states.
\end{thm}

Clearly, when $\pi_*\in\mathscr{P}_n$, then $\pi_\infty=\pi_*$ on reachable states and we retrieve Theorem \ref{thm: general case global optimality}.

\subsection{Projectional consistency property}

Let $\Theta^{(i)}:(\mathcal{A}\times\mathcal{S})^2\to\RR$ be the positive-semidefinite kernel given by the dot product of the feature map, that is
\begin{align*}
    \Theta^{(i)}((a,s),(a',s'))
    :=\psi^{(i)}(a,s)\cdot\psi^{(i)}(a',s').
\end{align*}
The function space $\{f:(a,s)\mapsto\theta^{(i)}\cdot\psi^{(i)}(a,s);\theta^{(i)}\in\RR^{\nparam_i}\}$ from which we chose the preferences of our parametric Boltzmann policies corresponds to the \textit{reproducible kernel Hilbert space} (RKHS) associated with $\Theta^{(i)}$, that we denote by $\mathcal{H}_{\Theta^{(i)}}$.
Note that when $\mathcal{A},\mathcal{S}$ are finite, with Kronecker delta kernels $\Theta^{(i)}((a,s),(a',s'))=\delta_{a,a'}\delta_{s,s'}$, we retrieve the so-called \textit{tabular case} with one parameter $\theta_{s,a}$ per state-action pair $(s,a)$.
Since we assume that the $\psi^{(i)}$'s are continuous and bounded, it is also the case for the kernels.

The realizability assumption \ref{assumption: optimal policy in RKHS} can be equivalently written as: for all $i=1,\ldots,n$, there exists a map $C_i:\mathcal{S}\to\RR$ such that $(a,s)\mapsto Q_*^{(i)}(a,s)+C_i(s)\in\mathcal{H}_{\Theta^{(i)}}$,
where the maps $C_i$ are constant in $a$.
The $C_i$'s play no role in the policies encoded by functions in the RKHS, since for a fixed $s$, shifting the preferences by a constant keeps the policy unchanged.

It turns out that the global optimum $\pi_{\theta_*}$ from Theorem \ref{thm: general case global optimality} can be characterised by a property of independent interest.
Let $\theta\in\RR^P$ and for all $i=1,\ldots,n$, let $\mathbf{m}_\theta^{(i)}$ be the law of state $S_{n-i}$ under policy $\pi_\theta$ with $\mathbf{m}_\theta^{(n)}=\nu_0$ by assumption.
Define $P_{i}:L^2(\mathbf{m}_\theta^{(i)}(\mathrm{d}s)\pi_\theta^{(i)}(\mathrm{d}a|s))\to\mathcal{H}_{\Theta^{(i)}}$ to be the orthogonal projection onto $\mathcal{H}_{\Theta^{(i)}}$ in the $L^2(\mathbf{m}_\theta^{(i)}(\mathrm{d}s)\pi^{(i)}_\theta(\mathrm{d}a|s))$ sense.    
We say that $\pi_\theta$ satisfies the \textit{projectional consistency property} if and only if for all $i=1,\ldots,n$, it holds that
\begin{align}\label{eq: projectional consistency property}
    \pi^{(i)}_{\theta}(a|s)=\overline{\pi}(a|s)\frac{\exp\left(P_i Q_{\pi_{\theta}}^{(i)}(a,s)/\tau\right)}{\int_\mathcal{A}\overline{\pi}(\mathrm{d}a'|s)\exp\left(P_i Q_{\pi_{\theta}}^{(i)}(a',s)/\tau\right)}.
\end{align}

\begin{prop}\label{prop: projectional consistency property}
    The global optimum $\pi_{\theta_*}$ from Theorem \ref{Thm: global cvg outside of the RKHS} is the only policy in $\mathscr{P}_n$, up to non-reachable states, that satisfies the projectional consistency property \eqref{eq: projectional consistency property}
\end{prop}

\subsection{Neural MPG}

Suppose that instead of a linear model, the policy's preferences $h_\theta^{(i)}$, $i=1,\ldots,n$, are parametrized by deep neural networks.
It is immediate from the proofs that the policy gradient theorem holds true, that is, $\theta_{t+1}-\theta_t=\eta\nabla_\theta J_n(\pi_t)$ for the ideal MPG update.
We describe the limit of training in terms of the \textit{Neural Tangent Kernels} (NTKs) of the neural networks and the \textit{conjugate kernels} (CKs).
The NTK of the $i$-step policy (or rather, of the $i$-step preference) at time $t$ of training is defined for all $(a,s),(a',s')\in\mathcal{A}\times\mathcal{S}$ as
\begin{align*}
    \Theta_t^{(i)}((a,s),(a',s'))
    :=\nabla_{\theta^{(i)}} h_t^{(i)}(a,s)\cdot\nabla_{\theta^{(i)}}h_t^{(i)}(a',s').
\end{align*}
The CK of the $i$-step policy is defined as the inner product of the last hidden layer, that we denote by $\alpha$, that is
\begin{align*}
    \Sigma_t((a,s),(a',s')):=\alpha_t(a,s)\cdot\alpha_t(a',s').
\end{align*}
Moreover, letting $\mathcal{H}_K$ be the induced RKHS of a kernel $K$, it holds that $\mathcal{H}_{\Sigma_t}\subset\mathcal{H}_{\Theta_t}$, see Appendix \ref{appendix: neural networks} for more details.

For the trained policy $\pi_\infty$, let $\mathscr{P}_n^{\Theta}$ be the space of log-linear policies whose $i$-step preference belongs to $\mathscr{H}_{\Theta_\infty^{(i)}}$, $i=1,\ldots,n$, and similarly for $\mathscr{P}_n^{\Sigma}$ and $\Sigma_\infty^{(i)}$.

\begin{cor}\label{thm: gen case deep RL global optimality}
    Let $\pi_t\in\mathcal{P}_n$ be parametrized by neural networks.
    Suppose that $\theta_{t+1}-\theta_t=\eta\nabla_\theta J_n(\pi_t)$ with $\eta>0$ small enough and that $\pi_\infty=\lim_{t\to\infty}\pi_t$ with parameters $||\theta_\infty||_2<\infty$.
    Then, it holds that
    \begin{align*}
        \pi_\infty
        =\underset{\pi\in\mathscr{P}_n^{\Theta}}{\mathrm{argmax}}\ J_n(\pi)
        =\underset{\pi\in\mathscr{P}_n^{\Sigma}}{\mathrm{argmax}}\ J_n(\pi).
    \end{align*}
    In particular, if $\pi_*\in\mathcal{P}_n^{\Theta}$ (equivalently $\mathcal{P}_n^{\Sigma}$), then $\pi_\infty=\pi_*$ on reachable states.
\end{cor}
A direct consequence of Corollary \ref{thm: gen case deep RL global optimality} is global convergence of MPG in the NTK regime, see the forthcoming Remark \ref{rem: consequence corollary deep RL} in Appendix.

\section{Numerical experiments}
\label{Section: numerical experiments}

This section describes the performance of the MPG framework. In the first experiment, we evaluate the MPG framework on an analytical task and show that we converge to the the optimal policy when the optimal policy is realisable, and to the optimal policy in the parameter space when the true optimal policy is not representable by the policy parameterisation. Then, we compare the performance of MPG against REINFORCE \citep{Sutton99} (denoted as PG), REINFORCE with entropy regularisation (softPG) and a non-stationary policy gradient method (nsPG), which is the MPG method without entropy regularisation in two simple control tasks. More details on the implementation, experimental setups and additional results can be found in Appendix \ref{appendix: numerical experiments}.

\subsection{Analytical task}
To numerically evaluate the consequences of Theorem \ref{thm: general case global optimality}, we devise the following analytical problem: consider a state-space consisting of $\mathcal{S}=\{0,1,2,3,4\}$, an action space $\mathcal{A}=\{1,2\}$ with horizon $n=2$. At each state $s$, the agent performs action $a$, taking the agent to the next state $(s+a)\mod 5$ (see appendix \ref{appendix:analytical_task} for fully specified $Q_*^{(1)}$ function, the linear basis $\{e_i;i=1,\ldots,5\}$ considered and experimental setup for the presented experiments). 

We consider the preference function to be represented by a linear model and we consider a true gradient update. Then, we investigate the first two step policies obtained using MPG are when assumption \ref{assumption: optimal policy in RKHS} holds and when it does not. The results are shown in Figure \ref{fig:analytical_task}. Namely, on the left, we  use the full basis $\{e_i;i=1,\ldots,5\}$ for the parametric model, and we are able to find the 1-step and 2-step policies which maximize the objective $J$, and converge towards the optimal 1-step and 2-step policies. On the right of Figure \ref{fig:analytical_task}, we performed the same experiment using an incomplete basis, that cannot express $Q_*^{(1)}$ nor $Q_*^{(2)}$. Namely, we used $\{e_i;i=1,\ldots,4\}$ for both the $1$-step and the $2$-step policies. In this case, we check that the limit is the only policy satisfying the projectional consistency property within the parametric policy space. In both cases, the $L_\infty$-error between the obtained policies and optimal policies go to zero as more episodes are used.


\begin{figure}[h]
\centering
\includegraphics[width=0.45\textwidth]{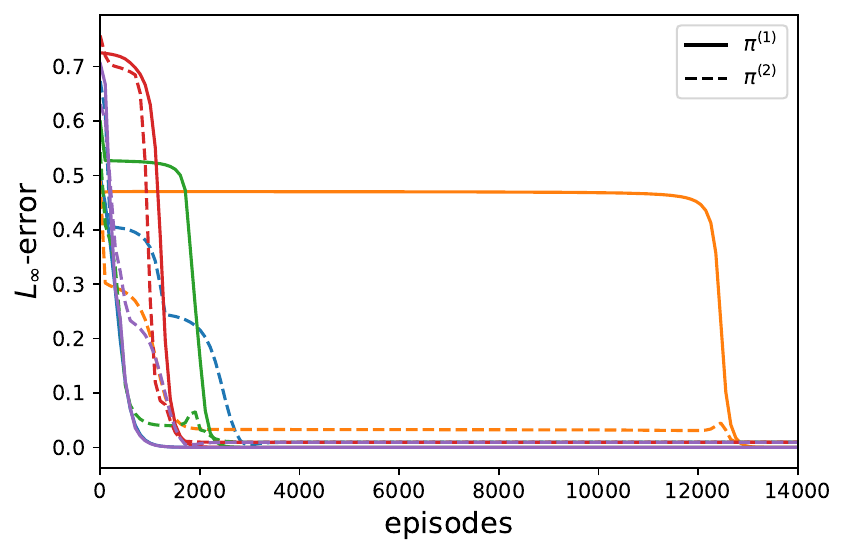}\includegraphics[width=0.45\textwidth]{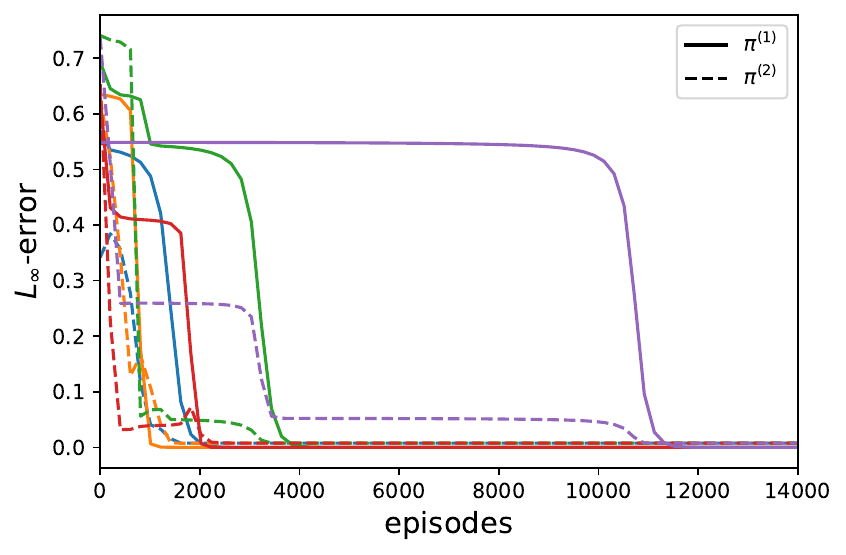}
\caption{Analytical task. Convergence of 5 agents with random initialisation during training; the errors are measured through the $L_\infty$-norm and $\pi^{(1)}$ denotes the one-step policy and $\pi^{(2)}$ the two-step policy. On the left, the convergence of the found 1-step and 2-step policies towards the optimal policies when the parametric space can represent the policy (i.e. when assumption \ref{assumption: optimal policy in RKHS} holds) is shown. On the right, the convergence of the 1-step and 2-step policies towards the optimal projected policies (i.e. when assumption \ref{assumption: optimal policy in RKHS} does not hold) is shown.}
\label{fig:analytical_task}
\end{figure}

\subsection{Control problems}
In this section we present a summary of the performance of the MPG algorithm on two standard RL problems, comparing it to REINFORCE \citep{Sutton99} (denoted as PG), REINFORCE with entropy regularisation (softPG) and a non-stationary policy gradient method (nsPG). In the following experiments, we use a deep neural network to represent the policy (see appendix \ref{appendix: numerical experiments} for architecture) and we estimate the gradient update based on one trajectory as in \eqref{eq: update cascade learning}. 

For both tasks, we follow the experimental protocol as in \citet{patterson2023empirical}, where we first make a sweep over the hyper-parameter spaces considered, evaluating the performance over $3$ agents per set of hyper-parameters, which are initial temperature $\tau_0$ and initial learning rate $\eta_0$ because we decay both the learning rate and temperature -- namely, starting from a higher temperature encourages the exploration of the environment in early stages of training. After this initial stage, we select the best performing sets of hyper-parameters and run more throughout experiments (over $50$ agents) to compare the performance of the different algorithms. For those experiments, confidence intervals around the mean are computed using bootstrap and considering $m=1000$ resampled samples of the mean. In this section we present the results for the the extended experiments, while the preliminary experiments with the hyper-parameter exploration can be found in the appendix \ref{appendix:control_problems}.

\paragraph{Frozen Lake:}
The Frozen Lake benchmark (see \citet{opengym}) features a $4\times 4$ grid composed of cells, holes and one treasure, and a discrete action space, namely, the agent can move in four directions (up, down, left, right). The episode terminates when the agent reaches the treasure or falls down holes. 

For all three algorithms, we considered the hyper-parameter space over initial learning rates and initial temperature, as specified in table \ref{table:frozenlake-hypers}. Then, for each algorithm, we augmented the search space if we found best performing agents at the boundary of the considered initial hyper-parameter space, also denoted in table \ref{table:frozenlake-hypers}. In addition, we considered a horizon length of $N=20$, a terminal $\tau_T = 0.01$, terminal learning rate $\eta_T = 1\times 10^{-6}$ and $1000$ episodes. From these initial runs (results can be found in the appendix), we found the best sets of hyper-parameters for each of the algorithms, denoted on table \ref{table:frozen-lake-best}\footnote{In the cases where there were several sets of good hyper-parameters, we ran an intermediate step with $15$ agents and selected the best set of hyper-parameters.}. Using those hyper-parameters, we ran more extended experiments, now considering $50$ independent agents. In figure \ref{fig:frozen-lake-training}, on the left, we present the training curves, showing the accumulated reward per episode, the shaded regions bound the mean using the $2.5$th percentile and $97.5$th percentile means using bootstrap to compute the confidence interval; on the right, we present the histogram of the cumulative rewards at test time, after training: each agent attempts to solve the task $100$ times. While the performance between MPG and PG was relatively similar: either the agent finds the treasure consistently or fails to find the treasure, training with nsPG or softPG did not yield a good performance in this task; namely, using softPG the agent often gets stuck at moving around the map (which yielded a +0.01 reward at each step, and a cumulative reward of 0.2) instead of finding the treasure. We conducted more experiments using different terminal temperature $\tau_T$ and terminal learning rate $\eta_T$ with little success. Possible ways to address this could be to reshape the reward function further but this was beyond the scope of this current paper.

\begin{table}
\begin{center}
 \caption{Hyper-parameters for Frozen lake \label{table:frozenlake-hypers}}
\begin{tabular}{| c |c | c | c | c | c | } 
 \hline
  & Initial & PG & softPG & nsPG & MPG\\ [0.5ex] 
 \hline\hline
 $\eta_0$ &  \makecell{$\{ 0.01, 5\times 10^{-3}, 1\times 10^{-3},$\\$5\times 10^{-4}, 1\times 10^{-4}\}$} & $\{\}$ & $\{ 0.1,0.05 \}$ & $\{0.5,0.1,0.05\}$ & $\{\}$\\ 
 \hline
 $\tau_0$ & \makecell{$\{0.15,0.2,0.25,0.3,0.35,$\\$0.4,0.45,0.5,0.55,0.6\}$} & NA & $\{0.6,0.65,0.7\}$ & NA &  $\{\}$ \\
 [1ex] 
 \hline
\end{tabular}
\end{center}
\end{table}

\begin{table}
\begin{center}
 \caption{Best set of hyper-parameters for Frozen lake \label{table:frozen-lake-best}}
\begin{tabular}{| c | c | c | c | c | } 
 \hline
  & PG & softPG & nsPG & MPG \\ [0.5ex] 
 \hline\hline
 $\eta_0$ & 0.01 & 0.01 & 0.05 & $5\times 10^{-3}$     \\ 
 \hline
 $\tau_0$ & NA & 0.6 & NA & 0.4 \\
 [1ex] 
 \hline
\end{tabular}
\end{center}
\end{table}

\begin{figure}[h]
\centering
\includegraphics[width=0.49\textwidth]{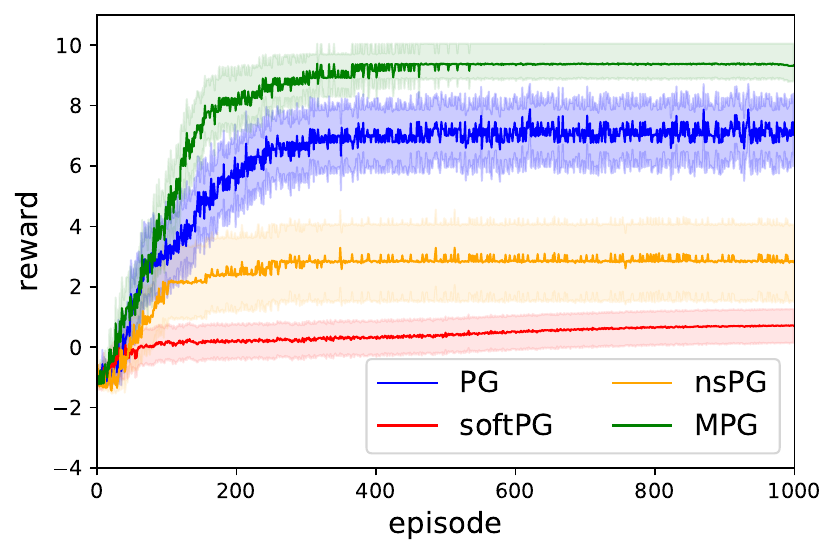} \includegraphics[width=0.49\textwidth]{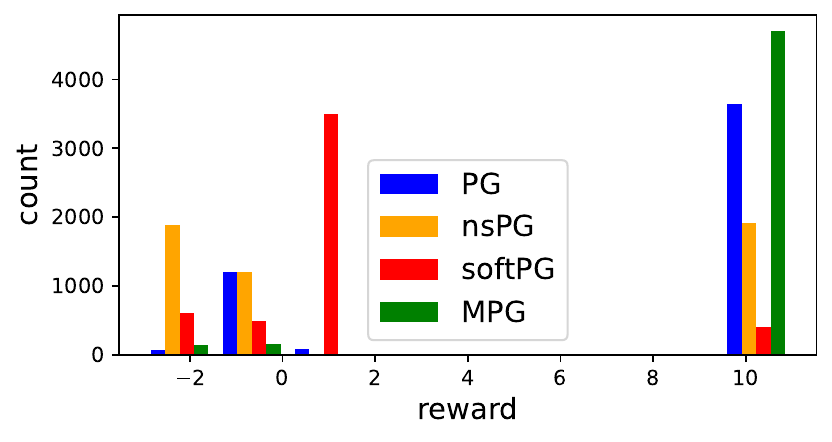}
\caption{Frozen Lake. Left: Cumulative rewards per episode during training time when training using different RL algorithms with the best found set of hyper-parameters. Right: Cumulative rewards per episode after training, each trained agent attempts to solve the task $100$ times.}
\label{fig:frozen-lake-training}
\end{figure}

Furthermore, we noted that when the horizon was decreased (for $N=10$ and $N=15$), we were not able to find the treasure with PG, softPG nor nsPG, whereas training with MPG, the agents would still consistently find the treasure and train successfully.

\paragraph{Cart Pole:}
The Cart Pole benchmark is a classical control problem where a pole is attached by an un-actuated joint to a cart, which moves along a frictionless track. The pole is placed upright on the cart, and the goal is to balance the pole by moving the cart to the left or right for some finite horizon time. It features a continuous environment and a discrete action space. 

We considered a horizon length of $N=100$, a terminal $\tau_T = 0.01$, terminal learning rate $\eta_T = 5\times 10^{-8}$ and $1000$ episodes. For all three algorithms, we considered the hyper-parameter space over initial learning rates and initial temperature, as specified in table \ref{table:cartpole-hypers}, if the best parameter was found at the boundary of the hyper-parameter space, we added another value to the hyper-parameter search. From these initial runs, we found the best sets of hyper-parameters for each of the algorithms, denoted on table \ref{table:cartpole-best}. Using these hyper-parameters, we ran more extended experiments, now considering $50$ independent agents. In figure \ref{fig:cart-pole-rewards}, on the left, we present the training curves, showing the accumulated reward per episode, the shaded regions bound the mean using the $2.5$th percentile and $97.5$th percentile mean using bootstrap to compute the confidence interval; on the right, we present the histogram of the cumulative rewards after training. We note that all algorithms attain quite similar performance, with the entropy regularised ones (MPG and softPG) requiring more episodes to reach the same cumulative reward. This is not surprising, as the agent spends more time exploring the environment at the early stages of training because $\tau$ is larger. We observe that once trained, the testing performance of PG, MPG and nsPG is quite similar, whereas softPG has a larger spread in the cumulative reward, which appears consistent with the larger confidence intervals observed during training.

\begin{table}
\begin{center}
 \caption{Hyper-parameters for balancing cart pole task \label{table:cartpole-hypers}}
\begin{tabular}{| c |c | c | c | c | c | } 
 \hline
  & Initial & PG &softPG & nsPG & MPG \\ [0.5ex] 
 \hline\hline
 $\eta_0$ &  \makecell{$\{ 5\times 10^{-5},$\\$ 1\times 10^{-5},$\\ $5\times 10^{-6},$\\$1\times 10^{-6}\}$} & \makecell{$\{0.005,0.001$,\\$5\times 10^{-4}$,\\$1\times 10^{-4}\}$} & $\{1\times 10^{-4}\}$ & \makecell{$\{0.005,0.001$,\\$5\times 10^{-4},$\\$1\times 10^{-4}\}$} & \makecell{$\{1\times 10^{-4}\}$} \\ 
 \hline
 $\tau_0$ & \makecell{$\{0.1,0.15,$\\$0.20,0.25,$\\$0.3\}$} & NA & $\{0.35,0.4,0.45\}$ & NA & $\{\}$ \\
 [1ex] 
 \hline
\end{tabular}
\end{center}
\end{table}

\begin{table}
\begin{center}
 \caption{Best set of hyper-parameters for the balancing cart pole task \label{table:cartpole-best}}
\begin{tabular}{| c | c | c | c | c |} 
 \hline
  & PG & softPG & nsPG & MPG \\ [0.5ex] 
 \hline\hline
 $\eta_0$ &  0.001 &  $5\times 10^{-5}$ & 0.001 & $5\times 10^{-5}$    \\ 
 \hline
 $\tau_0$ & NA & 0.3 & NA & 0.3 \\
 [1ex] 
 \hline
\end{tabular}
\end{center}
\end{table}

\begin{figure}[h]
\centering
\includegraphics[width=0.49\textwidth]{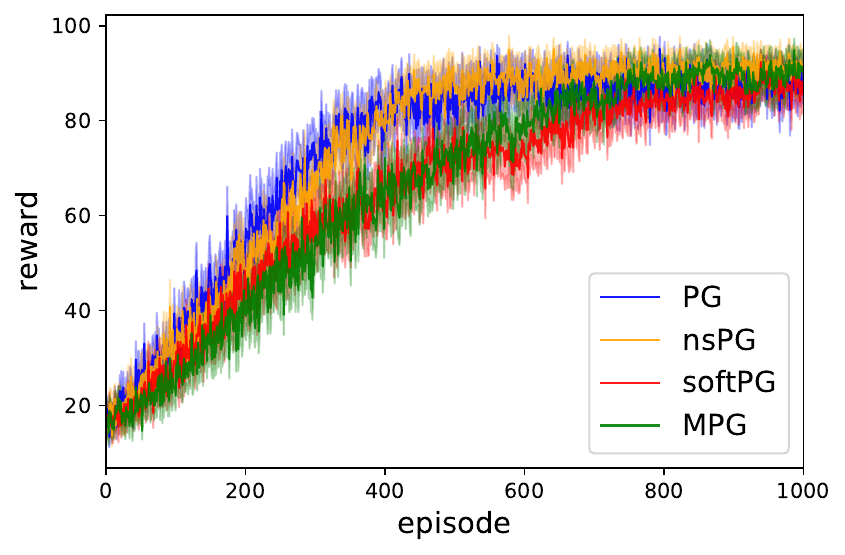}\includegraphics[width=0.49\textwidth]{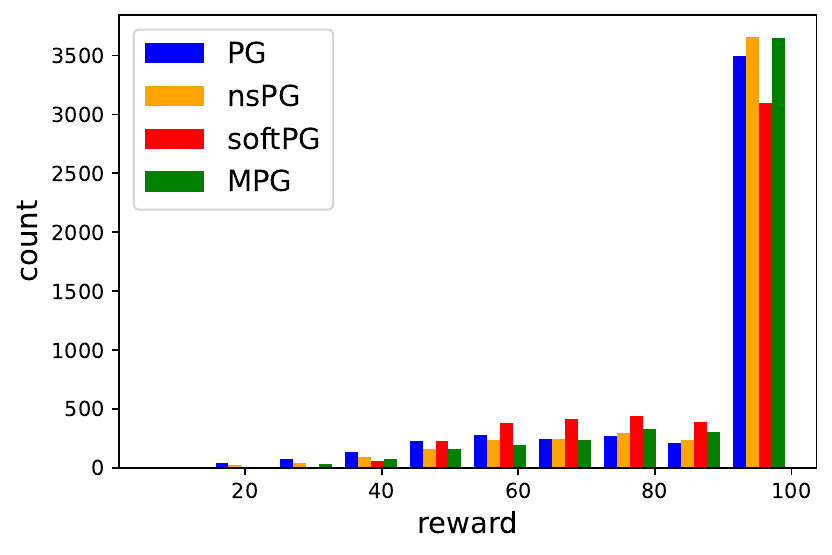}
\caption{Cart Pole. Left: Cumulative rewards per episode during training time when training using different RL algorithms with the best found set of hyper-parameters. Right: Cumulative rewards per episode after training, each trained agent attempts to solve the task $100$ times.  \label{fig:cart-pole-rewards}}
\end{figure} 

\section{Conclusion}

In this paper, we have studied a framework combining fixed-horizon RL and max-entropy RL.
We have introduced the Matryoshka Policy Gradient algorithm in the function approximation setting, with log-linear parametric policies.
We proved that the global optimum of the MPG objective is unique, and that MPG converges to this global optimum, including for continuous state and action space.
Furthermore, we proved that these results hold true even when the true optimal policy does not belong to the parametric space (that is when Assumption \ref{assumption: optimal policy in RKHS} does not hold).
The limit – globally optimal within the parametric space – corresponds to a projection of the optimal policy onto the parametric space.
It is written as the softmax of orthogonal projections of the optimal preferences onto the RKHSs of the parametrization, with respect to the state-visitation measures induced by the policy, see \eqref{eq: projectional consistency property}.
Finally, letting the horizon tend to infinity, the optimal policy of MPG retrieves the optimal policy of the standard infinite-horizon max-entropy objective, when the initial state distribution has full support.
For neural policies, we prove that the limit is optimal within the RKHSs of the NTK (equivalently of the CK) at the end of training, and can be written in terms of orthogonal projections of optimal preferences onto these RKHSs, yielding criterion for global optimality in terms of the NTK (equivalently the CK).
In particular it establishes the global convergence of neural MPG in the NTK regime.
The MPG framework is intuitive, theoretically sound and it is easy to implement. Furthermore, as verified in the numerical experiments, there appears to be an slight advantage to using entropy regularisation and non-stationary policies over the compared PG algorithms. More challenging experiments will be considered in future work.

\paragraph{Limitations.}
The main limitations of our work are the following:
(a) we have not studied the rate of convergence of MPG (typically more assumptions on the environment, the horizon, are needed),
(b) we assumed that perfect gradient updates whereas in practice, one uses the estimate \eqref{eq: update cascade learning}, (c) as a theoretical paper, our numerical experiments are rather simple.
We hope to address these limitations in future work, as well a other perspectives such as:
\begin{itemize}
    \item Additionally to MPG as defined in this paper, we expect to have nice theoretical properties of variations of MPG that are used for other PG algorithms.
    E.g. one can think of natural MPG, actor-critic MPG, path consistency MPG (see \citet{Nachum17} for path consistency learning).
    \item We motivated the use of MPG with neural softmax policies by some theoretical, practical, and heuristic arguments; we believe that more can be said on the use of neural policies with MPG, in particular by studying the spectra of the NTK and the CK of neural networks along specific geodesics in the parametric space.
    \item How does the fixed-horizon max-entropy framework compares to the standard max-entropy RL framework in terms of exploration, adversarial robustness, sample efficiency, and so on?
\end{itemize}

\newpage

\appendix
\addcontentsline{toc}{section}{Appendix}

\section*{Appendix}

The appendix is organized as follows:
\begin{itemize}
    \item \ref{appendix: parametrization}: we recall basic properties of softmax policies, then discuss the potential benefits to using a single neural network for the preferences of all $i$-step policies.
    This section ends with an explanation on how to approximate a kernel with finitely many features.
    \item \ref{appendix: RKHS}: we state and derive some basic facts on RKHS.
    \item \ref{appendix: information geometry}: We use concepts from Information Geometry to show that critical points of the MPG objective correspond to critical points of a \textit{Bregman divergence}; this fact is useful when the realizable assumption does not hold to ensure that MPG converges to the unique global optimum.
    \item \ref{appendix: general state space}: we prove the Matryoshka Policy Gradient Theorem (Theorem \ref{thm: policy gradient theorem}), Proposition \ref{prop: extending horizon converges to standard optimal policy} that shows that the infinite horizon optimal policy can approximated arbitrarily well by finite horizon optimal policies, Theorem \ref{thm: general case global optimality} and Theorem \ref{Thm: global cvg outside of the RKHS} that shows global convergence of MPG.
    \item \ref{appendix: assumptions}: we list and discuss our main assumptions.
    \item \ref{appendix: numerical experiments}: we provide more detailed numerical experiments implementing MPG.
\end{itemize}

\section{More on the parametrization}
\label{appendix: parametrization}

\subsection{Softmax policy}

As long as the map $\psi$ is uniformly bounded, softmax policies enjoy the two following properties:
\begin{itemize}
    \item For all $s\in\mathcal{S}$, it holds that
    \begin{align}\label{eq: softmax gradient cancels expected constant}
        \EE_{\pi_\theta}\left[\nabla_\theta\log\pi_\theta(A|s)\right]
        =\int_{\mathcal{A}}\nabla_\theta\pi_\theta(\mathrm{d}a|s)
        =0.
    \end{align}
    \item It holds that $\pi_\theta(a|s)>0$ for all $(a,s)\in\mathcal{A}\times\mathcal{S}$ such that $\overline{\pi}(a|s)>0$.
\end{itemize}

\subsection{Neural networks}
\label{appendix: neural networks}

\paragraph{Neural Tangent Kernel.}

For a measurable nonlinearity $\sigma:\RR\to\RR$, we recursively define a neural network of depth $L\geq 1$, with trainable parameters $W^{\ell}\in\RR^{d_{\ell}}\times\RR^{d_{\ell+1}}$ as $f:x\in\RR^{d_{0}}\mapsto \widetilde{\alpha}^L(x)\in\RR^{d_{L}}$, with $\alpha^0(x):=x$ and
\begin{align*}
    \widetilde{\alpha}^{\ell+1}(x)&\ := W^\ell\alpha^{\ell}(x),\\
    \alpha^{\ell+1}(x)&\ := \sigma\left(\widetilde{\alpha}^{\ell+1}(x)\right),
\end{align*}
where $\sigma$ is applied element-wise.

Note that the connection between the last hidden layer and the output layer is linear, since $f=W^L\alpha^{L-1}$.
In particular, $f$ belongs to the RKHS of the \textit{conjugate kernel} (CK) associated with the neural network, defined as
\begin{align*}
    \Sigma(x,x'):=\alpha^{L-1}(x)\cdot\alpha^{L-1}(x').
\end{align*}
On the other hand, the training of the neural network is governed by the \textit{neural tangent kernel} (NTK), which is defined as
\begin{align*}
    \Theta(x,x'):=\nabla f(x)\cdot\nabla f(x')
    =\sum_{p=1}^P\partial_{\theta_p}f(x)\partial_{\theta_p}f(x'),
\end{align*}
where $\theta\in\RR^{P}$ is the vector of all the trainable parameters of the neural network.
It is important to note that both the CK and the NTK depend on the parameters and as such, move during training.
Moreover, isolating the derivatives with respect to parameters $W^L$ of the last linear layer from the others $\widetilde{\theta}$, we have that
\begin{align*}
    \Theta(x,x') &= \alpha^{L-1}(x)\alpha^{L-1}(x')+\nabla_{\widetilde{\theta}}f(x)\nabla_{\widetilde{\theta}}f(x')\\
    &=\Sigma(x,x')+K(x,x'),
\end{align*}
and $K$ is another positive semidefinite kernel.
We therefore have that
\begin{align}\label{eq: CK smaller than NTK}
    \mathcal{H}_\Sigma\subset\mathcal{H}_\Theta,\qquad\forall\theta\in\RR^P.
\end{align}

\begin{rem}\label{rem: consequence corollary deep RL}
        For infinitely wide neural networks in the NTK regime \citep{Jacot18} (a.k.a. lazy regime \citep{Chizat18}, kernel regime), the NTK is fixed during training and is strictly positive definite, therefore convergence to the optimal policy is guaranteed.
\end{rem}

\paragraph{Non-stationary policy parametrized by a single neural network.}

One of the assumptions of MPG is that for any $i\neq j$, the policies $\pi^{(i)}_{\theta^{(i)}}$ and $\pi^{(j)}_{\theta^{(j)}}$ do not share parameters.
Using one neural network per horizon becomes quickly costly as the maximal horizon increases.
In order to avoid this issue, one can use a single neural network $h_{\theta}$ to parametrize all $i$-step policies by using $i$ as an input such that $\pi^{(i)}_{\theta}(a|s)\propto \overline{\pi}(a|s)\exp(h_{\theta}(a,s,i)/\tau)$.
By deviating from the theory, we nonetheless expect the performance of the model to be enhanced: as $i$ grows large, the $i$-step optimal policy gets closer to the $i+1$-step policy.
One could also use $1-\frac{1}{i}$ as an input to the network (or any increasing map $g:\NN\mapsto \intervalleff{0}{1}$ such that $i\mapsto g(i+1)-g(i)$ is decreasing).

\subsection{Kernel methods}
\label{appendix: kernel methods}

Suppose that $\Theta$ is a strictly pd kernel with $\nparam$ positive eigenvalues.
Recall the linear model $a\mapsto h_\theta(a) = \theta\cdot\psi(a)$, with parameters $\theta\in\RR^P$, such that $\psi$ is a feature map associated with $\Theta$.
Then if $P=\infty$, one can use random features, i.e. sample $g_1,\ldots,g_{\nparam'}$ i.i.d. Gaussian processes with covariance kernel $\Theta$, then $h_\theta:=\frac{1}{\sqrt{\nparam'}}\sum_{i=1}^{\nparam'} \theta_i g_i$.
One can thus approximate the true kernel predictor using a finite number of features, see \citet{Jacot20ImplicitRO}.

Another way to approximate the kernel predictor with finitely many features is to use the spectral truncated kernel $\widehat{\Theta}$ of rank $P'\in\NN$, by cutting off the smallest eigenvalues.
If $(e_i,\lambda_i)_{i\geq 1}$ are the eigenfunction/eigenvalue pairs of $\Theta$ ranked in the non-increasing order of $\lambda_i$, one can use
\begin{align*}
    \widehat{\Theta}(x,x'):=\sum_{i=1}^{\nparam'}\lambda_ie_i(x)e_i(x'),
\end{align*}
and the predictor $h_\theta:=\sum_{i=1}^{P'}\theta_ie_i$.

\section{Reproducible kernel Hilbert spaces}
\label{appendix: RKHS}

In this section, we recall and provide some basic facts on RKHSs that we use throughout the proofs.
Given some RKHS $\mathcal{H}$, we write $\mathcal{H}^\perp$ for its orthogonal complement; it is also an RKHS.

\begin{lem}\label{lem: facts on RKHS}
    Let $\mathcal{H}_1,\mathcal{H}_2$ be two RKHSs on $\mathcal{A}\times\mathcal{S}$,
    \begin{enumerate}[label = (\roman*)]
        \item The intersection $\mathcal{H}_1\cap\mathcal{H}_2$ is an RKHS.
        \item For any element $f\in\mathcal{H}_1$, there exists a unique decomposition $f=g_\bullet+g_\perp$ such that $g_\bullet\in\mathcal{H}_1\cap\mathcal{H}_2$ and $g_\perp\in\mathcal{H}_1\cap(\mathcal{H}_2)^\perp$.
    \end{enumerate}
\end{lem}

For a probability measure of the form $\mu(\mathrm{d}s)\pi(\mathrm{d}a|s)$ on $\mathcal{A}\times\mathcal{S}$, where $\pi$ is a policy, and for a positive-semidefinite kernel $K$ on $\mathcal{A}\times\mathcal{S}$, we define the integral operator $I_K(\mu,\pi):L^2(\mu(\mathrm{d}s)\pi(\mathrm{d}a|s))$ by
\begin{align*}
    I_K(f;\mu,\pi):(a,s)\mapsto\int_{\mathcal{A}\times\mathcal{S}}\mu(\mathrm{d}s')\pi(\mathrm{d}a'|s')f(a',s')K((a,s),(a',s')).
\end{align*}

Mercer's Theorem states that if $\mathcal{A}\times\mathcal{S}$ is closed (in a real space), 
and if $K$ is continuous and satisfies $\int_{(\mathcal{A}\times\mathcal{S})^2}K((a,s),(a',s'))^2\pi(\mathrm{d}a|s)\mu(\mathrm{d}s)\pi(\mathrm{d}a'|s')\mu(\mathrm{d}s')<\infty$, then there exists eigenfunction/eigenvalue pairs $(e_i,\lambda_i)_{i\geq 1}$ associated with $I_K(\mu,\pi)$, ranked in the non-increasing order of $\lambda_i\geq 0$ such that
\begin{align*}
    K((a,s),(a',s'))=\sum_{i\geq 1}\lambda_ie_i(a,s)e_i(a',s').
\end{align*}
Moreover, $\{e_i;i\geq 1\}$ is an orthonormal basis of $L^2(\mu(\mathrm{d}s)\pi(\mathrm{d}a|s))$ and the RKHS $\mathcal{H}_{K}$ has orthonormal basis $\{\sqrt{\lambda_i}e_i;\lambda_i>0\}$ with respect to the RKHS inner product.
We refer the reader to \citet{minh06} for more details.

We stress that the notion of orthogonality \textbf{depends} on the measure $\mu(\mathrm{d}s)\pi(\mathrm{d}a|s)$.
Henceforth, we write $\mathcal{H}^\perp$ for the orthogonal space of the RKHS, where this measure is implicit but given by the context.

In the rest of the current section, we use the notation introduced above and assume that Mercer's Theorem applies.

\begin{lem}\label{lem: if I_Kf=0 then in ortho}
    Let $f\in L^2(\mu(\mathrm{d}s)\pi(\mathrm{d}a|s))$.
    It holds that $I_K(f;\mu,\pi)(a,s)=0$ for all $a\in\mathcal{A},s\in\mathcal{S}$ if and only if $f\in(\mathcal{H}_K)^\perp$.
\end{lem}

\begin{proof}[Proof]
    We write
    \begin{align*}
        &\int_{\mathcal{A}\times\mathcal{S}}\mu(\mathrm{d}s)\pi(\mathrm{d}a|s)f(a,s)I_K(f;\mu,\pi)(a,s)\\
        &\hspace{1cm}=\int_{\mathcal{A}\times\mathcal{S}}\mu(\mathrm{d}s)\pi(\mathrm{d}a|s)(a,s)\int_{\mathcal{A}\times\mathcal{S}}\mu(\mathrm{d}s')\pi(\mathrm{d}a'|s')f(a,s)f(a',s')K((a,s),(a',s'))\\
        &\hspace{1cm}=\sum_{i\geq 1}\lambda_i\left(\int_{\mathcal{A}\times\mathcal{S}}\mu(\mathrm{d}s)\pi(\mathrm{d}a|s)f(a,s)e_i(a,s)\right)^2,
    \end{align*}
    where we used Mercer's Theorem to write $K((a,s),(a',s'))=\sum_{i\geq 1}\lambda_ie_i(a,s)e_i(a',s')$.
    The claim follows.
\end{proof}

\begin{lem}\label{lem: softmax RKHS is larger}
    It holds that
    \begin{align*}
        \widetilde{K}((a,s),(a',s'))
        :=\ &K((a,s),(a',s')) - \int_\mathcal{A}K((b,s),(a',s'))\pi(\mathrm{d}b|s)\\
        &\hspace{0.7cm}-\int_\mathcal{A}K((a,s),(b',s'))\pi(\mathrm{d}b'|s')
        +\int_{\mathcal{A}^2}K((b,s),(b',s'))\pi(\mathrm{d}b|s)\pi(\mathrm{d}b'|s')
    \end{align*}
    is a positive-semidefinite kernel.
    Furthermore, any map $g\in\mathcal{H}_{K}\cap(\mathcal{H}_{\widetilde{K}})^{\perp}$ is such that for $\mu$-almost every $s\in\mathcal{S}$, the map $a\mapsto g(a,s)$ is constant.
\end{lem}

\begin{proof}[Proof]
    Let $d:=\sup\{i\geq 1:\lambda_i>0\}$ where the $\lambda_i$'s are the eigenvalues of $I_K$.
    To prove the first part of the claim, it suffices to show that for all $g\in L^2(\mu(\mathrm{d}s)\pi(\mathrm{d}a|s))$, we have
    \begin{align}\label{eq: to prove for larger softmax RKHS}
        \int_{(\mathcal{S}\times\mathcal{A})^2}\mu(\mathrm{d}s)\pi(\mathrm{d}a|s)\mu(\mathrm{d}s')\pi(\mathrm{d}a'|s')g(a,s)g(a',s')\widetilde{K}((a,s),(a',s'))\geq 0.
    \end{align}
    To ease the notation, for any maps $f,g\in L^2(\mu(\mathrm{d}s)\pi(\mathrm{d}a|s))$ we write
    \begin{align*}
        \langle f,g\rangle
        :=\ &\int_{\mathcal{S}\times\mathcal{A}}\mu(\mathrm{d}s)\pi(\mathrm{d}a|s)f(a,s)g(a,s),\\
        \overline{f}(s)
        :=\ &\int_\mathcal{A}\pi(\mathrm{d}a|s)f(a,s).
    \end{align*}
    
    We now establish \eqref{eq: to prove for larger softmax RKHS}.
    Using that $K((a,s),(a',s'))=\sum_{j\leq d}\lambda_j e_j(a,s)e_j(a',s')$, we get
    \begin{align*}
        \widetilde{K}((a,s),(a',s'))
        =\sum_{j\leq d}\lambda_j (e_j(a,s)-\overline{e}_j(s))(e_j(a',s')-\overline{e}_j(s')).
    \end{align*}
    The left-hand side of \eqref{eq: to prove for larger softmax RKHS} thus reads as
    \begin{align*}
        \sum_{j\leq d}\lambda_j\left(\langle g,e_j\rangle^2
        - 2\langle g,e_j\rangle \langle \overline{g},\overline{e}_j\rangle
        + \langle \overline{g},\overline{e}_j\rangle^2 \right)
        &=\sum_{j\leq d}\lambda_j\left(\alpha_j^2
        - 2\alpha_j \langle \overline{g},\overline{e}_j\rangle
        + \langle \overline{g},\overline{e}_j\rangle^2 \right)\\
        &=\sum_{j\leq d}\lambda_j\left(\alpha_j
        - \langle \overline{g},\overline{e}_j\rangle \right)^2.
    \end{align*}
    The right-hand side above being clearly non-negative, this shows that $\widetilde{K}$ is positive-semidefinite.

    We now turn our attention to the last part of the claim.
    Suppose that $g\in\mathcal{H}_K\cap(\mathcal{H}_{\widetilde{K}})^{\perp}$, so that we can write $g=\sum_{j\leq d}\alpha_je_j$, with $\alpha_j=\langle g, e_j\rangle$.
    Moreover, by Lemma \ref{lem: if I_Kf=0 then in ortho}, we have an equality in \eqref{eq: to prove for larger softmax RKHS}, and we get
    \begin{align*}
        \sum_{j\leq d}\lambda_j(\langle g,e_j\rangle-\langle \overline{g},\overline{e}_j\rangle)^2=0.
    \end{align*}
    We thus necessarily have $\langle g,e_j\rangle=\langle \overline{g},\overline{e}_j\rangle$ for all $j\leq d$.
    In particular,
    \begin{align*}
        \langle g,g\rangle
        &=\sum_{j\leq d} \alpha_j^2
        =\sum_{j\leq d} \alpha_j\langle \overline{g}, \overline{e}_j \rangle
        =\langle \overline{g},\overline{g}\rangle.
    \end{align*}
    On the other hand, Cauchy-Schwarz Inequality shows that if $s\mapsto g(a,s)$ is not constant in $a$ for all $s$, then
    \begin{align*}
        \langle g,g\rangle
        &=\int_{\mathcal{S}}\mu(\mathrm{d}s)\int_\mathcal{A}\pi(\mathrm{d}a|s)g(a,s)^2\\
        &>\int_{\mathcal{S}}\mu(\mathrm{d}s)\left(\int_\mathcal{A}\pi(\mathrm{d}a|s)|g(a,s)|\right)^2\\
        &\geq \langle \overline{g},\overline{g}\rangle.
    \end{align*}
    This is a contradiction and thus implies that $g$ must be constant in $a$.
\end{proof}
Since the feature maps we consider in this work are continuous, necessarily, the map $g$ in the above is constant for $\mu$-almost every $s\in\mathcal{S}$ if and only if it is constant for all $s$ in the support of $\mu$.

\section{Information Geometry}
\label{appendix: information geometry}

The goal of this section is to show that a Pythagorean identity that is used in the forthcoming proof of Theorem \ref{Thm: global cvg outside of the RKHS}.
We use it in the case of a $1$-step policy, and for a fixed state distribution, that we denote by $\nu$ in this section.
Without loss of generality, we also assume to ease the notation that $\tau=1$.

Consider the parametric space of preferences $\mathcal{H}_\Theta:=\{h_\theta=\sum_{k=1}^d\theta_k\psi_k;\theta\in\RR^d\}$, where $\psi$ is the feature map of a positive definite kernel $\Theta$ that we assume to be continuous and bounded.
The space $\mathcal{H}_\Theta$ is the RKHS associated with $\Theta$.
Fix $\vartheta\in\RR^d$ and let $\pi_\vartheta$ be the $1$-step policy induced by the preference $h_\vartheta$ (with baseline policy $\overline{\pi}$ as usual).

Up to reparametrization (potentially decreasing the value of $d$), we can assume without loss of generality that $\mathcal{H}_\Theta:=\{h_\theta=\sum_{k=1}^d\theta_k\varphi_k;\theta\in\RR^d\}$
where $\{\varphi_k;k=1,\ldots,d\}$ is an orthonormal basis of $\mathcal{H}_\Theta$ in $L^2(\nu(\mathrm{d}s)\pi_\vartheta(\mathrm{d}a|s))$.
For $k\geq 1$, define $\varphi_k$ such that $\{\varphi_k;k\geq d+1\}$ is an orthonormal basis of $(\mathcal{H}_\Theta)^\perp$, the orthogonal complement of $\mathcal{H}_\Theta$ in $L^2(\nu(\mathrm{d}s)\pi_\vartheta(\mathrm{d}a|s))$.


The map
\begin{align*}
    F:\theta&\mapsto\int_\mathcal{S}\nu(\mathrm{d}s)\log\int_\mathcal{A}\overline{\pi}(\mathrm{d}a|s)e^{h_\theta(a,s)}\\
    \RR^{d'}&\to\RR
\end{align*}
is strictly convex.
Indeed, it is straightforward to compute
\begin{align*}
    \partial_{\theta_i}F(\theta)
    &=\int_\mathcal{S}\nu(\mathrm{d}s)\int_\mathcal{A}\pi_\theta(\mathrm{d}a|s)\varphi_i(a,s),
\end{align*}
and then
\begin{align*}
    \nabla_\theta\nabla_\theta F(\theta)
    &=\left(\int_\mathcal{S}\nu(\mathrm{d}s)\int_\mathcal{A}\pi_\theta(\mathrm{d}a|s)\varphi_i(a,s)(\varphi_j(a,s)-\EE_{\pi_\theta}[\varphi_j(A,s)])\right)_{i,j\leq d'}\\
    &=\int_{\mathcal{S}}\nu(\mathrm{d}s)\mathrm{Var}_{\pi_\theta}\left[\varphi(A,s)\right],
\end{align*}
where $\mathrm{Var}_{\pi_\theta}\left[\varphi(A,s)\right]$ is the covariance matrix of $\varphi(A,s)$ for $A\sim\pi_\theta(\cdot|s)$.
We thus have that
\begin{align*}
    \theta^T\nabla_\theta\nabla_\theta F(\theta)\theta
    &=\int_\mathcal{S}\nu(\mathrm{d}s)\int_\mathcal{A}\pi_\theta(\mathrm{d}a|s)\left(h_\theta(a,s)-\EE_{\pi_\theta}[h_\theta(A,s)]\right)^2,
\end{align*}
which is non-negative, and null if and only if $h_\theta(a,s)=\EE_{\pi_\theta}[h(A,s)]$ for all $a\in\mathcal{A}$, that is, if and only if $\theta=0$.
This shows that $\nabla_\theta\nabla_\theta F(\theta)$ is strictly convex on $\RR^{d'}$.

As a strictly convex map, $F$ induces a \textit{Bregman divergence} on the quotient space $\mathscr{P}(d')$, where $\mathscr{P}(d')$ is the set of softmax policies with preferences parametrized by $\theta\in\RR^{d'}$.
The Bregman divergence is defined as
\begin{align*}
    D_F(\theta,\theta'):&=F(\theta)-F(\theta')-\nabla F(\theta')\cdot(\theta-\theta')\\
    &=\int_{\mathcal{S}}\nu(\mathrm{d}s)\DKL{\pi_{\theta'}}{\pi_\theta}(s).
\end{align*}
More generally, $D_F(\pi,\pi'):=\int_{\mathcal{S}}\nu(\mathrm{d}s)\DKL{\pi'}{\pi}(s)$ is well defined for any policies $\pi,\pi'\in\mathcal{P}$.
One can define a dual coordinate system $\xi(\theta):=\nabla F(\theta)$, and the manifold $\mathscr{P}(d')$ is said to be \textit{dually flat}, as each coordinate system induces a notion of flatness.

Recall that $\vartheta\in\RR^d$ is fixed and let $\widehat{Q}\in L^2(\nu(\mathrm{d}s)\pi_\vartheta(\mathrm{d}a|s))$, with $\widehat{\pi}$ the induced policy.
In particular, we can write $\widehat{Q}=\sum_{k=1}^\infty \widehat{\theta}_k\varphi_k$.
For all $d'\geq 1$, let $\widehat{\pi}_{d'}\in\mathscr{P}(d')$ be the policy induced by $\widehat{h}_{d'}:=\sum_{k=1}^{d'}\widehat{\theta}_k\varphi_k\in\mathcal{F}_{d'}$.
In particular, by Theorem 1.5 in \citet{amari16} p.27, we have that
\begin{align*}
    \widehat{\pi}_{d}^{d'}:=\mathrm{argmin}_{\pi_\theta\in\mathscr{P}(d)}D_F(\widehat{\pi}_{d'},\pi_\theta)
\end{align*}
is unique in $\mathscr{P}(d)$,
and moreover,
\begin{align*}
    D_F(\widehat{\pi}_{d'},\pi_\vartheta)
    =D_F(\widehat{\pi}_{d'},\widehat{\pi}_{d}^{d'})+D_F(\widehat{\pi}_{d}^{d'},\pi_\vartheta).
\end{align*}
We now extend this identity to the infinite dimensional case, that is, with $\widehat{\pi}$ in place of $\widehat{\pi}_{d'}$.

Firstly, it is clear that $\widehat{\pi}_{d'}\to\widehat{\pi}$ as $d'\to\infty$.
Since $D_F$ is continuous, the Maximum Theorem (see p.116 of \citet{berge63}) entails that
\begin{align*}
    \widehat{\pi}^{\infty}_d:=\lim_{d'\to\infty}\widehat{\pi}^{d'}_d=\mathrm{argmin}_{\pi_\theta\in\mathscr{P}(d)}D_F(\widehat{\pi},\pi_\theta),
\end{align*}
and then
\begin{align*}
    D_F(\widehat{\pi},\pi_\vartheta)
    =D_F(\widehat{\pi},\widehat{\pi}_{d}^{\infty})+D_F(\widehat{\pi}_{d}^{\infty},\pi_\vartheta).
\end{align*}
(Alternatively, one can show the above as Equation (4) in \citet{fukumizu05})

From the above equation we easily deduce the following Lemma:

\begin{lem}\label{lem: same critical points for divergence}
    With the notation introduced above, the vector $\vartheta\in\RR^d$ is a critical point of $\theta\mapsto D_F(\widehat{\pi},\pi_\theta)$ if and only if it is a critical point of $\theta\mapsto D_F(\widehat{\pi}_d^{\infty},\pi_\theta)$.
\end{lem}

\section{Proofs}
\label{appendix: general state space}

\subsection{Matryoshka Policy Gradient Theorem and convergence of the objective}

The following property is simple yet useful for later computations.

\begin{lem}\label{lem: general case value function is optimal value function minus DKL}
    For all $n\geq 1$, all $\pi\in\mathcal{P}_n$ and all $s\in\mathcal{S}$, it holds that
    \begin{align*}
        &V_{\pi}^{(n)}(s)
        -V_*^{(n)}(s)
        =-\tau\EE_{\pi}\left[\left.\sum_{i=0}^{n-1}\DKL{\pi^{(n-i)}}{\pi_*^{(n-i)}}(S_i)\right|S_0=s\right].
    \end{align*}
\end{lem}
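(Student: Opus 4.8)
The plan is to prove the identity by induction on the horizon $n$, using the recursive structure \eqref{eq: gen case recursive definition value function} of the value function and the explicit Boltzmann form \eqref{eq: general case optimal policy} of the optimal policy. For the base case $n=1$ I would start from $V_\pi^{(1)}(s)=\EE_{\pi^{(1)}}[R_0]-\tau\DKL{\pi^{(1)}}{\overline{\pi}}(s)$ (since $V_*^{(0)}\equiv 0$ and $Q_*^{(1)}(a,s)=r(a,s)$). Substituting $\log\pi_*^{(1)}(a|s)=\log\overline{\pi}(a|s)+(Q_*^{(1)}(a,s)-V_*^{(1)}(s))/\tau$ from \eqref{eq: general case optimal policy} into $\DKL{\pi^{(1)}}{\pi_*^{(1)}}(s)$ and using $\sum_a\pi^{(1)}(a|s)=1$, the terms carrying $\overline{\pi}$ and the state-dependent normalisation $V_*^{(1)}(s)$ recombine so that $V_\pi^{(1)}(s)-V_*^{(1)}(s)=-\DKL{\pi^{(1)}}{\pi_*^{(1)}}(s)$, which is the $n=1$ case. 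This is exactly the one-step softmax computation already carried out in the proof of Lemma \ref{lem: general case V_* = log expectation}.

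For the inductive step, assume the identity for horizon $n-1$ and all policies in $\mathcal{P}_{n-1}$, and write $\pi'=T_{n,n-1}(\pi)=(\pi^{(1)},\ldots,\pi^{(n-1)})$. Applying \eqref{eq: gen case recursive definition value function}, adding and subtracting $\EE_{\pi^{(n)}}[V_*^{(n-1)}(S_1)]$, and using \eqref{eq: gen case definition Q-function} to write $\EE_{\pi^{(n)}}[R_0]+\EE_{\pi^{(n)}}[V_*^{(n-1)}(S_1)]=\EE_{\pi^{(n)}}[Q_*^{(n)}(A,s)]$, I would split the gap as
\[
V_{\pi}^{(n)}(s)-V_*^{(n)}(s)
=\underbrace{\Big(\EE_{\pi^{(n)}}[Q_*^{(n)}(A,s)]-\tau\DKL{\pi^{(n)}}{\overline{\pi}}(s)-V_*^{(n)}(s)\Big)}_{(\star)}
+\EE_{\pi^{(n)}}\!\big[V_{\pi'}^{(n-1)}(S_1)-V_*^{(n-1)}(S_1)\big].
\]
The term $(\star)$ is the one-step gap evaluated with the optimal tail $V_*^{(n-1)}$, so the same computation as in the base case --- now with $Q_*^{(n)}$ in place of $r$ --- collapses it to $-\DKL{\pi^{(n)}}{\pi_*^{(n)}}(s)$, the $i=0$ summand. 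For the second term I would invoke the induction hypothesis at the random state $S_1$, which gives $V_{\pi'}^{(n-1)}(S_1)-V_*^{(n-1)}(S_1)=-\EE_{\pi'}\big[\sum_{i=0}^{n-2}\DKL{\pi^{(n-1-i)}}{\pi_*^{(n-1-i)}}(S_i)\mid S_0=S_1\big]$, and then merge the outer $\EE_{\pi^{(n)}}$ with this conditional expectation via the Markov (tower) property into a single expectation along the trajectory of $\pi$ started at $S_0=s$. Re-indexing $j=i+1$ turns these into the summands for $j=1,\ldots,n-1$, with states $S_j$ and sub-policies $\pi^{(n-j)}$; adding $(\star)$ completes the sum from $i=0$ to $n-1$ and yields the claim.

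The hard part is not the algebra --- the cancellation in $(\star)$ is the short, already-established softmax identity --- but the bookkeeping in the inductive step: keeping straight, through the notational convention $V_\pi^{(i)}$ for truncated policies, that the tail of $\pi$ after its first action is governed by $\pi'=T_{n,n-1}(\pi)$ whose $(n-1-i)$-step sub-policy is $\pi^{(n-1-i)}$, and then applying the Markov property to splice the nested conditional expectations into one trajectory expectation with the correct index shift. The absolute continuity and continuity assumptions on the MDP are what let me handle the expectations over the continuous state $S_1$ without measurability obstructions.
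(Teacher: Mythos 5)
Your proposal is correct and follows essentially the same route as the paper: both peel off the first step by substituting the Boltzmann form \eqref{eq: general case optimal policy} of the optimal policy into the recursion \eqref{eq: gen case recursive definition value function}, obtaining $V_{\pi}^{(n)}(s)-V_*^{(n)}(s)=-\DKL{\pi^{(n)}}{\pi_*^{(n)}}(s)+\EE\big[V_{\pi}^{(n-1)}(S_1)-V_*^{(n-1)}(S_1)\,\big|\,S_0=s\big]$, and then conclude by induction. The paper merely compresses the base case, tower-property splicing, and reindexing that you spell out explicitly.
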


\begin{proof}[Proof]
    Recall \eqref{eq: gen case recursive definition value function} and write
    \begin{align*}
        V_{\pi}^{(n)}(s)
        &=\int_{\mathcal{A}}\pi^{(n)}(\mathrm{d}a|s)\bigg(r(a,s) - \tau\log\frac{\pi^{(n)}}{\overline{\pi}}(a|s) + \int_{\mathcal{S}}p(s,a,\mathrm{d}s')V^{(n-1)}_{\pi}(s')\bigg)\\
        &=\int_{\mathcal{A}}\pi^{(n)}(\mathrm{d}a|s)\bigg(V_*^{(n)}(s) - \tau\log\frac{\pi^{(n)}}{\pi_*}(a|s) + \int_{\mathcal{S}}p(s,a,\mathrm{d}s')(V^{(n-1)}_{\pi}(s')-V_*^{(n-1)}(s'))\bigg),
    \end{align*}
    where we plugged in the expression of the optimal policy \eqref{eq: general case optimal policy}.
    We can rewrite the above as
    \begin{align*}
        &V_{\pi}^{(n)}(s)-V_*^{(n)}(s)
        =-\tau\DKL{\pi^{(n)}}{\pi_*^{(n)}} + \EE_\pi\left[V^{(n-1)}_{\pi}(S_1)-V_*^{(n-1)}(S_1)\Big|S_0=s\right].
    \end{align*}
    The claim follows by induction.
\end{proof}

The next lemma provides bounds that are needed to take derivatives of the objective for the proof of Theorem \ref{thm: policy gradient theorem} later on.

\begin{lem}\label{lem:bounds_derivatives_policy}
    For all $a\in\mathcal{A}$, $s\in\mathcal{S}$, $\theta,\theta'\in\RR^P$ and all $i\in\{1,\ldots,n\}$, it holds that
    \begin{enumerate}[label = (\roman*)]
        \item $\Vert\nabla_\theta\pi_\theta^{(i)}(a|s)\Vert_2\leq \frac{2}{\tau}\Vert\psi\Vert\pi_\theta^{(i)}(a|s)$;
        \item $\Vert\nabla_\theta^2\log\pi_\theta^{(i)}(a|s)\Vert_2\leq \frac{2P_i}{\tau^2}\Vert\psi\Vert^2$;
        \item $\int_\mathcal{A}\pi^{(i)}_\theta(\mathrm{d}a|s)\Big\vert\log\frac{\pi_\theta^{(i)}}{\pi_*^{(i)}}(a|s)\Big\vert\leq \DKL{\pi_\theta^{(i)}}{\pi_*^{(i)}}(s) + \frac{3^n}{\tau}\Vert r\Vert_\infty$;
        \item Let $f_\theta:\mathcal{A}\times\mathcal{S}\mapsto\RR$ be differentiable with respect to $\theta\in\RR^P$ such that $\Vert f\Vert_\infty\leq C$ and $\Vert\nabla_\theta f_\theta(a,s)\Vert_2\leq C(1+\Vert \theta\Vert_2)$ for all $(a,s)\in\mathcal{A}\times\mathcal{S}$ for some constant $C>0$, then it holds that
        \begin{align*}
            \nabla_\theta\EE_{\pi_\theta}\left[f_\theta(A_{i},S_{i})\right]
            &= \EE_{\pi_\theta}\bigg[\bigg(\sum_{j=0}^{i}\nabla_\theta\log\pi_\theta^{(n-j)}(A_j|S_j)\bigg)f_\theta(A_i,S_i)\bigg] + \EE_{\pi_\theta}\left[\nabla_\theta f_\theta(A_i,S_i)\right].
        \end{align*}
    \end{enumerate}
\end{lem}

\begin{proof}[Proof]
    $(i)$
    Recall \eqref{eq: gradient policy pi}.
    We have $\nabla_\theta\pi_\theta^{(i)}(a|s)=\pi_\theta^{(i)}(a|s)(\psi^{(i)}(a,s)-\EE_{\pi_\theta}[\psi^{(i)}(A,s)])/\tau$.
    In particular, $\Vert\nabla_\theta \pi_\theta^{(i)}(a|s)\Vert_2 \leq  2\pi_\theta^{(i)}(a|s)\Vert\psi\Vert/\tau$, where we recall that $\Vert\psi\Vert=\sup_{a,s,i}\Vert\psi^{(i)}(a,s)\Vert_2$.
    This proves $(i)$.
    \vspace{0.5cm}
    
    $(ii)$
    We compute
    \begin{align*}
        \nabla_\theta^2\log\pi_\theta^{(i)}(a|s)
        &=\nabla_\theta\big(\psi^{(i)}(a,s-\EE_{\pi_\theta}[\psi^{(i)}(A,s)])\big)/\tau\\
        &=-\frac{1}{\tau^2}\EE_{\pi_\theta^{(i)}}\Big[\psi^{(i)}(A,s)\big(\psi^{(i)}(A,s)-\EE_{\pi_\theta}[\psi^{(i)}(A',s)]\big)^{T}\Big].
    \end{align*}
    Since the $2$-norm of a matrix is upper-bounded by its Frobenius norm, the above entails that $\Vert \nabla_\theta^2\log\pi_\theta(a|s)\Vert_2^2\leq \sum_{k,\ell\leq P}(\nabla_\theta^2\log\pi_\theta(a|s))_{k,\ell}^2\leq \frac{4P^2}{\tau^4}\Vert \psi\Vert^4$, which proves $(ii)$.
    
    \vspace{0.5cm}
    
    $(iii)$
    Let $i\in\{1,\ldots,n\}$.
    We note that
    \begin{align*}
        \int_{\mathcal{A}}\pi_\theta^{(i)}(\mathrm{d}a|s)\bigg|\log\frac{\pi_\theta^{(i)}}{\pi_*^{(i)}}(a|s)\bigg|
        &=\DKL{\pi_\theta^{(i)}}{\pi_*^{(i)}}(s)+2\int_\mathcal{A}\pi_\theta^{(i)}(\mathrm{d}a|s)\log\frac{\pi_*^{(i)}}{\pi_\theta^{(i)}}(a|s)\mathds{1}_{\{\pi_\theta^{(i)}(a|s)<\pi_*^{(i)}(a|s)\}}.
    \end{align*}
    We claim that for all $\theta\in\RR^P$ and all $s\in\mathcal{S}$, it holds that
    \begin{align}\label{eq:bound_to_prove_lemma_lipschitz}
        \int_\mathcal{A}\pi_\theta^{(i)}(\mathrm{d}a|s)\log\frac{\pi_*^{(i)}}{\pi_\theta^{(i)}}(a|s)\mathds{1}_{\{\pi_*^{(i)}(a|s)>\pi_\theta^{(i)}(a|s)\}} 
        &\leq e^{-1}+\frac{3^n}{2\tau}||r||_\infty.
    \end{align}
    To lighten the notation, let us keep the variables $a$ and $s$ implicit in the calculations.
    To establish \eqref{eq:bound_to_prove_lemma_lipschitz}, we write by definition that
    \begin{align*}
        0&\leq\int_{\mathcal{A}}\mathrm{d}\pi_\theta^{(i)}\log\frac{\pi_*^{(i)}}{\pi_\theta^{(i)}}\mathds{1}_{\{\pi_*^{(i)}>\pi_\theta^{(i)}\}}\\
        &=\frac{1}{\tau}\int_\mathcal{A}\mathrm{d}\pi_\theta^{(i)}\left(Q_*^{(i)}-V_*^{(i)}\right)\mathds{1}_{\{\pi_*^{(i)}>\pi_\theta^{(i)}\}}\\
        &\hspace{3cm}-\int_\mathcal{A}\mathrm{d}\overline{\pi}e^{h_\theta^{(i)}/\tau-\log\EE_{\overline{\pi}}[e^{h_\theta^{(i)}/\tau}]}\left(\frac{1}{\tau}h_\theta^{(i)}-\log\EE_{\overline{\pi}}[e^{h_\theta^{(i)}/\tau}]\right)\mathds{1}_{\{\pi_*^{(i)}>\pi_\theta^{(i)}\}}\\
        &\leq \frac{||Q_*^{(i)}-V_*^{(i)}||_\infty}{\tau} -\int_\mathcal{A}\mathrm{d}\overline{\pi}e^{h_\theta^{(i)}/\tau-\log\EE_{\overline{\pi}}[e^{h_\theta^{(i)}/\tau}]}\left(\frac{1}{\tau}h_\theta^{(i)}-\log\EE_{\overline{\pi}}[e^{h_\theta^{(i)}/\tau}]\right)\mathds{1}_{\{\pi_*^{(i)}>\pi_\theta^{(i)}\}}.
    \end{align*}
    To lower bound the second term, note that the integral is of the form $\int_\mathcal{A}\mathrm{d}\overline{\pi}e^{f_\theta}f_\theta$, and one can easily check that $xe^{x}\geq e^{-1}$ for all $x\in\RR$, so that $\int_\mathcal{A}\mathrm{d}\overline{\pi}e^{f_\theta}f_\theta\geq e^{-1}$.
    We thus have proved that
    \begin{align*}
        \int_\mathcal{A}\mathrm{d}\pi_\theta^{(i)}\log\frac{\pi_*^{(i)}}{\pi_\theta^{(i)}}\mathds{1}_{\{\pi_*^{(i)}>\pi_\theta^{(i)}\}} 
        \leq e^{-1}+\frac{||Q_*^{(i)}-V_*^{(i)}||_\infty}{\tau}.
    \end{align*}
    We now bound $||Q_*^{(i)}-V_*^{(i)}||_\infty$.
    Note that $V_*^{(1)}(s)=\tau\log\EE_{\overline{\pi}}[e^{r(A,s)/\tau}]$, so that $||V_*^{(1)}||_\infty\leq ||r||_\infty$, and then,
    \begin{align*}
        ||Q_*^{(1)}||_\infty,||V_*^{(1)}||_\infty&\leq ||r||_\infty.
    \end{align*}
    We reason by induction. We have
    \begin{align*}
        \left|Q_*^{(i+1)}(a,s)-V_*^{(i+1)}(s)\right|
        &=\left|r(a,s)+\int_\mathcal{S}p(s,a,\mathrm{d}s')V_*^{(i)}(s')-\tau\DKL{\pi_*^{(i+1)}}{\overline{\pi}}(s)-V_*^{(i+1)}(s)\right|\\
        &\leq \left|r(a,s)+\int_\mathcal{S}p(s,a,\mathrm{d}s')V_*^{(i)}(s')+\EE_{\pi_*}[Q_*^{(i+1)}(A,s)]\right|\\
        &\leq ||r||_\infty + ||V_*^{(i)}||_\infty + ||Q_*^{(i+1)}||_\infty\\
        &\leq 2||r||_\infty + 2||V_*^{(i)}||_\infty.
    \end{align*}
    In particular, $||V_*^{(i+1)}||_\infty\leq ||Q_*^{(i+1)}||_\infty + 2||r||_\infty + 2||V_*^{(i)}||_\infty\leq 3||r||_\infty + 3||V_*^{(i)}||_\infty$. By induction, we get for all $i=1,\ldots,n$ that
    \begin{align}\label{eq:bound_Q*_and_V*}
        ||V_*^{(i)}||_\infty&\leq \sum_{j=1}^{i-1}3^{j}r=\frac{3^{i-1}}{2}||r||_\infty\leq \frac{3^{n-1}-1}{2}||r||_\infty,\nonumber\\
        ||Q_*^{(i)}||_\infty&\leq ||r||_\infty + ||V_*^{(i-1)}||_\infty\leq \frac{3^{n-2}+1}{2}||r||_\infty.
    \end{align}
    This proves \eqref{eq:bound_to_prove_lemma_lipschitz}, which in turns proves $(iii)$.

    \vspace{0.5cm}

    $(iv)$
    Let $f_\theta$ satisfy the conditions of the statement.
    For all $i\in\{0,\ldots,n-1\}$, the state distribution satisfies
    \begin{align*}
        \mathbf{m}^{(n-i)}_{\pi_\theta}(\mathrm{d}s)
        &=\int_\mathcal{S}\mathbf{m}^{(n-i+1)}_{\pi_\theta}(\mathrm{d}s')\int_\mathcal{A}\pi_\theta^{(n-i+1)}(\mathrm{d}a|s')p(s',a,\mathrm{d}s)\\
        &=\int_\mathcal{S}\nu(\mathrm{d}s_0)\int_\mathcal{A}\pi_\theta^{(n)}(\mathrm{d}a_0|s_0)\int_\mathcal{S}p(s_0,a_0,\mathrm{d}s_1)\ldots\\
        &\hspace{2cm}\ldots\times\int_\mathcal{A}\pi_\theta^{(n-i+1)}(\mathrm{d}a_{i-1}|s_{i-1})\int_\mathcal{S}p(s_{i-1},a_{i-1},\mathrm{d}s).
    \end{align*}
    Note that $\pi_\theta^{(i)}=\overline{\pi}\frac{e^{h_\theta^{(i)}/\tau}}{\EE_{\overline{\pi}}[e^{h_\theta^{(i)}/\tau}]}$, with $\Vert h_\theta^{(i)}\Vert_\infty\leq \sup_{a,s}\Vert\theta^{(i)}\Vert_2\Vert \psi^{(i)}(a,s)\Vert_2<\Vert\psi\Vert$, which entails that for all $\theta\in\RR^P$, being integrable with respect to $\mathbf{m}_{\pi_\theta}^{(i)}(\mathrm{d}s)\pi_\theta^{(i)}(\mathrm{d}a|s)$ is equivalent to being integrable with respect to $\mathbf{m}_{\overline{\pi}}^{(i)}(\mathrm{d}s)\overline{\pi}^{(i)}(\mathrm{d}a|s)$.
    On the other hand, for any $s_0,a_0,\ldots,s_{i},a_{i}$, we have that
    \begin{align*}
        \nabla_\theta\prod_{\ell=0}^i\pi_\theta^{(n-\ell)}(a_\ell|s_\ell)
        &=\left(\prod_{j=0}^i\pi_\theta^{(n-j)}(a_j|s_j)\right)\sum_{j=0}^{i}\nabla_\theta\log\pi_\theta^{(n-j)}(a_j|s_j)
    \end{align*}
    Since $\Vert\log\pi_\theta^{(n-j)}(a_j|s_j)\Vert_2\leq \frac{2}{\tau}\Vert\psi\Vert$ by $(i)$, from Measure Theory, we know that we can differentiate inside the integral and write
    \begin{align*}
        &\nabla_\theta \int_{\mathcal{S}}\mathbf{m}_{\pi_\theta}^{(\ell)}(\mathrm{d}s)\int_\mathcal{A}\pi_\theta^{(\ell)}(\mathrm{d}a|s)f_\theta(a,s)\\
        &\hspace{0.3cm}=\int\!\cdots\!\int \nu(\mathrm{d}s_0)\left(\prod_{i=0}^\ell \pi_\theta^{(n-i)}(\mathrm{d}a_i|s_i) p(s_i,a_i\mathrm{d}s_{i+1})\right)\left(\sum_{j=0}^{\ell}\nabla_\theta\log\pi_\theta^{(n-j)}(a_j|s_j)\right) f_\theta(a_\ell,s_\ell)\\
        &\hspace{2cm}+\int_{\mathcal{S}}\mathbf{m}_{\pi_\theta}^{(\ell)}(\mathrm{d}s)\int_\mathcal{A}\pi_\theta^{(\ell)}(\mathrm{d}a|s)\nabla_\theta f_\theta(a,s),
    \end{align*}
    which is equivalent to the claim and thus concludes the proof.
\end{proof}

We show below that in expectation, the MPG update \eqref{eq: update cascade learning} is proportional to the gradient of the objective.
It is the only statement where we do not assume perfect gradient update; everywhere else, we assume that $\theta_{t+1}=\theta_t+\eta\nabla_\theta J_n(\pi_t)$.

\begin{lem}\label{lem:Matryoshka_policy_gradient_thm}
    For $\theta_t$ constructed as in \eqref{eq: update cascade learning}, it holds that $\EE[\theta_{t+1}-\theta_t]\propto\nabla_{\theta}J_n(\pi_t)$.
\end{lem}

\begin{proof}[Proof]
    Recall that $\Vert r\Vert_\infty<\infty$ and note that 
    \begin{align*}
        \Big\vert\log\frac{\pi_t^{(i)}}{\overline{\pi}}(a|s)\Big\vert
        &\leq \big\vert h_\theta^{(i)}(a,s)/\tau-\log \EE_{\overline{\pi}}[e^{h_\theta^{(i)}(A,s)/\tau}] \big\vert\\
        &\leq 2\Vert\theta\Vert_2\Vert\psi\Vert,
    \end{align*}
    where we recall that $\Vert\psi\Vert=\sup_{a,s,i}\Vert\psi^{(i)}(a,s)\Vert_2$.
    Hence, Lemma \ref{lem:bounds_derivatives_policy}$(iv)$ applies in the computations below.
        
    Let $\mathbf{m}_\pi^{(i)}$ denote the law of $S_{n-i}$, that is, the $(n-i)$-th visited state under $\pi$. 
    The distribution of the sequence $S_0,A_0,\ldots,A_{n-i-1},S_{n-i}$ is not influenced by the parameters $\theta^{(i)}$, thus we can write
    \begin{align*}
        \nabla_{\theta^{(i)}}J_n(\pi_t)
        &=\int_{\mathcal{S}}\nabla_{\theta^{(i)}}V_{\pi_t}^{(n)}(s)\nu_0(\mathrm{d}s)\\
        &=\nabla_{\theta^{(i)}}\bigg(\EE_{\pi_t}\bigg[\sum_{\ell=0}^{n-i-1}R_\ell-\tau\log\frac{\pi_t^{(n-\ell)}}{\overline{\pi}}(A_{\ell}|S_\ell)\bigg]\\
        &\hspace{2cm}+\EE_{S_{n-i}\sim\mathbf{m}_{\pi_t}^{(i)}}\bigg[\EE_{\pi_t}\bigg[\sum_{\ell=n-i}^{n}R_\ell-\tau\log\frac{\pi_t^{(n-\ell)}}{\overline{\pi}}(A_{\ell}|S_\ell)
        \Big|S_{n-i}\bigg]\bigg]\bigg)\\
        &=0+\EE_{S_{n-i}\sim\mathbf{m}_{\pi_t}^{(i)}}\bigg[\nabla_{\theta^{(i)}}V_{\pi_t}^{(i)}(S_{n-i})\bigg],
    \end{align*}
    where we have used the Markov property.
    We then have that
    \begin{align*}
        \nabla_{\theta^{(i)}}V_{\pi_t}^{(i)}(S_{n-i})
        &=\nabla_{\theta^{(i)}}\EE_{T_{n,i}(\pi_t)}\bigg[\sum_{\ell=n-i}^{n}R_\ell-\tau\log\frac{\pi^{(n-\ell)}_t}{\overline{\pi}}(A_\ell|S_\ell)\bigg|S_{n-i}\bigg]\\
        &=\EE_{\pi_t}\bigg[\bigg(\sum_{\ell=n-i}^{n}\bigg(R_\ell-\tau\log\frac{\pi^{(n-\ell)}_t}{\overline{\pi}}(A_\ell|S_\ell)\bigg)-\tau\bigg)
        \nabla\log\pi_t^{(i)}(A_{n-i}|S_{n-i})\Big|S_{n-i}\bigg]\\
        &=\EE_{\pi_t}\bigg[\sum_{\ell=n-i}^{n}\bigg(R_\ell-\tau\log\frac{\pi^{(n-\ell)}_t}{\overline{\pi}}(A_\ell|S_\ell)\bigg)
        \nabla\log\pi_t^{(i)}(A_{n-i}|S_{n-i})\Big|S_{n-i}\bigg],
    \end{align*}
    where we have used \eqref{eq: softmax gradient cancels expected constant} to get rid of $\tau$.
    
    Recalling the MPG update \eqref{eq: update cascade learning}, we thus have proved that $\EE[\theta_{t+1}-\theta_t]=\eta\nabla_{\theta}J_n(\pi_t)$.
\end{proof}

\begin{proof}[\textbf{Proof of Theorem \ref{thm: policy gradient theorem}}]
    The strategy is to show that $\theta\mapsto \nabla_\theta J_n(\pi_\theta)$ is Lipschitz by bounding the $2$-norm of the Hessian of $J_n$ along the training trajectory.
    It is standard in Optimisation that this implies that for $\eta$ smaller than $2$ over the Lipschitz constant, the objective is monotonically increasing during gradient ascent, to finally deduce the convergence of $J_n(\pi_t)$ as $t\to\infty$.

    Recall that by Lemma \ref{lem:bounds_derivatives_policy}$(i)$, for all $i\in\{0,\ldots,n-1\}$, for all $a\in\mathcal{A}$ and $s\in\mathcal{S}$, we have
    \begin{align*}
        \Vert\nabla_\theta \log\frac{\pi_\theta^{(n-i)}}{\pi_*^{(n-i)}}(a|s)\Vert_2
        &\leq \frac{2}{\tau}\Vert\psi\Vert.
    \end{align*}
    Moreover, for all $(a,s)\in\mathcal{A}\times\mathcal{S}$,
    \begin{align}\label{eq:logpi/pi*_is_bounded}
        \Big|\log\frac{\pi_\theta^{(n-i)}}{\pi_*^{(n-i)}}(a|s)\Big|
        &\leq \big| h_\theta^{({(n-i)})}(a,s)/\tau - \log\EE_{\overline{\pi}}[e^{h_\theta^{(n-i)}(A,s)/\tau}]\big| + \frac{1}{\tau}\Vert Q_*^{(n-i)}-V_*^{(n-i)}\Vert_\infty\nonumber\\
        &\leq \frac{1}{\tau}\left(\Vert\theta\Vert_2 \Vert\psi\Vert + 3^n\Vert r\Vert_\infty \right),
    \end{align}
    where we used \eqref{eq:bound_Q*_and_V*}.
    Hence, we can apply Lemma \ref{lem:bounds_derivatives_policy}$(iv)$ to differentiate $J_n(\pi_\theta)$, thus obtaining
    \begin{align}\label{eq:gradient_objective}
        \nabla_\theta J_n(\pi_\theta)
        &=-\tau\sum_{i=0}^{n-1}\nabla_\theta\EE_{\pi_\theta}\left[\log\frac{\pi_\theta^{(n-i)}}{\pi_*^{(n-i)}}(A_i|S_i)\right]\nonumber\\
        &=-\tau\sum_{i=0}^{n-1}\EE_{\pi_\theta}\bigg[\bigg(\sum_{j=0}^{i}\nabla_\theta\log\pi_\theta^{(n-j)}(A_j|S_j)\bigg)\log\frac{\pi_\theta^{(n-i)}}{\pi_*^{(n-i)}}(A_i|S_i)\bigg]\nonumber\\
        &\hspace{6cm} + \EE_{\pi_\theta}\left[\nabla_\theta \log\frac{\pi_\theta^{(n-i)}}{\pi_*^{(n-i)}}(A_i|S_i)\right]\nonumber\\
        &=-\tau\sum_{i=0}^{n-1}\sum_{j=0}^{i}\EE_{\pi_\theta}\bigg[\nabla_\theta\log\pi_\theta^{(n-j)}(A_j|S_j)\log\frac{\pi_\theta^{(n-i)}}{\pi_*^{(n-i)}}(A_i|S_i)\bigg],
    \end{align}
    where we used that $\EE_{\pi_\theta}[\nabla_\theta \log\pi_\theta^{(n-i)}(A_i|S_i)]=0$.
    Exchanging the order of summation and focusing on the components of the gradient $\nabla_{\theta^{(n-j)}}$ for $j\in\{0,\ldots,n-1\}$ fixed, we get
    \begin{align}\label{eq:grad_J_wrt_n-j}
        \nabla_{\theta^{(n-j)}} J_n(\pi_\theta)
        &=-\tau\sum_{i=j}^{n-1}\EE_{\pi_\theta}\bigg[\nabla_{\theta^{(n-j)}}\log\pi_\theta^{(n-j)}(A_j|S_j)\log\frac{\pi_\theta^{(n-i)}}{\pi_*^{(n-i)}}(A_i|S_i)\bigg].
    \end{align}

    To compute the Hessian of $J_n(\pi_\theta)$, we need to differentiate once more the expectation.
    Fix $j\geq j'\in\{0,\ldots,n-1\}$, we compute the components of the Hessian of the form $\nabla_{\theta^{(n-j')}}\nabla_{\theta^{(n-j)}}J_n(\pi_\theta)$.
    For $j'<j$, the terms inside the expectation do not depend on $\theta^{(n-j')}$, so that Lemma \ref{lem:bounds_derivatives_policy}$(iv)$ trivially applies and yields
    \begin{align*}
        &\nabla_{\theta^{(n-j')},\theta^{(n-j)}}^{2}J_n(\pi_\theta)\\
        &\hspace{0.7cm}= -\tau\sum_{i=j}^{n-1}\EE_{\pi_\theta}\bigg[\nabla_{\theta^{(n-j')}}\log\pi_\theta^{(n-j')}(A_{j'}|S_{j'})(\nabla_{\theta^{(n-j)}}\log\pi_\theta^{(n-j)}(A_{j}|S_{j}))^T\log\frac{\pi_\theta^{(n-i)}}{\pi_*^{(n-i)}}(A_i|S_i)\bigg].
    \end{align*}
    Lemma \ref{lem:bounds_derivatives_policy} shows that
    \begin{align*}
        \Vert \nabla_{\theta^{(n-j')},\theta^{(n-j)}}^{2}J_n(\pi_\theta)\Vert_2
        &\leq \sum_{i=j}^{n-1}\frac{4\Vert \psi\Vert^2}{\tau}\EE_{\pi_{\theta}}\bigg[\Big|\log\frac{\pi_\theta^{(n-i)}}{\pi_*^{(n-i)}}(A_i|S_i)\Big|\bigg]\\
        &\leq n\frac{4\Vert \psi\Vert^2}{\tau^2}(J_n(\pi_*)-J_n(\pi_\theta) + 3^n\Vert r\Vert_\infty),
    \end{align*}
    where we used Lemma \ref{lem: general case value function is optimal value function minus DKL} to bound the expectation of the Kullback-Leibler divergences by the performance gap.
    For $j'=j$, to apply Lemma \ref{lem:bounds_derivatives_policy}$(iv)$, we check that the gradient of the terms in the expectation of \eqref{eq:grad_J_wrt_n-j} is bounded, that is,
    \begin{align*}
        (a,s)\mapsto\ &\nabla_{\theta^{(n-j)}}^2\log\pi_\theta^{(n-j)}(a|s)\EE_{\pi_\theta}\bigg[\log\frac{\pi_\theta^{(n-i)}}{\pi_*^{(n-i)}}(A_i|S_i)\Big|A_j=a,S_j=s\bigg]\\
        &\hspace{1cm}+\nabla_{\theta^{(n-j)}}\log\pi_\theta^{(n-j)}(a|s)(\nabla_{\theta^{(n-j)}}\log\pi_\theta^{(n-j)}(a|s))^T.
    \end{align*}
    By \eqref{eq:logpi/pi*_is_bounded} and Lemma \ref{lem:bounds_derivatives_policy}$(i)$ and $(ii)$, each coordinate of the above matrix is bounded by $C\Vert\theta\Vert_2$ for some constant $C>0$.
    Hence Lemma \ref{lem:bounds_derivatives_policy}$(iv)$ applies and the Hessian for $j=j'$ has the additional terms
    \begin{align*}
        &\sum_{i=j}^{n-1}\EE_{\pi_\theta}\bigg[\nabla_{\theta^{(n-j)}}^2\log\pi_\theta^{(n-j)}(A_{j}|S_{j})\log\frac{\pi_\theta^{(n-i)}}{\pi_*^{(n-i)}}(A_i|S_i)\bigg]\\
        &\hspace{3cm}+\EE_{\pi_\theta}\bigg[\nabla_{\theta^{(n-j)}}\log\pi_\theta^{(n-j)}(A_{j}|S_{j})(\nabla_{\theta^{(n-j)}}\log\pi_\theta^{(n-j)}(A_j|S_j))^T\bigg]
    \end{align*}
    We get by Lemma \ref{lem:bounds_derivatives_policy}
    \begin{align*}
        \Vert \nabla_{\theta^{(n-j)},\theta^{(n-j)}}^{2}J_n(\pi_\theta)\Vert_2
        &\leq 4n\frac{\Vert \psi\Vert^2}{\tau^2}(J_n(\pi_*)-J_n(\pi_\theta) + 3^n\Vert r\Vert_\infty)\\
        &\hspace{0.3cm}+ \sum_{i=j}^{n-1} \bigg(\EE\left[\Vert\nabla_{\theta^{(n-j)}}^2\log\pi_\theta^{(n-j)}(A_{j}|S_{j})\Vert_2\Big\vert \log\frac{\pi_\theta^{(n-i)}}{\pi_*^{(n-i)}}(A_i|S_i)\Big\vert\right]\\
        &\hspace{0.3cm}+ \EE_{\pi_\theta}\bigg[\Big\Vert\nabla_{\theta^{(n-j)}}\log\pi_\theta^{(n-j)}(A_{j}|S_{j})(\nabla_{\theta^{(n-j)}}\log\pi_\theta^{(n-j)}(A_j|S_j))^T\Big\Vert_2\bigg]\bigg)\\
        &\leq 4n\frac{\Vert \psi\Vert^2}{\tau^2}(J_n(\pi_*)-J_n(\pi_\theta) + 3^n\Vert r\Vert_\infty)\\
        &\hspace{0.3cm}+ 2n \frac{P}{\tau^3}\Vert\psi\Vert^2 (J_n(\pi_*)-J_n(\pi_\theta) + 3^n\Vert r\Vert_\infty)
        + 4n\frac{\Vert\psi\Vert^2}{\tau^2}\\
        &=2n\frac{\Vert\psi\Vert^2}{\tau^2} \Big(2+\frac{P}{\tau}\Big)(J_n(\pi_*)-J_n(\pi_\theta) + 3^n\Vert r\Vert_\infty) + 4n\frac{\Vert\psi\Vert^2}{\tau^2}.
    \end{align*}

    Finally, to obtain a bound on $\Vert \nabla_{\theta}^{2}J_n(\pi_\theta)\Vert_2$, we only need to sum over $j,j'\in\{0,\ldots,n-1\}$. This yields
    \begin{align*}
        \nabla_{\theta}^{2}J_n(\pi_\theta)\Vert_2
        &\leq 2n^2\frac{\Vert\psi\Vert^2}{\tau^2} \Big(2+\frac{P}{\tau}\Big)(J_n(\pi_*)-J_n(\pi_\theta) + 3^n\Vert r\Vert_\infty) + 4n^2\frac{\Vert\psi\Vert^2}{\tau^2}\\
        &\hspace{2cm} + (n-1)4n^2 \frac{\Vert \psi\Vert^2}{\tau^2}(J_n(\pi_*)-J_n(\pi_\theta) + 3^n\Vert r\Vert_\infty)\\
        &\leq 4(n^2+n^3)\Big(2+\frac{P}{\tau}\Big)\frac{\Vert\psi\Vert^2}{\tau^2}(J_n(\pi_*)-J_n(\pi_\theta) + 3^n\Vert r\Vert_\infty) + 4n^2\frac{\Vert\psi\Vert^2}{\tau^2}\\
        &=L(\theta).
    \end{align*}

    We thus have shown that $\nabla_\theta J_n(\pi_\theta)$ is locally Lipschitz with constant $L(\theta)$.
    Since $\theta\mapsto L(\theta)$ is monotonically decreasing as $J_n(\pi_*)-J_n(\pi_\theta)$ decreases, if $\eta<2/L(\theta_0)$ at the start of training, as explained at the beginning of the proof, it implies by induction that $\eta<2/L(\theta_t)$ for all $t\geq 0$, which entails the claim and concludes the proof.
\end{proof}

\subsection{On the optimal policy}

\begin{proof}[\textbf{Proof of Lemma \ref{lem: general case V_* = log expectation}}]
    By definition, we write
    \begin{align*}
        V_*^{(n)}(s)
        &=\tau\int_{\mathcal{A}}\pi_*^{(n)}(\mathrm{d}a|s)\left(Q_*^{(n)}(a,s)-\tau\log\frac{\pi_*^{(n)}}{\overline{\pi}}(a|s)\right)\\
        &=\tau\log\EE_{\overline{\pi}}\left[\exp(Q_*^{(n)}(A,s)/\tau)\right]
        \int_{\mathcal{A}}\overline{\pi}(\mathrm{d}a|s)\frac{\exp\left(Q_*^{(n)}(a,s)/\tau\right)}{\EE_{\overline{\pi}}\left[\exp\left(Q_*^{(n)}(A,s)/\tau\right)\right]}\\
        &=\tau\log\EE_{\overline{\pi}}\left[\exp\left(Q_*^{(n)}(A,s)/\tau\right)\right],
    \end{align*}
    as claimed, which concludes the proof.
\end{proof}

\begin{proof}[\textbf{Proof of Proposition \ref{prop: extending horizon converges to standard optimal policy}}]
    Assume that $\nu$ has full support. 
  
    (i) Let $\pi\in\mathcal{P}$ be any standard policy, and let $\pi_n=(\pi,\ldots,\pi)\in\mathcal{P}_n$.
    By definition of the standard infinite-horizon discounted objective $J_\infty$, using the dominated convergence theorem (rewards are bounded), we have that $J_n(\pi_n)\to J_\infty(\pi)$.
    In particular, we get that $\pi_{*,n}^{(n)}$ achieves a performance arbitrarily close to that of $\pi_{*,\infty}$ in the infinite horizon discounted setting, and since the optimal policy of $J_\infty$ is unique ($\nu$-almost everywhere), we deduce that $\pi_{*,n}^{(n)}\to\pi_{*,\infty}$ as $n\to\infty$.

    (ii)
    Suppose that $J_1(\pi_{*,1})>J_1(T_{n,1}(\pi_{*,n}))$, that is
    \begin{align*}
        \int_{\mathcal{S}}V_{\pi_{*,1}}^{(1)}(s)\nu(\mathrm{d}s)
        >\int_{\mathcal{S}}V_{\pi_{*,n}}^{(1)}(s)\nu(\mathrm{d}s).
    \end{align*}
    In particular, the set $\widetilde{\mathcal{S}}:=\{s\in\mathcal{S}:\ V_{\pi_{*,1}}^{(1)}(s)>V_{\pi_{*,n}}^{(1)}(s)\}$ is non-empty and $\nu(\widetilde{\mathcal{S}})$.
    Furthermore, by optimality, $s\in\mathcal{S}\setminus\widetilde{\mathcal{S}}$ if and only if $V_{\pi_{*,1}}^{(1)}(s)=V_{\pi_{*,n}}^{(1)}(s)$.
    Let $\widetilde{\pi}_{*,n}\in\mathcal{P}_n$ be identical to $\pi_{*,n}$ except for the 1-step policy where $\pi_{*,n}^{(1)}$ is replaced by $\pi_{*,1}$.
    Then, the recursive structure of the value function \eqref{eq: gen case recursive definition value function} entails that $J_n(\widetilde{\pi}_{*,n})>J_n(\pi_{*,n})$, which is a contradiction.
    Therefore, $T_{n,1}(\pi_{*,n})=\pi_{*,1}$.
    
    Then, by induction and using the recursive structure of the value function, the same argument shows that $T_{n,m}(\pi_{*,n})=\pi_{*,m}$ for all $m=2,\ldots,n-1$, which concludes the proof.
\end{proof}

\subsection{On the convergence of training}
\label{app:on_the_cvg_of_training}

By Theorem \ref{thm: policy gradient theorem}, we know that $J_n(\pi_t)$ converges monotonically as $t\to\infty$.
However, this does not ensure that $\pi_t$ converges, and a fortiori that $\theta_t$ converges.
In this section, we prove two results of importance in establishing the global convergences of Theorem \ref{thm: general case global optimality} and Theorem \ref{Thm: global cvg outside of the RKHS}

Below, we show in Lemma \ref{lem:converging_subsequence_of_policies} that the sequence of policies visited during training is \textit{relatively compact}, that is, any of its subsequences admits a \textit{weakly converging} subsequence.
A sequence of measure $(\mu_k)_{k\geq 0}$ is said to converge weakly if and only if $\lim_{k\to\infty}\int f\mathrm{d}\mu_k=\int f\mathrm{d}\mu$ for every continuous bounded map $f$.

Then, we show in Lemma \ref{lem:parameters_remain_bounded} that the parameters of a converging subsequence $(\pi_{t_k})_{k\geq 0}$ remain uniformly bounded, which implies that any limit of the parameters $\theta_{t_k}$ belongs to $\RR^P$.
In particular, the subsequences of policies converging weakly during training actually have their parameters converging inside $\RR^P$, so that the convergence is in the stronger sense of \eqref{eq: def convergence of policies}.

\begin{lem}\label{lem:converging_subsequence_of_policies}
    Under the assumptions of Theorem \ref{thm: policy gradient theorem}, for all $i=1,\ldots,n$, the sequence of probability measures $(\mathbf{m}_{\pi_t}^{(i)}(\mathrm{d}s)\pi_t^{(i)}(\mathrm{d}a))_{t\geq 0}$ on $\mathcal{S}\times\mathcal{A}$ is relatively compact.
\end{lem}

\begin{proof}[Proof]
    By Prohorov's theorem (Theorem 5.1 in \citet{billingsley13}), it suffices to show that $(\mathbf{m}_{\pi_t}^{(i)}(\mathrm{d}s)\pi_t^{(i)}(\mathrm{d}a))_{t\geq 0}$ is \textit{tight} for all $i=1,\ldots,n$.
    We say that a sequence of probability measures $\mu_t$ is tight if and only if for all $\epsilon>0$, there exists a compact set $K_\epsilon$ such that $\mu_t(K_\epsilon)>1-\epsilon$. Roughly speaking, this ensures that no mass escapes at infinity.

    Starting with $i=n$, we first show that for every $\epsilon>0$, there exists a compact set $K_{\epsilon}\subset \mathcal{S}\times\mathcal{A}$ such that $\limsup_{t\to\infty}\int_{K_\epsilon}\mathrm{d}\nu\mathrm{d}\pi_t^{(n)}>1-\epsilon$.
    By contradiction, suppose that this is not the case, then there exists $\epsilon>0$ such that $\limsup_{t\to\infty}\int_{K^c}\mathrm{d}\nu\mathrm{d}\pi_t^{(n)}\geq \epsilon$ for all compact $K\subset\mathcal{S}\times\mathcal{A}$, where $K^{c}$ is the complement of $K$.
    Let $\delta>0$ be arbitrarily smaller that $\epsilon$ and consider a compact $K_{\delta}\subset\mathcal{S}\times\mathcal{A}$ such that $\int_{K_\delta^c}\mathrm{d}\nu\mathrm{d}\overline{\pi}<\delta$, then necessarily, we have
    \begin{align*}
        \limsup_{t\to\infty}\int_{K_\epsilon^c}\nu(\mathrm{d}s)\pi_t^{(n)}(\mathrm{d}a|s)\log\frac{\pi_t^{(n)}}{\overline{\pi}}(a|s)
        &=-\limsup_{t\to\infty}\int_{K_\epsilon^c}\nu(\mathrm{d}s)\pi_t^{(n)}(\mathrm{d}a|s)\log\frac{\overline{\pi}}{\pi_t^{(n)}}(a|s)\\
        &\geq -\limsup_{t\to\infty}\log\frac{\int_{K_\epsilon^c}\nu(\mathrm{d}s)\overline{\pi}(\mathrm{d}a|s)}{\int_{K_\epsilon^c}\nu(\mathrm{d}s)\pi_t^{(n)}(\mathrm{d}a|s)}\\
        &\geq -\log \frac{\delta}{\epsilon},
    \end{align*}
    where we used Jensen's inequality by concavity of the logarithm.
    Since $\delta>0$ is arbitrary, this shows that $\limsup_{t\to\infty}\int_{\mathcal{S}}\DKL{\pi_t^{(n)}}{\overline{\pi}}(s)\nu(\mathrm{d}s)=\infty$, which contradicts the fact that $J_n(\pi_t)=\sum_{i=0}^n\EE_{\pi_t}[R_i-\DKL{\pi_t^{(n-i)}}{\overline{\pi}}(S_i)]$ converges to a finite value.
    
    Hence $(\nu(\mathrm{d}s)\pi_{t}^{(n)}(\mathrm{d}a))_{t\geq 0}$ is tight and then relatively compact.
    
    To end the proof, we reason by induction as follows: let $1<i\leq n$ and consider a subsequence $(\pi_{t_k})_{k\geq 0}$ such that
    $\mathbf{m}_{\pi_t}^{(j)}(\mathrm{d}s)\pi_{t_k}^{(j)}(\mathrm{d}a)$ converges weakly toward $\mathbf{m}_{\pi_\infty}^{(j)}(\mathrm{d}s)\pi_\infty^{(j)}(\mathrm{d}a)$ for all $j=i,\ldots,n$, for some policy $\pi_\infty$.
    Note that $\mathbf{m}_{\pi_{t_k}}^{(i-1)}$ only depends on $\pi_{t_k}^{(j)}$ for $j\in\{i,\ldots,n\}$, so that for any Borel subset $B\subset\mathcal{S}$,
    \begin{align*}
        \mathbf{m}_{\pi_{t_k}}^{(i-1)}(B)
        &=\int_{\mathcal{S}}\mathbf{m}_{\pi_{t_k}}^{(i)}(\mathrm{d}s')\int_{\mathcal{A}}\pi_{t_k}^{(i)}(\mathrm{d}a)p(s',a,B)\\
        &\underset{k\to\infty}{\longrightarrow}
        \int_{\mathcal{S}}\mathbf{m}_{\pi_{\infty}}^{(i)}(\mathrm{d}s')\int_{\mathcal{A}}\pi_{\infty}^{(i)}(\mathrm{d}a)p(s',a,B),
    \end{align*}
    where the convergence is in the weak sense and where we used the fact that $(s,a)\mapsto p(s,a,B)$ is continuous (and obviously bounded).
    Then, the same reasoning by contradiction as for the case $i=n$ applies, by letting $K_{\delta}\subset\mathcal{S}\times\mathcal{A}$ be a compact subset such that $\int_{K_\delta^c}\mathrm{d}\mathbf{m}_{\pi_{t_k}}^{(i-1)}\mathrm{d}\overline{\pi}<\delta$.
    This concludes the proof.
\end{proof}

\begin{lem}\label{lem:parameters_remain_bounded}
    Under the assumptions of Theorem \ref{thm: policy gradient theorem}, it holds that $\sup_{t\geq 0}||\theta_t||<\infty$.
\end{lem}

\begin{proof}[Proof]
    Let $\theta_{t_k}$ be a subsequence of $\theta_t$ such that $\pi_{t_k}$ converges weakly to some $\pi_\infty$, which exists thanks to Lemma \ref{lem:converging_subsequence_of_policies}.
    Assume that there exists $i\in\{1,\ldots,n\}$ such that $||\theta_{t_k}^{(i)}||\to\infty$ as $k\to\infty$, and let $i$ be the smallest such integers.
    Let $\underline{\theta}_{t_k}^{(i)}:=\frac{\theta_{t_k}^{(i)}}{||\theta_{t_k}^{(i)}||_2}$ and $\underline{\theta}_\infty^{(i)}:=\lim_{k\to\infty}\underline{\theta}_{t_k}^{(i)}$ (to ensure convergence, one can always take subsequences since $\underline{\theta}_{t_k}$ lives in a compact sphere.)
    Without loss of generality, we choose the subsequence such that
    \begin{align}\label{eq:inproof_to_contradict_bounded_param}
        \underline{\theta}_{\infty}^{(i)}\cdot\nabla_{\theta^{(i)}} J_n(\pi_{t_k})>0.
    \end{align}
    Indeed, since $||\theta_{t_k}^{(i)}||_2\to\infty$ and $\underline{\theta}_{t_k}\to\underline{\theta}_\infty$, necessarily, $\nabla_{\theta^{(i)}} J_n(\pi_{t_k})$ must point inside the half-plane $\{v\in\RR^{P_i}:v\cdot\underline{\theta}_\infty^{(i)}>0\}$ infinitely many times.
    We now show that this leads to a contradiction.
    
    Firstly, there exists a constant $C$ independent of $t_k$ such that for all $j<i$, it holds that $||\theta_{t_k}^{(j)}||_2\leq C$.
    We thus have for all $s\in\mathcal{S}$ that
    \begin{align*}
        \DKL{\pi_{t_k}^{(j)}}{\overline{\pi}}(s)
        &=\int_{\mathcal{A}}\pi_{t_k}^{(j}(\mathrm{d}a|s)\left(h_{t_k}^{(j)}(a,s)/\tau-\log\EE_{\overline{\pi}}[e^{h_{t_k}^{(j)}(A,s)/\tau}]\right)\\
        &\leq \frac{2}{\tau}||\theta_{t_k}^{(j)}||_2||\psi^{(j)}||_\infty\\
        &\leq \frac{2}{\tau}C||\psi^{(j)}||_\infty
    \end{align*}
    In particular, by definition of $Q$-functions, this yields
    \begin{align}\label{eq:inproof_uniform_bound_of_Q}
        ||Q_{\pi_{t_k}}^{(i)}||_\infty\leq i\left(||r||_\infty + 2C ||\psi||_\infty\right).
    \end{align}
    On the other hand, by Lemma \ref{lem: general case value function is optimal value function minus DKL}, it holds that
    \begin{align*}
        Q_{\pi_t}^{(i)}(a,s)
        &=r(a,s) + \int_\mathcal{S}p(s,a,\mathrm{d}s')V_{\pi_t}^{(i-1)}(s')\\
        &=\tau\log\frac{\pi_*^{(i)}}{\overline{\pi}}(a|s) - \int_\mathcal{S}p(s,a,\mathrm{d}s')(V_{\pi_t}^{(i-1)}(s')-V_*^{(i)}(s'))\\
        &=\tau\log\frac{\pi_*^{(i)}}{\overline{\pi}}(a|s) - \tau\sum_{k=i+1}^{n-1}\EE_{\pi_t}\left[\DKL{\pi_t^{(n-k)}}{\pi_*^{(n-k)}}(S_k)|S_i=s,A_i=a\right].
    \end{align*}
    Note that by compactness, we can take a subsequence such that $\underline{\theta}_{t_k}^{(j)}\to\underline{\theta}_{\infty}^{(j)}$ simultaneously for all $j\leq i$.
    We still denote by $\underline{\theta}_{t_k}^{(j)}$ such subsequences.
    A computation similar to Equation \eqref{eq:gradient_objective} gives
    \begin{align*}
        -\underline{\theta}_{t_k}^{(i)}\cdot \nabla_{\theta^{(i)}} J_n(\pi_{t_k})
        &= \int_{\mathcal{S}}\mathbf{m}_{\pi_{t_k}}^{(i)}(\mathrm{d}s)\int_{\mathcal{A}}\pi_{t_k}^{(i)}(\mathrm{d}a|s)\log\frac{\pi_{\underline{\theta}_{t_k}}^{(i)}}{\overline{\pi}}(a|s) \\
        &\hspace{1cm}\times\bigg(\tau\log\frac{\pi_{t_k}^{(i)}}{\overline{\pi}}(a|s)-\tau\DKL{\pi_{t_k}^{(i)}}{\overline{\pi}}(s)-Q_{\pi_{t_k}}^{(i)}(a,s)+\EE_{\pi_{t_k}}\left[Q_{\pi_{t_k}^{(i)}}(A|s)\right]\bigg).
    \end{align*}
    Using that $h_{\theta_{t_k}^{(i)}}^{(i)}=||\theta_t^{(i)}||_2h_{\underline{\theta}_{t_k}^{(i)}}^{(i)}$ by definition, we have that
    \begin{align*}
        \tau\bigg(\log\frac{\pi_{t_k}^{(i)}}{\overline{\pi}}(a|s)-\DKL{\pi_{t_k}^{(i)}}{\overline{\pi}}(s)\bigg)
        &=h_{\theta_{t_k}}^{(i)}(a,s)-\EE_{\pi_{t_k}}\left[h_{\theta_{t_k}}^{(i)}(A,s)\right]\\
        &=||\theta_{t_k}^{(i)}||_2\bigg(h_{\underline{\theta}_{t_k}}^{(i)}(a,s)-\EE_{\pi_{t_k}}\left[h_{\underline{\theta}_{t_k}}^{(i)}(A,s)\right]\bigg)\\
        &=\tau||\theta_{t_k}^{(i)}||_2\bigg(\log\frac{\pi_{\underline{\theta}_{t_k}}^{(i)}}{\overline{\pi}}(a|s)-\EE_{\pi_{t_k}}\big[\log\frac{\pi_{\underline{\theta}_{t_k}}^{(i)}}{\overline{\pi}}(A|s)\big]\bigg)
    \end{align*}
    Hence, we obtain
    \begin{align*}
        -\underline{\theta}_{t_k}^{(i)}\cdot \nabla_{\theta^{(i)}} J_n(\pi_{t_k})
        &= ||\theta_{t_k}^{(i)}||_2\int_{\mathcal{S}}\mathbf{m}_{\pi_{t_k}}^{(i)}(\mathrm{d}s)\int_{\mathcal{A}}\pi_{t_k}^{(i)}(\mathrm{d}a|s)\log\frac{\pi_{\underline{\theta}_{t_k}}^{(i)}}{\overline{\pi}}(a|s) \\
        &\hspace{0.3cm}\times\bigg(\tau\log\frac{\pi_{\underline{\theta}_{t_k}}^{(i)}}{\overline{\pi}}(a|s)-\tau\EE_{\pi_{t_k}}\Big[\log\frac{\pi_{\underline{\theta}_{t_k}}^{(i)}}{\overline{\pi}}(A|s)\Big]-\frac{Q_{\pi_{t_k}}^{(i)}(a,s)-\EE_{\pi_{t_k}}\Big[Q_{\pi_{t_k}}^{(i)}(A|s)\Big]}{||\theta_{t_k}||_2}\bigg).
    \end{align*}
    By Theorem \ref{thm: policy gradient theorem}, the right-hand side above converges to $0$ as $k\to\infty$.
    In particular, since $||\theta_{t_k}^{(i)}||_2\to\infty$, the weak convergence of $\mathbf{m}_{\pi_{t_k}}^{(i)}(\mathrm{d}s)\times\pi_{t_k}^{(i)}(\mathrm{d}a|s)$ and the uniform boundedness of $\log\frac{\pi_{\underline{\theta}_{t_k}}^{(i)}}{\overline{\pi}}$ and $Q_{\pi_{t_k}}^{(i)}$ entail that
    \begin{align*}
        0
        &=\int_{\mathcal{S}}\mathbf{m}_{\pi_\infty}^{(i)}(\mathrm{d}s)\int_{\mathcal{A}}\pi_\infty^{(i)}(\mathrm{d}a|s)\bigg(\log\frac{\pi_{\underline{\theta}_\infty^{(i)}}}{\overline{\pi}}(a|s)-\EE_{\pi_{\infty}}\Big[\log\frac{\pi_{\underline{\theta}_\infty}^{(i)}}{\overline{\pi}}(A|s)\Big]\bigg)^2.
    \end{align*}
    This shows that for $\mathbf{m}_{\pi_\infty}^{(i)}$-almost every $s$, the map $a\mapsto \log\frac{\pi_{\underline{\theta}_\infty}^{(i)}}{\overline{\pi}}(a|s)$ is constant on the support of $\pi_\infty^{(i)}$.
    If $\pi_\infty^{(i)}(\cdot|s)$ has full support for $\mathbf{m}_{\pi_{t_k}}^{(i)}$-almost every $s$, then $a\mapsto h_{\underline{\theta}_\infty}(a,s)$ is constant for such $s$, which contradicts the fact that $||\underline{\theta}_\infty||_2=1\neq 0$.

    Suppose instead that $\pi_{\infty}^{(i)}(\cdot|s)$ does not have full support for a subset of $\mathcal{S}$ with positive $\mathbf{m}_{\pi_{\infty}}^{(i)}$-measure.
    For all $s\in\mathcal{S}$, let $E_s:=\mathrm{Supp}(\pi_\infty(\cdot|s))$ and denote by $E_S^c$ its complementary set.
    One can show by contradiction that since $h_{t_k}^{(i)}=||\theta_{t_k}^{(i)}||_2h_{\underline{\theta}_{t_k}}^{(i)}$ and $h_{\underline{\theta}_{t_k}}^{(i)}\to h_{\underline{\theta}_{\infty}}^{(i)}$ pointwise as $k\to\infty$, it holds that
    $h_{\underline{\theta}_{\infty}}^{(i)}(a,s)=C_s:=\sup_{a'\in\mathcal{A}}h_{\underline{\theta}_{\infty}}^{(i)}(a',s)$, for all $a\in E_s$, and similarly, $h_{\underline{\theta}_{\infty}}^{(i)}(a,s)\leq C_s$ for all $a\in E_s^c$.
    We therefore see that for all $s\in\mathcal{S}$, it holds that
    \begin{align}\label{eq:inproof_integral_log}
        &\int_{\mathcal{A}}\pi_{t_k}^{(i)}(\mathrm{d}a|s)\left(\log\frac{\pi_{t_k}^{(i)}}{\overline{\pi}}(a|s)-\DKL{\pi_{t_k}^{(i)}}{\overline{\pi}}(s)-Q_{\pi_{t_k}}^{(i)}(a,s)+\EE_{\pi_{t_k}}\left[Q_{\pi_{t_k}}^{(i)}(A,s)\right]\right)\log\frac{\pi_{\underline{\theta}_\infty}^{(i)}}{\overline{\pi}}(s)\nonumber\\
        &\hspace{0.5cm}=\int_{E_s^c}\pi_{t_k}^{(i)}(\mathrm{d}a|s)\left(\log\frac{\pi_{t_k}^{(i)}}{\overline{\pi}}(a|s)-\DKL{\pi_{t_k}^{(i)}}{\overline{\pi}}(s)-Q_{\pi_{t_k}}^{(i)}(a,s)+\EE_{\pi_{t_k}}\left[Q_{\pi_{t_k}}^{(i)}(A,s)\right]\right)\nonumber\\
        &\hspace{9cm}\times\left(\log\frac{\pi_{\underline{\theta}_\infty}^{(i)}}{\overline{\pi}}(a|s)-\log\frac{C_s}{\overline{\pi}}\right),
    \end{align}
    where we used that the integral over $\mathcal{A}$ of the terms inside the first parentheses is null.
    As explained above, the second factor in the integral is non-positive, and strictly negative for some $a$ since $\pi_{\underline{\theta}}^{(i)}\neq \overline{\pi}$.
    We claim that the first term is negative, which is seen as follows: we note that for all $s$, for all $a\in E_s$ and $a'\in E_s^c$, we have for all $k$ large enough that $\log\frac{\pi_{t_k}^{(i)}}{\overline{\pi}}(a|s)+Q_{\pi_{t_k}}^{(i)}(a,s)>\log\frac{\pi_{t_k}^{(i)}}{\overline{\pi}}(a'|s)+Q_{\pi_{t_k}}^{(i)}(a',s)$, since $Q_{\pi_{t_k}}^{(i)}$ is uniformly bounded and $\log\frac{\pi_{t_k}^{(i)}}{\overline{\pi}}(a'|s)\to -\infty$ for all $a'\in E_s^c$.
    Moreover, $\pi_{t_k}^{(i)}(E_s^c|s)\to 0$ as $k\to\infty$.
    Hence, for all $a'\in E_s^c$, we have $\log\frac{\pi_{t_k}^{(i)}}{\overline{\pi}}(a'|s)+Q_{\pi_{t_k}}^{(i)}(a',s)>\EE_{\pi_{t_k}}[\log\frac{\pi_{t_k}^{(i)}}{\overline{\pi}}(A|s)+Q_{\pi_{t_k}}^{(i)}(A,s)]$ for all $k$ large enough.
    This shows that for all $s\in\mathcal{S}$, for all $k$ large enough, the left-hand side of \eqref{eq:inproof_integral_log} is positive.

    Hence, we now see that
    \begin{align*}
        -\underline{\theta}_{\infty}^{(i)}\cdot\nabla_{\theta^{(i)}} J_n(\pi_{t_k})
        &= \int_{\mathcal{S}}\mathbf{m}_{\pi_{t_k}}^{(i)}(\mathrm{d}s)\int_{\mathcal{A}}\pi_{t_k}^{(i)}(\mathrm{d}a|s)\log\frac{\pi_{\underline{\theta}_{\infty}}^{(i)}}{\overline{\pi}}(a|s) \\
        &\hspace{1cm}\times\bigg(\tau\log\frac{\pi_{t_k}^{(i)}}{\overline{\pi}}(a|s)-\tau\DKL{\pi_{t_k}^{(i)}}{\overline{\pi}}(s)-Q_{\pi_{t_k}}^{(i)}(a,s)+\EE_{\pi_{t_k}}\left[Q_{\pi_{t_k}^{(i)}}(A|s)\right]\bigg)
    \end{align*}
    is positive for all $k$ large enough.
    This contradicts \eqref{eq:inproof_to_contradict_bounded_param} and concludes the proof.
\end{proof}

\subsection{Global optimality of MPG: realizable case}

\begin{proof}[\textbf{Proof of Proposition \ref{prop: optimal policy}}]
    The Kullback-Leibler divergence being non-negative, it is readily seen that for all $s\in\mathcal{S}$, the maximal value of $\pi\mapsto V_{\pi}^{(n)}(s)$ is obtained for $\pi=\pi_*$.
    It is then immediate that $\pi_*$ is the unique uniformly optimal policy for the objective $J_n$ given in \eqref{def: general case objective function}.
\end{proof}

Recall that $\mathbf{m}_\pi^{(i)}$ denotes the law of $S_{n-i}$ when following policiy $\pi$ from initial state $S_0\sim\nu$.
\begin{lem}\label{lem: general case expected update log pi_t}
    Let $t\in\NN$ and $m\in\{1,\ldots,n\}$.
    Suppose that $\pi_t^{(k)}(\cdot|s)=\pi_*^{(k)}(\cdot|s)$ for $\mathbf{m}_{\pi}^{(k)}$-almost every $s\in\mathcal{S}$, for all $k=1,\ldots,m-1$.
    For all $a\in\mathcal{A}$ and $s\in\mathcal{S}$, it holds that
    \begin{align*}
        &\log\pi^{(m)}_{t+1}(a|s)-\log\pi^{(m)}_t(a|s)\\
        &\hspace{1cm}= - \learningrate \tau\int_{\mathcal{S}}\mathbf{m}_{\pi_t}^{(m)}(\mathrm{d}s')\int_{\mathcal{A}}\pi^{(m)}_t(\mathrm{d}a'|s')
        \left(\log\frac{\pi^{(m)}_t}{\pi^{(m)}_*}(a'|s')-\DKL{\pi^{(m)}_t}{\pi^{(m)}_*}(s')\right)\\
        &\hspace{3cm}\times\left(\Theta^{(m)}((a,s),(a',s'))-\EE_{\pi^{(m)}_t}[\Theta^{(m)}((A,s),(a',s'))]\right)
        +o\left(\learningrate C(\theta_t)\right),
    \end{align*}
    where the constant $C(\theta_t)$ does not depend on $\learningrate$.
\end{lem}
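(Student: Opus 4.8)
The plan is to follow exactly the same strategy as in the bandit case (Lemma~\ref{lem: bandit case expected update log pi_t}), now carrying the extra state variable and the visitation measure $\mathbf{m}_{\pi_t}^{(m)}$ through the computation. First I would start from the MPG update \eqref{eq: update cascade learning} restricted to the $m$-step parameters, namely $\theta_{t+1}^{(m)}-\theta_t^{(m)}=\learningrate C_m\nabla\log\pi_t^{(m)}(A_{n-m}|S_{n-m})$, and apply a first-order Taylor expansion of $\log\pi_t^{(m)}$ to write
\begin{align*}
    \log\pi_{t+1}^{(m)}(a|s)-\log\pi_t^{(m)}(a|s)
    =(\theta_{t+1}^{(m)}-\theta_t^{(m)})\cdot\frac{\nabla\pi_t^{(m)}(a|s)}{\pi_t^{(m)}(a|s)}+o(\learningrate C(\theta_t)).
\end{align*}
Since Theorem~\ref{thm: policy gradient theorem} gives $\EE[\theta_{t+1}^{(m)}-\theta_t^{(m)}]=\learningrate\nabla_{\theta^{(m)}}J_n(\pi_t)$, and the proof of that theorem already localizes the gradient at the visited state $S_{n-m}\sim\mathbf{m}_{\pi_t}^{(m)}$, the leading term becomes an expectation over $s'\sim\mathbf{m}_{\pi_t}^{(m)}$ and $A_{n-m}\sim\pi_t^{(m)}(\cdot|s')$ of the product of two gradient-of-log-policy factors.

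Next I would plug in the softmax gradient identity \eqref{eq: gradient policy pi}, so that the inner product $\nabla h_t^{(m)}(a'|s')\cdot\nabla h_t^{(m)}(a|s)$ collapses to the kernel $\Theta^{(m)}((a,s),(a',s'))$ exactly as the bandit computation turned $\nabla_\theta h_t(a')\cdot\nabla_\theta h_t(a'')$ into $\Theta(a',a'')$. Expanding the two Kronecker-minus-policy factors and repeatedly using the covariance identity $\EE[X(Y-\EE[Y])]=\EE[(X-\EE[X])Y]$ lets me replace $\Theta^{(m)}((A_{n-m},s),(a',s'))$ by its centered version $\Theta^{(m)}((a,s),(a',s'))-\EE_{\pi_t^{(m)}}[\Theta^{(m)}((A,s),(a',s'))]$, producing the second kernel factor in the statement.

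The remaining, and genuinely new, step is to identify the scalar reward-like factor $C_m-\EE[C_m]$. This is where the hypothesis $\pi_t^{(k)}=\pi_*^{(k)}$ for $k=1,\ldots,m-1$ enters and is the part I expect to require the most care. In the bandit case the algebraic miracle was that $B_t-\EE_{\pi_t}[B_t]=-\tau\log\tfrac{\pi_t}{\pi_*}(A_t)+\tau\DKL{\pi_t}{\pi_*}$, using the closed form \eqref{eq: bandit case optimal policy} of $\pi_*$. Here $C_m=\sum_{\ell=n-m}^{n-1}\bigl(R_\ell-\tau\log\tfrac{\pi_t^{(n-\ell)}}{\overline{\pi}}(A_\ell|S_\ell)\bigr)$ telescopes into the $m$-step value, and the assumption that all strictly-smaller-horizon policies are already optimal means every term with $n-\ell<m$ contributes exactly a $\DKL{\pi_*^{(\cdot)}}{\pi_*^{(\cdot)}}=0$ piece after centering. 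Concretely I would take conditional expectation given $S_{n-m}=s'$, invoke Lemma~\ref{lem: general case value function is optimal value function minus DKL} together with the $Q$/$V$ recursion \eqref{eq: gen case recursive definition value function} and the optimal-policy form \eqref{eq: general case optimal policy}, and show that all contributions beyond the immediate $m$-step term cancel, leaving
\begin{align*}
    C_m-\EE_{\pi_t^{(m)}}[C_m]
    =-\tau\log\frac{\pi_t^{(m)}}{\pi_*^{(m)}}(A_{n-m}|s')+\tau\DKL{\pi_t^{(m)}}{\pi_*^{(m)}}(s').
\end{align*}

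Combining the centered kernel factor with this centered scalar factor, and writing the expectation over $s'$ explicitly as an integral against the density $\mathbf{m}_{\pi_t}^{(m)}$, yields exactly the claimed expression, with the $o(\learningrate C(\theta_t))$ remainder collecting the Taylor error. The main obstacle is rigorously verifying the telescoping/cancellation in the last step: one must be careful that the future rewards and KL penalties accumulated over steps $n-m+1,\ldots,n-1$ (governed by the already-optimal smaller-horizon policies) do not depend on $A_{n-m}$ in a way that survives the centering $C_m-\EE_{\pi_t^{(m)}}[C_m]$, which is precisely guaranteed by the optimality assumption through Lemma~\ref{lem: general case value function is optimal value function minus DKL}.
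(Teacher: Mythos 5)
Your proposal is correct and follows essentially the same route as the paper's proof: first-order Taylor expansion of $\log\pi_t^{(m)}$ under the ideal update, the softmax gradient identity turning $\nabla h\cdot\nabla h$ into $\Theta^{(m)}$, the covariance trick to center the kernel, and then the identification $\EE[C_m|S_{n-m}]=V_{\pi_t}^{(m)}(S_{n-m})$ together with Lemma \ref{lem: general case value function is optimal value function minus DKL}, the optimal-policy form \eqref{eq: general case optimal policy}, and the hypothesis $\pi_t^{(k)}=\pi_*^{(k)}$ for $k<m$ to reduce the centered scalar to $\tau\DKL{\pi^{(m)}_t}{\pi^{(m)}_*}(S_{n-m})-\tau\log\tfrac{\pi^{(m)}_t}{\pi^{(m)}_*}(A_{n-m}|S_{n-m})$. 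The only cosmetic difference is that the paper carries out the cancellation by computing both $\EE[C_m|S_{n-m}]$ and $\EE[C_m|S_{n-m},A_{n-m}]$ explicitly (the latter is what makes your displayed identity for $C_m-\EE[C_m]$ rigorous inside the outer expectation), whereas you phrase it as a telescoping argument; the content is the same.
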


\begin{proof}[Proof]
    The gradient of the policy reads as
    \begin{align}\label{eq: general case gradient policy pi}
        \nabla_\theta\pi^{(m)}_t(a|s)
        &=\frac{1}{\tau}\pi^{(m)}_t(a|s)\int_{\mathcal{A}}\left(\delta_{a,\mathrm{d}a'}-\pi^{(m)}_t(\mathrm{d}a'|s)\right)
        \nabla_{\theta}h^{(m)}_t(a,s).
    \end{align}
    Let $(a,s)\in\mathcal{A}\times\mathcal{S}$.
    Using \eqref{eq: update cascade learning} and a first order Taylor approximation, we write
    \begin{align}\label{eq: gen case log increment to compute}
        \log\pi_{t+1}^{(m)}(a|s)-\log\pi_t^{(m)}(a|s)
        &= (\theta_{t+1}^{(m)}-\theta_t^{(m)})\cdot\frac{\nabla_\theta \pi_t^{(m)}(a|s)}{\pi_t^{(m)}(a|s)}+o\left(\learningrate C(\theta_t)\right)\nonumber\\
        &=\frac{\learningrate}{\tau^2}\EE_{\pi_t}\bigg[C_m\int_{\mathcal{A}\times\mathcal{A}}\left(\delta_{a,\mathrm{d}a'}-\pi_t^{(m)}(\mathrm{d}a'|s)\right)\nonumber\\
        &\hspace{0.4cm}\times\left(\delta_{A_{n-m},\mathrm{d}a''}-\pi_t^{(m)}(\mathrm{d}a''|S_{n-m})\right)\Theta^{(m)}((a',s),(a'',S_{n-m}))\bigg]\nonumber\\
        &\hspace{7.5cm}+o\left(\learningrate C(\theta_t)\right).
    \end{align}
    We focus on the expectation.
    It is equal to
    \begin{align*}
        &\EE_{\pi_t}\bigg[C_m\bigg(\Theta^{(m)}((a,s),(A_{n-m},S_{n-m}))
        -\EE_A\left[\Theta^{(m)}((A,s),(A_{n-m},S_{n-m}))\right]\\
        &\hspace{3cm}-\EE_A\left[\Theta^{(m)}((a,s),(A',S_{n-m}))\right]
        +\EE_{A,A'}\left[\Theta^{(m)}((A,s),(A',S_{n-m}))\right]\bigg)\bigg],
    \end{align*}
    where $A,A'$ have respective laws $\pi_t^{(m)}(\cdot|s)$ and $\pi_t^{(m)}(\cdot|S_{n-m})$ and are mutually independent of all other variables (conditionally given $S_{n-m}$ for $A'$).
    Using the trick $\EE[X(Y-\EE[Y])]=\EE[(X-\EE[X])Y]$, we obtain
    \begin{align}\label{eq: gen case after expectation trick}
        &\EE_{\pi_t}\Big[\left(C_m-\EE\left[C_m|S_{n-m}\right]\right)\Big(\Theta^{(m)}((a,s),(A_{n-m},S_{n-m}))
        -\EE_A\left[\Theta^{(m)}((A,s),(A_{n-m},S_{n-m}))\right]\Big)\Big].
    \end{align}
    We write
    \begin{align*}
        \EE\left[C_m|S_{n-m}\right]
        &=\EE\left[\sum_{\ell=n-m}^{n}\left(R_{\ell}-\tau\log\frac{\pi_t^{(n-\ell)}}{\overline{\pi}}(A_{\ell}|S_{\ell})\right)\bigg|S_{n-m}\right]\\
        &=V_{\pi_t}^{(m)}(S_{n-m})\\
        &=V_*^{(m)}(S_{n-m})-\DKL{\pi_t^{(m)}}{\pi_*^{(m)}}(S_{n-m}),
    \end{align*}
    where we used Lemma \ref{lem: general case value function is optimal value function minus DKL} and the fact that $\pi_t^{(i)}(\cdot|s)=\pi_*^{(i)}(\cdot|s)$ for $\mathbf{m}_{\pi_t^{(i)}}$-almost every $s\in\mathcal{S}$, for all $i=1,\ldots,m-1$.
    Similarly and using the expression \eqref{eq: general case optimal policy} of the optimal policy, we have
    \begin{align*}
        \EE[C_m|S_{n-m},A_{n-m}]
        &=R_{n-m}-\tau\log\frac{\pi_t^{(m)}}{\overline{\pi}}(A_{n-m}|S_{n-m}) + \EE\left[V_{\pi_t}^{(m-1)}(S_{n-m})\Big|S_{n-m},A_{n-m}\right]\\
        &= \EE\Big[V_{\pi_t}^{(m-1)}(S_{n-m+1})-V_*^{(m-1)}(S_{n-m+1})\Big|S_{n-m},A_{n-m}\Big]\\
        &\hspace{4cm}-\tau\log\frac{\pi_t^{(m)}}{\pi_*^{(m)}}(A_{n-m}|S_{n-m})+V_*^{(m)}(S_{n-m})\\
        &= -\tau\log\frac{\pi_t^{(m)}}{\pi_*^{(m)}}(A_{n-m}|S_{n-m})+V_*^{(m)}(S_{n-m}).
    \end{align*}
    Hence, the expression in \eqref{eq: gen case after expectation trick} becomes
    \begin{align*}
        &\tau\EE_{\pi_t}\bigg[\bigg(\DKL{\pi_t^{(m)}}{\pi_*^{(m)}}(A_{n-m}|S_{n-m})
        -\log\frac{\pi_t^{(m)}}{\pi_*^{(m)}}(A_{n-m}|S_{n-m})\bigg)
        \Big(\Theta^{(m)}((a,s),(A_{n-m},S_{n-m}))\\
        &\hspace{8.7cm}-\EE_A\left[\Theta^{(m)}((A,s),(A_{n-m},S_{n-m}))\right]\Big)\bigg],
    \end{align*}
    which corresponds to the first order term in right-hand side of the equation in the Lemma.
    Coming back to \eqref{eq: gen case log increment to compute}, this concludes the proof.
\end{proof}

\begin{proof}[\textbf{Proof of Theorem \ref{thm: general case global optimality}}]
The idea of the proof is rather simple:
by Lemmas \ref{lem:converging_subsequence_of_policies} and \ref{lem:parameters_remain_bounded}, we know that any subsequence of $\pi_t$ has a converging subsequence, and that the limits have finite norm parameters.
Hence, we only need to show the unicity of the limit, namely, that $\theta\in\RR^P$ is a critical point if and only $\pi_\theta=\pi_*$.
We follow the intuition given after Theorem \ref{thm: general case global optimality}.

We reason by induction.
Let $m\leq n$, suppose that $\pi^{(i)}_t(\cdot|s)=\pi^{(i)}_*(\cdot|s)$ for $\mathbf{m}_{\pi_t}^{(i)}$-almost every $s\in\mathcal{S}$, for all $i=1,\ldots,m-1$, and that $\pi^{(i)}_t=\pi^{(i)}_\infty$ for all $i=m,\ldots,n$.
In particular, we are at a critical point $(\theta^{(1)}_t,\ldots,\theta^{(n)}_t)$ of $(\theta^{(1)},\ldots,\theta^{(n)})\mapsto J_n(\pi_\theta)$.
Let $a\in\mathcal{A},s\in\mathcal{S}$.
By Lemma \ref{lem: general case expected update log pi_t}, we have that 
\begin{align*}
    0&=
    \log\pi^{(m)}_{t+1}(a|s)-\log\pi^{(m)}_t(a|s)\\
    &= - \learningrate \tau\int_{\mathcal{A}\times\mathcal{S}}\mathbf{m}_{\pi_t}^{(m)}(\mathrm{d}s')\pi_t^{(m)}(\mathrm{d}a'|s')\left(\log\frac{\pi_t^{(m)}}{\pi_*^{(m)}}(a'|s')-\DKL{\pi_t^{(m)}}{\pi_*^{(m)}}(s')\right)\\
    &\hspace{3cm}\times\bigg(\Theta^{(m)}((a,s),(a',s'))
    -\EE_{\pi^{(m)}_t}[\Theta^{(m)}((A,s),(a',s'))]\bigg)
    +o\left(\learningrate C(\theta_t)\right),
\end{align*}
Since the above must be true for all $\learningrate>0$, we deduce that 
\begin{align}\label{eq: gen case lower bound log increment with d}
    &\int_{\mathcal{A}\times\mathcal{S}}\mathbf{m}_{\pi_t}^{(m)}(\mathrm{d}s')\pi_t^{(m)}(\mathrm{d}a'|s')\left(\log\frac{\pi_t^{(m)}}{\pi_*^{(m)}}(a'|s')-\DKL{\pi_t^{(m)}}{\pi_*^{(m)}}(s')\right)\nonumber\\
    &\hspace{3cm}\times\bigg(\Theta^{(m)}((a,s),(a',s'))
    -\EE_{\pi^{(m)}_t}[\Theta^{(m)}((A,s),(a',s'))]\bigg)=0.
\end{align}
Let $\widetilde{\Theta}^{(m)}$ be the positive-semidefinite kernel constructed from $\Theta^{(m)}$ and $\pi_t^{(m)}$ as in Lemma \ref{lem: softmax RKHS is larger}.
One can easily check that
\begin{align*}
    \log\frac{\pi_t^{(m)}}{\pi_*^{(m)}}(a|s)-\DKL{\pi_t^{(m)}}{\pi_*^{(m)}}
    &=h_t^{(m)}(a,s)-Q_*^{(m)}(a,s)-\EE_{\pi_t^{(m)}}\left[h_t^{(m)}(A,s)-Q_*^{(m)}(A,s)\right].
\end{align*}
In particular, using the trick $\EE[X(Y-\EE[Y])]=\EE[(X-\EE[X])Y]$, we can rewrite \eqref{eq: gen case lower bound log increment with d} as
\begin{align}\label{eq: h-Q in softmax RKHS}
    \int_{\mathcal{A}\times\mathcal{S}}\mathbf{m}_{\pi_t}^{(m)}(\mathrm{d}s')\pi_t^{(m)}(\mathrm{d}a'|s')\left(h_t^{(m)}(a',s')-Q_*^{(m)}(a',s')\right)\widetilde{\Theta}^{(m)}((a,s),(a',s'))=0.
\end{align}
Since the above is true for all $(a,s)\in\mathcal{A}\times\mathcal{S}$, we see by Lemma \ref{lem: if I_Kf=0 then in ortho} that $h_t^{(m)}-Q_*^{(m)}\in(\mathcal{H}_{\widetilde{\Theta}^{(m)}})^\perp$, that is the orthogonal complement of $\mathcal{H}_{\widetilde{\Theta}^{(m)}}$ in $L^2(\mathbf{m}_{\pi_t}^{(m)}(\mathrm{d}s')\pi_t^{(m)}(\mathrm{d}a'|s'))$.
By Assumption \ref{assumption: optimal policy in RKHS}, we get $h_t^{(m)}-Q_*^{(m)}\in\mathcal{H}_{\Theta^{(m)}}\cap(\mathcal{H}_{\widetilde{\Theta}^{(m)}})^\perp$, and Lemma \ref{lem: softmax RKHS is larger} entails that for $\mathbf{m}_{\pi_t}^{(m)}$-almost every $s\in\mathcal{S}$, the map $a\mapsto h_t^{(m)}(a,s)-Q_*(a,s)$ is constant.
This implies in turn that $\pi_t^{(m)}(\cdot|s)=\pi_*^{(m)}(\cdot|s)$ for $\mathbf{m}_{\pi_t}^{(m)}$-almost every $s\in\mathcal{S}$, which concludes the proof.
\end{proof}

\subsection{Global optimality of MPG: non-realizable case}
\label{Section: global optimality of MPG: non-realizable case}

In order to extend the global optimality from the case where $\pi_*$ belongs to the parametric space $\mathscr{P}_n$ to the case where $\pi_*$ is outside of $\mathscr{P}_n$, we use tools from information geometry and apply the strategy outlined in Section \ref{section main: global convergence beyond the realizability assumption}.

We use the following notation in the proof: the set of parametric $1$-step policies whose preference $h_\theta$ belongs to $\mathcal{H}_{\Theta^{(i)}}$ is denoted by $\mathscr{P}^{(i)}$.

\begin{proof}[\textbf{Proof of Theorem \ref{Thm: global cvg outside of the RKHS}}]
Let $\vartheta\in\RR^\nparam$ be a critical point of $\theta\mapsto J_n(\pi_\theta)$.
Consider a fixed $i\in\{1,\ldots, n\}$.
Recall that $Q_{\pi_\vartheta}^{(i)}(a,s)=r(a,s)+\int_{\mathcal{S}}p(s,a,\mathrm{d}s')V_{\pi_\vartheta}^{(i-1)}(s')$, which does not depend on $\pi_\vartheta^{(j)}$, $j\geq i$.
Let $\widehat{\pi}^{(i)}$ be the policy with preference $Q_{\pi_\vartheta}^{(i)}$.
Note that $Q_{\pi_\vartheta}^{(i)}$ does not necessarily belong to $\mathcal{H}_{\Theta^{(i)}}$, hence we do not make the dependence on $\vartheta$ (which is fixed) explicit in $\widehat{\pi}^{(i)}$.
This is the optimal policy given that the shorter $j$-step policies, $j<i$, are fixed.
Indeed, we always have that
\begin{align*}
    \widehat{J}^{(i)}(\pi^{(i)},\vartheta)
    :=&\int_{\mathcal{S}}\mathbf{m}_{\pi_\vartheta}^{(i)}(\mathrm{d}s)\left(\EE_{\pi^{(i)}}[Q^{(i)}_{\pi_{\vartheta}}(A,s)]-\tau\DKL{\pi^{(i)}}{\overline{\pi}}(s)\right)\\
    =&\ \tau\int_{\mathcal{S}}\mathbf{m}_{\pi_\vartheta}^{(i)}(\mathrm{d}s)\left(\log\left(\int_{\mathcal{A}}\overline{\pi}(\mathrm{d}a|s)e^{Q^{(i)}_{\pi_\vartheta}(a,s)/\tau}\right)-\DKL{\pi^{(i)}}{\widehat{\pi}^{(i)}}(s)\right).
\end{align*}
The first term of the right-hand side depends on $\pi_\vartheta^{(j)}$ through $Q_{\pi_\vartheta}^{(i)}$ for $j<i$, whereas it depends on $\pi_\vartheta^{(j)}$ through $\mathbf{m}_{\pi_\vartheta}^{(i)}$ for $j>i$, but it does not depend on $\pi_{\vartheta}^{(i)}$.
Therefore, we see that $\widehat{\pi}^{(i)}=\mathrm{argmax}_{\pi^{(i)}\in\mathcal{P}_1}\widehat{J}^{(i)}(\pi^{(i)},\vartheta)$.

Similar to what was done in appendix \ref{appendix: information geometry}, let $\mathscr{P}^{(i)}_{\star}$ be the quotient space of $\mathscr{P}^{(i)}$ and its subspace of policies whose preferences are constant in $a$ for all $s\in\mathrm{Supp}(\mathbf{m}_{\pi_{\vartheta}^{(i)}})$.
In this quotient space, policies that are equal to each other on the support of $\mathbf{m}_{\pi_{\vartheta}^{(i)}}$ are identified as the same policy, since states outside of this set are never visited with probability one.
Define
\begin{align*}
    \pi^{(i)}_{\theta_*}
    =\underset{\pi^{(i)}_\theta\in\mathscr{P}^{(i)}_{\star}}{\mathrm{argmin}}\ D^{(i)}(\widehat{\pi}^{(i)},\pi^{(i)}_\theta)
    :=\underset{\pi^{(i)}_\theta\in\mathscr{P}^{(i)}_\star}{\mathrm{argmin}}\int_{\mathcal{S}}\mathbf{m}^{(i)}_{\pi_\vartheta}(\mathrm{d}s)\DKL{\pi^{(i)}_\theta}{\widehat{\pi}^{(i)}}(s)
    .
\end{align*}
It turns out that the map $D^{(i)}$ defined above is a Bregman divergence on $\mathcal{P}_\star$ (denoting the space where policies that coincide on the support of $\mathbf{m}_{\pi_\vartheta}^{(i)}$ are identified together).
Using the fact that $\vartheta$ is a critical point combined with Lemma \ref{lem: same critical points for divergence}, we have that
\begin{align*}
    0&=
    \nabla_{\theta^{(i)}}J_n(\pi_\vartheta)
    =-\nabla_{\theta^{(i)}}D(\pi_{\vartheta}^{(i)},\pi^{(i)}_{\theta_*}).
\end{align*}
We stress once more that $\pi^{(i)}_{\theta_*}$ only depends on $\widehat{\pi}^{(i)}$, which in turn only depends on $\pi_\vartheta^{(1)},\ldots,\pi_\vartheta^{(i-1)}$ through $Q_{\pi_\vartheta}^{(i)}$ and on $\pi_\vartheta^{(i+1)},\ldots,\pi_\vartheta^{(n)}$ through $\mathbf{m}_{\pi_\vartheta}^{(i)}$.
Therefore, the equation above corresponds to the gradient of the objective of $1$-step MPG with optimal policy $\pi_{\theta_*}^{(i)}$.
This observation brings us back to the realizable case, for which Theorem \ref{thm: general case global optimality} applies.
This implies that necessarily, $\pi_\vartheta^{(i)}(\cdot|s)=\pi_{\theta_*}^{(i)}(\cdot|s)$ for $\mathbf{m}_{\pi_\vartheta}^{(i)}$-almost every $s\in\mathcal{S}$.
In particular, this shows the uniqueness of the $\mathrm{argmin}$ for reachable states.

The above argument proves that if $\vartheta\in\RR^\nparam$ is a critical point, then
\begin{align*}
    J_n(\pi_\vartheta)
    &=\max_{\theta^{(i)}\in\RR^{\nparam_i}}J_n(\pi^{(1)}_\vartheta,\ldots,\pi^{(i)}_{\theta^{(i)}},\ldots,\pi^{(n)}_{\vartheta^{(n)}}).
\end{align*}
Since this is true for every $i=1,\ldots,n$ and since maxima can be taken in any order, we have that 
\begin{align*}
    J_n(\pi_\vartheta)=\max_{\theta\in\RR^P}J_n(\pi_\theta)
\end{align*}
We have thus proved that any critical point is a global maximum of the objective.
As in the proof of Theorem \ref{thm: general case global optimality}, the argument using Lemmas \ref{lem:converging_subsequence_of_policies} and \ref{lem:parameters_remain_bounded} applies to show global convergence, 
concluding the proof.
\end{proof}

\begin{proof}[\textbf{Proof of Proposition \ref{prop: projectional consistency property}}]
    Suppose that $\theta_t=(\theta_t^{(1)},\ldots,\theta_t^{(n)})$ satisfies the projectional consistency property \eqref{eq: projectional consistency property}.
    We thus have that $h_t^{(1)}-Q_*^{(1)}\in(\mathcal{H}_{\widetilde{\Theta}^{(1)}})^\perp$, the orthogonal space of $\mathcal{H}_{\widetilde{\Theta}^{(1)}}$ in $L^2(\mathbf{m}^{(1)}(\mathrm{d}s)\pi_t^{(1)}(\mathrm{d}a))$.
    In particular, using Lemma \ref{lem: if I_Kf=0 then in ortho}, one can show that Equation \eqref{eq: h-Q in softmax RKHS} is satisfied, entailing that $\nabla_{\theta^{(1)}}J_n(\pi_\theta)=0$.
    The same reasoning applies for all steps $i=1,\ldots,n$, showing that $\theta_t$ is a critical point, and therefore, the unique global optimum by Theorem \ref{Thm: global cvg outside of the RKHS}.
    This concludes the proof.
\end{proof}


\section{Assumptions}
\label{appendix: assumptions}


We now list the assumptions and briefly mention their roles in this work:
\begin{itemize}
    \item In Proposition \ref{prop: extending horizon converges to standard optimal policy}, $\nu$ has full support in $\mathcal{S}$ and the MDP is ergodic: it is not restrictive, as its role is to ensure that the optimal policies for all horizons visit Lebesgue almost all states, thus avoiding considerations about reachable states. Ergodicity ensures the existence of a stationary state distribution. In particular, the optimal policy $\pi_*$ does not depend on $\nu$.
    \item Continuous closed $\mathcal{A},\mathcal{S}$: to apply Mercer's Theorem.
    \item Continuous and bounded kernels $\Theta^{(i)}$: to apply Mercer's Theorem.
    \item Measurable selection assumption and measurability of $p$ and $r$: ensures the measurability of the variables generated by the MDP, Lebesgue integrability and avoid pathological cases.
    \item For all Borel set $B\subset\mathcal{S}$, the map $(s,a)\mapsto p(s,a,B)$ is continuous: is used in the proof of Lemma \ref{lem:converging_subsequence_of_policies} to guarantee convergence of a subsequence of the state distributions and policies.
    \item Rewards are bounded: ensures that value functions are well defined. It is also used to prove the convergence of the objective and of the parameters to finite values.
    \item For all $s\in\mathcal{S}$ and all $\theta^{(i)}\in\RR^{P_i}$, the map $a\mapsto h_{\theta}^{(i)}(a,s)$ is constant if and only if $||\theta^{(i)}||=0$: this guarantees that $\pi_\theta=\overline{\pi}$ if and only if $||\theta||=0$ and avoid pathological cases, such as divergence of parameters.
\end{itemize}

\section{Numerical experiments}

\label{appendix: numerical experiments}
We apply MPG on a number of numerical experiments as detailed in section \ref{Section: numerical experiments}. The MPG is implemented as in algorithm \ref{alg:mpg}.

\begin{algorithm}
\caption{MPG implementation for $N$ horizon task}\label{alg:mpg}
\begin{algorithmic}[0]
\STATE \textbf{Input:} initial temperature $\tau_0$, initial learning rate $\eta_0$, final temperature $\tau_T$, final learning rate $\eta_T$

\STATE $\tau \leftarrow \tau_0$
\STATE $\eta \leftarrow \eta_0$
\FOR {t = 1, $...$, episodes}

\STATE generate trajectory from policies $\{\pi_t^{n}, \pi_t^{n-1}, ..., \pi_t^{1}  \}$: $\{(s_i,s_{i+1},a_i,r_i)\}_{i=0}^{n-1}$ 

\FOR {i = $1$, $\cdots$ , $n$}
\STATE $C_i = \sum_{\ell = n-i}^{n-1} \left( r_\ell - \tau\log\frac{\pi_t^{(n-\ell)}}{\bar{\pi}} (a_\ell | s_\ell) \right)$ 
\STATE $\theta^{(i)}_{t+1}  = \theta^{(i)}_t +\eta C_i \nabla \log \pi^{(i)}_t (a_{n-i}|s_{n-i})$
\ENDFOR 
    \STATE decay $\tau, \eta$ using $d_\tau = \left( \frac{\tau_T}{\tau_0}\right)^{1/\mbox{episodes}}$ and $d_\eta = \left( \frac{\eta_T}{\eta_0}\right)^{1/\mbox{episodes}}$
\ENDFOR
\end{algorithmic}
\end{algorithm}

\subsection{Analytical task}
\label{appendix:analytical_task}

\paragraph{Set-up:} We consider a state-space consisting of $\mathcal{S}=\{0,1,2,3,4\}$, an action space $\mathcal{A}=\{1,2\}$. At each state $s$, the agent performs action $a$, taking the agent to the next state $(s+a)\mod 5$.

We define an orthonormal basis (in $\ell^2(\mathcal{A}\times\mathcal{S})$) of the space of functions $\{f:\mathcal{A}\times\mathcal{S}\to\RR:f(1,s)+f(2,s)=0,\ \forall s\in\mathcal{S}\}$.
Note that one can always recenter any map $g$ on $\mathcal{A}\times\mathcal{S}$ so that $g(1,s)+g(2,s)=0$, without changing the policy obtained as the softmax of $g$, in particular, any policy can be written as the softmax of such a function. The basis is defined as
\begin{align*}
     e_1 =\sqrt{6}\begin{pmatrix}
    1 & -1 \\
    0 & 0 \\
    1 & -1 \\
    0 & 0 \\
    1 & -1 \\
    \end{pmatrix}, \quad     &e_2 =\sqrt{4}\begin{pmatrix}
    0 & 0 \\
    1 & -1 \\
    0 & 0 \\
    1 & -1 \\
    0 & 0 \\
    \end{pmatrix}, \quad e_3 =\sqrt{4}\begin{pmatrix}
    0 & 0 \\
    1 & -1 \\
    0 & 0 \\
    -1 & 1 \\
    0 & 0 \\
    \end{pmatrix},\\
    e_4 &=\sqrt{8}\begin{pmatrix}
    -2 & 2 \\
    0 & 0 \\
    1 & -1 \\
    0 & 0 \\
    1 & -1 \\
    \end{pmatrix}, \quad e_5 =\sqrt{4}\begin{pmatrix}
    0 & 0 \\
    0 & 0 \\
    1 & -1 \\
    0 & 0 \\
    -1 & 1 \\
    \end{pmatrix}.
\end{align*}

Recall that $Q_*^{(1)}(a,s)=r(a,s)$, which can be represented by
\[Q_*^{(1)}(a,s) = \sum_{j=1}^5 \theta^*_j e_j(a,s), \quad \theta^*_j \in \mathbb{R}.\] 

\paragraph{Experiments:}
\begin{enumerate}
    \item Obtaining the first two step policies with assumption \ref{assumption: optimal policy in RKHS}, namely, that the optimal policy's parameters can be represented by our parametric space, and when the assumption \ref{assumption: optimal policy in RKHS} does not hold.
    
\textbf{Setup:}
\begin{itemize}
    \item preference function $h(a,s)$ is expressed by a linear model
    \item $\theta_0$ randomly initialised with i.i.d. centered Gaussian with standard deviation $1$;
    \item $\theta^* = (0,0.1,-0.15,0.05,-0.1)$;
    \item Initial learning rate $\eta_0 = 0.001$, terminal learning rate $\eta_T = 0.001$ (no decay);
    \item Temperature $\tau = 1.0$ remains fixed during training (no decau);
    \item True gradient update
    \item Number of episodes: $14000$
\end{itemize}

\end{enumerate}

\subsection{Control problems}
\label{appendix:control_problems}

\textbf{Setup:}
\begin{itemize}
    \item Preference function $h(a,s)$ is expressed by a fully connected neural network with $3$ hidden layers, each with width $100$ and ReLU activation function. The output layer has a softmax activation;
    \item Parameters are initialised with He initialisation;
    \item Initial learning rate $\eta_0$ and initial temperature $\tau_0$ are hyper-parameters;
    \item Final learning rate $\eta_T$ and final temperature $\tau_T$ are fixed and problem dependent;
    \item Number of episodes is task dependent;
    \item Both the learning rate and temperature decay during training, using the following decay rates $d_\eta = \left(\frac{\eta_T}{\eta_0}\right)^{1/episodes}$ and $d_\tau = \left(\frac{\tau_T}{\tau_0}\right)^{1/episodes}$, respectively;
    \item Gradient update estimated using one trajectory as in \eqref{eq: update cascade learning}.
\end{itemize}


\paragraph{Frozen lake:} 

Aside from the details specified above, further details of the set-up for Frozen lake are:
\begin{itemize}
    \item Reward: it is well-known that reshaping the reward function can change the performance of the algorithm. The original reward function does not discriminate between falling into a hole, not moving and moving, so we used a reshaped reward function: falling ($-1$), moving against a wall ($-0.1$), moving successfully ($+0.01$) and reaching the treasure ($+10.0$);
    \item Final learning rate $\eta_T = 1\times 10^{-6}$;
    \item Final temperature $\tau_T=0.01$ (when applicable);
    \item Number of episodes: 1000.
\end{itemize}

We train sets of $3$ agents to explore the hyper-parameter space as denoted in \ref{table:frozenlake-hypers}. In table \ref{table:frozen-lake-stage-1}, we show the hyper-parameter exploration for the different considered algorithms.

\begin{table}
\begin{center}
 \caption{Hyper-parameter search using $3$ agents for Frozen lake task. The `**' symbol denotes that no runs were made for that set of hyper-parameters. \label{table:frozen-lake-stage-1}}
\begin{tabular}{| c | c | c | } 
 \hline
$\eta_0$ & softPG & MPG \\ [0.5ex] \hline
\hline \multicolumn{3}{|c|}{$\tau_0 = 0.15 $}\\ \hline
 $0.1$ & $-1.89$ & ** \\
 $0.05$ & $-1.66$ &  ** \\
 $0.01$ & $0.20$ & $2.13$ \\
 $0.005$ & $0.20$ & $2.06$ \\
 $0.001$ & $0.20$ & $3.04$ \\
 $0.0005$ & $0.20$ & $9.36$ \\
 $0.0001$ & $0.20$ & $-1.38$ \\
\hline \multicolumn{3}{|c|}{$\tau_0 = 0.20 $}\\ \hline
 $0.1$ & $-0.53$ & **\\
 $0.05$ & $-0.19$ & ** \\
 $0.01$ & $0.20$ & $6.53$ \\
 $0.005$ & $0.20$ & $6.69$ \\
 $0.001$ & $0.20$ & $10.05$ \\
 $0.0005$ & $0.20$ & $10.05$ \\
 $0.0001$ & $0.20$ & $1.66$ \\
\hline \multicolumn{3}{|c|}{$\tau_0 = 0.25 $}\\ \hline
 $0.1$ & $-0.93$ & ** \\
 $0.05$ & $-1.27$ & ** \\
 $0.01$ & $0.20$ & $10.05$ \\
 $0.005$ & $0.20$ & $10.05$ \\
 $0.001$ & $0.20$ & $10.05$ \\
 $0.0005$ & $0.20$ & $9.99$ \\
 $0.0001$ & $0.20$ & $1.39$ \\
\hline \multicolumn{3}{|c|}{$\tau_0 = 0.30 $}\\ \hline
 $0.1$ & $-1.89$ & ** \\
 $0.05$ & $-1.89$ & ** \\
 $0.01$ & $0.18$ & $6.33$ \\
 $0.005$ & $0.20$ & $10.05$ \\
 $0.001$ & $0.20$ & $10.05$ \\
 $0.0005$ & $0.20$ & $10.05$ \\
 $0.0001$ & $0.19$ & $-1.13$ \\
\hline \multicolumn{3}{|c|}{$\tau_0 = 0.35 $}\\ \hline
 $0.1$ & $-1.27$ & ** \\
 $0.05$ & $-1.05$ & ** \\
 $0.01$ & $0.20$ & $10.05$ \\
 $0.005$ & $0.20$ & $10.05$ \\
 $0.001$ & $0.20$ & $10.05$ \\
 $0.0005$ & $0.20$ & $6.04$ \\
 $0.0001$ & $0.20$ & $-1.65$ \\[1ex] \hline
\end{tabular}~
\begin{tabular}{| c | c | c | } 
 \hline
$\eta_0$ & softPG & MPG \\ [0.5ex] \hline \hline
\multicolumn{3}{|c|}{$\tau_0 = 0.40 $}\\ \hline
 $0.1$ & $-1.78$ & ** \\
 $0.05$ & $-1.66$ & ** \\
 $0.01$ & $0.19$ & $10.05$ \\
 $0.005$ & $0.19$ & $10.05$ \\
 $0.001$ & $0.20$ & $10.05$ \\
 $0.0005$ & $0.20$ & $6.72$ \\
 $0.0001$ & $0.20$ & $0.30$ \\
\hline \multicolumn{3}{|c|}{$\tau_0 = 0.45 $}\\ \hline
 $0.1$ & $-1.55$ & ** \\
 $0.05$ & $-1.26$ & ** \\
 $0.01$ & $-0.42$ & $10.05$ \\
 $0.005$ & $0.20$ & $6.40$ \\
 $0.001$ & $0.20$ & $4.09$ \\
 $0.0005$ & $0.20$ & $2.95$ \\
 $0.0001$ & $0.20$ & $0.25$ \\
\hline \multicolumn{3}{|c|}{$\tau_0 = 0.50 $}\\ \hline
 $0.1$ & $-2.00$ & ** \\
 $0.05$ & $0.20$ & ** \\
 $0.01$ & $-0.15$ & $10.05$ \\
 $0.005$ & $0.20$ & $6.61$ \\
 $0.001$ & $0.20$ & $9.75$ \\
 $0.0005$ & $0.20$ & $2.49$ \\
 $0.0001$ & $0.20$ & $-1.22$ \\
\hline \multicolumn{3}{|c|}{$\tau_0 = 0.55 $}\\ \hline
 $0.1$ & $-1.78$ & ** \\
 $0.05$ & $-1.27$ & ** \\
 $0.01$ & $3.48$ & $10.05$ \\
 $0.005$ & $0.19$ & $9.82$ \\
 $0.001$ & $0.20$ & $7.02$ \\
 $0.0005$ & $0.20$ & $2.59$ \\
 $0.0001$ & $0.18$ & $-1.29$ \\
\hline \multicolumn{3}{|c|}{$\tau_0 = 0.60 $}\\ \hline
 $0.1$ & $-1.55$ & ** \\
 $0.05$ & $-1.27$ & ** \\
 $0.01$ & $3.48$ & $10.05$ \\
 $0.005$ & $0.20$ & $10.05$ \\
 $0.001$ & $0.20$ & $6.71$ \\
 $0.0005$ & $0.20$ & $-0.92$ \\
 $0.0001$ & $0.20$ & $-1.09$ \\ \hline
\end{tabular}~
\begin{tabular}{| c | c | c | } 
 \hline
$\eta_0$ & softPG & MPG \\ [0.5ex] \hline
\hline \multicolumn{3}{|c|}{$\tau_0 = 0.65 $}\\ \hline
 $0.1$ & $-2.00$ & ** \\
 $0.05$ & $-0.53$ & ** \\
 $0.01$ & $3.14$ & ** \\
 $0.005$ & $0.20$ & ** \\
 $0.001$ & $0.20$ & ** \\
 $0.0005$ & $-0.53$ & ** \\
 $0.0001$ & $-0.44$ & ** \\
\hline \multicolumn{3}{|c|}{$\tau_0 = 0.70 $}\\ \hline
 $0.1$ & $-1.89$ & ** \\
 $0.05$ & $-1.27$ & ** \\
 $0.01$ & $-0.55$ & ** \\
 $0.005$ & $-0.54$ & ** \\
 $0.001$ & $0.20$ & ** \\
 $0.0005$ & $-0.34$ & ** \\
 $0.0001$ & $-0.56$ & ** \\ \hline  \multicolumn{3}{c}{~} \\[1cm]

\hline 
$\eta_0$ & PG & nsPG \\ [0.5ex] \hline\hline
 $0.5$ & ** & $-1.89$ \\
 $0.1$ & ** & $-1.76$ \\
 $0.05$ & $2.24$ & $6.03$ \\
 $0.01$ & $10.05$ & $2.91$ \\
 $0.005$ & $7.69$ & $2.88$ \\
 $0.001$ & $-0.83$ & $-1.00$ \\
 $0.0005$ & $-0.88$ & $-1.06$ \\
 $0.0001$ & $-1.12$ & $-1.15$ \\ [1ex] \hline
\end{tabular}
\end{center}
\end{table}

\subsection{Cart Pole}
Aside from the details specified above, further details of the set-up for Frozen lake are:
\begin{itemize}
    \item Reward: The original reward function gives a $+1$ reward for each time that the pole stays upright, and the task finishes if the cart leaves the domain or if the pole is far enough from being upright. We reshape the reward function to yield a penalty $-10\exp(-0.05(i-1))$ when the task concludes, where $i$ is the length of time the pole stayed upright. Namely, if the pole falls early in the task, the penalisation is larger.
    \item Final learning rate $\eta_T = 1\times 10^{-8}$;
    \item Final temperature $\tau_T=0.01$ (when applicable);
    \item Number of episodes: 1000.
\end{itemize}

We train sets of $3$ agents to explore the hyper-parameter space as denoted in \ref{table:cartpole-hypers}. In table \ref{table:cartpole-stage-1}, we show the hyper-parameter exploration for the different considered algorithms.

\begin{table}
\begin{center}
 \caption{Hyper-parameter search using $3$ agents for balancing the cart pole task. The `**' symbol denotes that no runs were made for that set of hyper-parameters.  \label{table:cartpole-stage-1}}
\begin{tabular}{| c | c | c | } 
 \hline
$\eta_0$ & softPG & MPG \\ [0.5ex] \hline
\hline \multicolumn{3}{|c|}{$\tau_0 = 0.05 $}\\ \hline
 $0.001$ & ** & $1.87$ \\
 $0.0005$ & ** & $1.80$ \\
 $0.0001$ & ** & $1.70$ \\
 $5\times 10^{-5}$ & ** & $25.21$ \\
 $1\times 10^{-5}$ & ** & $73.22$ \\
 $1\times 10^{-6}$ & ** & $41.37$ \\
 $5\times 10^{-6}$ & ** & $58.85$ \\
\hline \multicolumn{3}{|c|}{$\tau_0 = 0.10 $}\\ \hline
 $0.001$ & ** & $1.82$ \\
 $0.0005$ & $1.70$ & $7.50$ \\
 $0.0001$ & $50.95$ & $48.21$ \\
 $5\times 10^{-5}$ & $51.72$ & $41.21$ \\
 $1\times 10^{-5}$ & $81.99$ & $95.32$ \\
 $1\times 10^{-6}$ & $55.40$ & $33.45$ \\
 $5\times 10^{-6}$ & $85.41$ & $70.08$ \\
\hline \multicolumn{3}{|c|}{$\tau_0 = 0.15 $}\\ \hline
 $0.001$ & ** & $1.69$ \\
 $0.0005$ & $14.86$ & $1.83$ \\
 $0.0001$ & $59.52$ & $68.65$ \\
 $5\times 10^{-5}$ & $79.77$ & $86.28$ \\
 $1\times 10^{-5}$ & $75.65$ & $88.94$ \\
 $1\times 10^{-6}$ & $64.15$ & $36.35$ \\
 $5\times 10^{-6}$ & $83.88$ & $81.07$ \\
\hline \multicolumn{3}{|c|}{$\tau_0 = 0.20 $}\\ \hline
 $0.001$ & ** & $29.14$ \\
 $0.0005$ & $1.84$ & $1.79$ \\
 $0.0001$ & $51.66$ & $77.32$ \\
 $5\times 10^{-5}$ & $79.13$ & $88.69$ \\
 $1\times 10^{-5}$ & $79.07$ & $77.21$ \\
 $1\times 10^{-6}$ & $37.01$ & $19.39$ \\
 $5\times 10^{-6}$ & $66.27$ & $74.18$ \\
 \hline
 \multicolumn{3}{|c|}{$\tau_0 = 0.25 $}\\ \hline
 $0.001$ & ** & $6.01$ \\
 $0.0005$ & $66.09$ & $20.18$ \\
 $0.0001$ & $62.15$ & $71.22$ \\
 $5\times 10^{-5}$ & $91.59$ & $91.37$ \\
 $1\times 10^{-5}$ & $64.18$ & $77.80$ \\
 $1\times 10^{-6}$ & $26.29$ & $18.45$ \\
 $5\times 10^{-6}$ & $57.06$ & $73.08$ \\
 \hline
\end{tabular}~ \begin{tabular}{| c | c | c | } 
 \hline
$\eta_0$ & softPG & MPG \\ [0.5ex] \hline
\hline \multicolumn{3}{|c|}{$\tau_0 = 0.30 $}\\ \hline
 $0.001$ & ** & $13.95$ \\
 $0.0005$ & $61.24$ & $32.84$ \\
 $0.0001$ & $67.95$ & $97.10$ \\
 $5\times 10^{-5}$ & $93.45$ & $99.37$ \\
 $1\times 10^{-5}$ & $67.59$ & $74.31$ \\
 $1\times 10^{-6}$ & $61.09$ & $28.72$ \\
 $5\times 10^{-6}$ & $65.20$ & $80.48$ \\
\hline \multicolumn{3}{|c|}{$\tau_0 = 0.35 $}\\ \hline
 $0.001$ & ** & $37.08$ \\
 $0.0005$ & ** & $54.98$ \\
 $0.0001$ & $84.55$ & $75.19$ \\
 $5\times 10^{-5}$ & $72.55$ & $90.18$ \\
 $1\times 10^{-5}$ & $86.74$ & $76.82$ \\
 $1\times 10^{-6}$ & $33.44$ & $37.75$ \\
 $5\times 10^{-6}$ & $59.14$ & $65.92$ \\
\hline \multicolumn{3}{|c|}{$\tau_0 = 0.40 $}\\ \hline
 $0.001$ & ** & $17.66$ \\
 $0.0005$ & ** & $46.27$ \\
 $0.0001$ & $75.47$ & $98.04$ \\
 $5\times 10^{-5}$ & $65.54$ & $95.11$ \\
 $1\times 10^{-5}$ & $64.47$ & $66.66$ \\
 $1\times 10^{-6}$ & $27.15$ & $20.33$ \\
 $5\times 10^{-6}$ & $75.12$ & $63.17$ \\
\hline \multicolumn{3}{|c|}{$\tau_0 = 0.45 $}\\ \hline
 $0.001$ & ** & ** \\
 $0.0005$ & ** & ** \\
 $0.0001$ & $88.47$ & ** \\
 $5\times 10^{-5}$ & $76.74$ & ** \\
 $1\times 10^{-5}$ & $64.13$ & ** \\
 $1\times 10^{-6}$ & $3.08$ & ** \\
 $5\times 10^{-6}$ & $52.08$ & ** \\ \hline
 \end{tabular}~ \begin{tabular}{| c | c | c | }
\hline 
$\eta_0$ & PG & nsPG \\ [0.5ex] \hline\hline
$0.005$ & $2.18$ & $38.68$ \\
 $0.001$ & $80.75$ & $98.94$ \\
 $0.0005$ & $31.99$ & $34.39$ \\
 $0.0001$ & $18.62$ & $19.18$ \\
 $5\times 10^{-5}$ & $17.81$ & $18.83$ \\
 $1\times 10^{-5}$ & $16.66$ & $16.96$ \\
 $5\times 10^{-6}$ & $16.60$ & $18.25$ \\
 $1\times 10^{-6}$ & $17.62$ & $16.28$ \\\hline
\end{tabular}
\end{center}
\end{table}

\clearpage

\vskip 0.2in
\bibliographystyle{abbrvnat}
\bibliography{biblio}

\end{document}